\documentclass{article}

\PassOptionsToPackage{table}{xcolor}
\usepackage[table]{xcolor}
\usepackage[preprint]{corl_2026} 

\title{
\textsc{StressDream}: Steering Video World Models for Robust Policy Evaluation and Improvement
}

\author{
Junwon Seo$^{1}$, 
Sushant Veer$^{2}$, 
Ran Tian$^{2}$, 
Wenhao Ding$^{2}$, 
Apoorva Sharma$^{2}$,\\[0.2em]
\textbf{Karen Leung$^{2,3}$,
Edward Schmerling$^{2}$,
Marco Pavone$^{2,4}$,
Andrea Bajcsy$^{1}$}\\[0.7em]
$^{1}$Carnegie Mellon University \qquad
$^{2}$NVIDIA Research\\[0.2em]
$^{3}$University of Washington \qquad
$^{4}$Stanford University \\
  }

\usepackage{moreverb,url}
\usepackage{xspace}
\usepackage{siunitx}
\usepackage{wrapfig}

\usepackage{amssymb}
\usepackage{amsmath}
\usepackage{amsthm}
\usepackage{amsbsy}
\usepackage{bbm}
\usepackage{dsfont}

\usepackage{amsthm}

\usepackage{graphicx}
\usepackage{wrapfig}
\usepackage{booktabs} 

\usepackage{subfiles}
\usepackage{csquotes}
\usepackage{bm}

\usepackage{tocloft}
\usepackage{array}

\usepackage[ruled,vlined,linesnumbered]{algorithm2e}

\usepackage{graphicx}
\usepackage{subfig} 
\usepackage{multirow}

\newcommand\BibTeX{{\rmfamily B\kern-.05em \textsc{i\kern-.025em b}\kern-.08em
T\kern-.1667em\lower.7ex\hbox{E}\kern-.125emX}}

\definecolor{wine}{RGB}{204, 0, 102}
\definecolor{magenta_wine}{RGB}{158, 44, 143}
\definecolor{dusty_wine}{RGB}{143, 59, 101}
\definecolor{ocean}{RGB}{13, 121, 202}
\definecolor{light_ocean}{RGB}{18, 178, 235}
\definecolor{dark_ocean}{RGB}{10, 89, 148}
\definecolor{grey}{RGB}{170, 170, 170}
\definecolor{light-grey}{RGB}{220, 220, 220}
\definecolor{dark-gray}{rgb}{0.2, 0.2, 0.2} 
\definecolor{med-grey}{rgb}{0.3, 0.3, 0.3} 
\definecolor{grape}{RGB}{112,48,160}
\definecolor{aqua}{RGB}{52,172,139}
\definecolor{dark_aqua}{RGB}{35,115,93}
\definecolor{dark_orange}{RGB}{216,92,0}
\definecolor{vibrant_orange}{RGB}{250, 160, 26}
\definecolor{vibrant_blue}{RGB}{14, 120, 255}
\definecolor{vibrant_pink}{RGB}{255, 0, 104}
\definecolor{dark_red}{RGB}{122, 0, 0}
\definecolor{dark_green}{RGB}{0, 92, 34}
\definecolor{dusty_blue}{RGB}{77, 91, 128}
\definecolor{dark_brown}{RGB}{125, 54, 36}
\definecolor{violet}{RGB}{116, 12, 173}

\definecolor{green-reg}{RGB}{13, 148, 143}

\hypersetup{
    colorlinks=true,
    linkcolor=vibrant_orange, 
    citecolor=vibrant_orange, 
    urlcolor=vibrant_orange   
}

\newtheorem{lemma}{Lemma}
\newtheorem{corollary}{Corollary}

\newcommand{\para}[1]{\smallskip\noindent\textbf{#1. }} 
 
\newcounter{qnum}
\usepackage{pifont}
 
\DeclareMathOperator*{\argmax}{argmax}
\DeclareMathOperator*{\argmin}{argmin}

\newcommand{\state}{s}
\newcommand{\ours}{\textsc{StressDream}\xspace}
\newcommand{\nominal}{\textit{Nominal}\xspace}
\newcommand{\bestofN}{\textit{Best-of-N}\xspace}
\newcommand{\classGuidance}{\textit{CG}\xspace}
\newcommand{\vista}{\textit{Vista}\xspace}
\newcommand{\ctrlworld}{\textit{Ctrl-World}\xspace}

\newcommand{\secref}{Sec.~\ref}

\usepackage{annotate-equations}

\begin{document}

\let\oldaddcontentsline\addcontentsline
\renewcommand{\addcontentsline}[3]{}

\makeatletter
\let\@oldmaketitle\@maketitle
\renewcommand{\@maketitle}{%
\@oldmaketitle
\setcounter{figure}{0}
\centering
\phantomsection
\vspace{-0.25in}
\includegraphics[width=\textwidth]{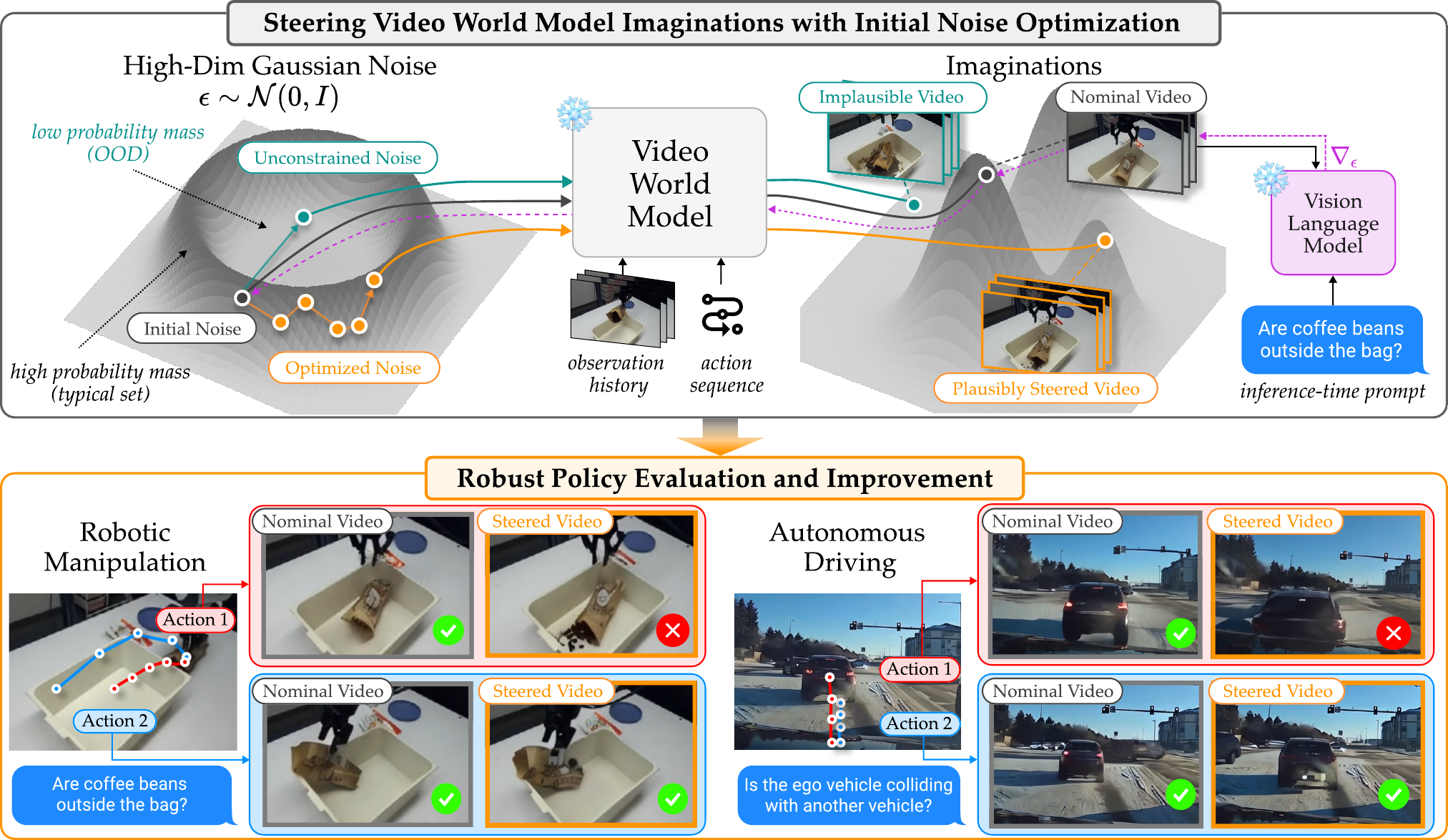}\vspace{-0.05in}
{\captionsetup{hypcap=false}%
\captionof{figure}{
\small{
   \textbf{Overview of \ours.} (\textit{Top}): The initial noise of diffusion-based WMs is optimized to steer imaginations toward a target event specified by an inference-time prompt. While nominal imaginations may miss high-impact outcomes of robot actions, such as spilling, \ours uses VLM guidance to steer imaginations toward such outcomes while preventing the high-dimensional noise from drifting into OOD, where generations become implausible. (\textit{Bottom}): \ours imagines high-impact but plausible outcomes of robot actions, such as task failures or collisions, enabling robust policy evaluation and improvement. Video results are available at the project website: \href{https://junwon.me/StressDream/}{\textcolor{vibrant_orange}{\texttt{https://junwon.me/StressDream/}}}.
}
}
\label{fig:front-fig}}%
\vspace{-0.2in}
\bigskip}
\makeatother
\maketitle



\begin{abstract}
Video world models~(WMs) have shown promise for policy evaluation and improvement by imagining realistic future observations conditioned on ego-robot actions.
While WMs can model distributions over futures, policy evaluation and improvement typically rely on nominal imaginations, which can miss high-impact outcomes of robot actions unless prohibitively many samples are drawn.
To enable robust policy evaluation and improvement over WM imaginations, we propose \ours, which steers imaginations toward high-impact yet plausible outcomes specified at inference time by optimizing the initial noise of diffusion-based WMs.
However, optimizing high-dimensional noise is challenging: the optimization must reason about nuanced, scene-dependent target events in generated videos while avoiding out-of-distribution (OOD) noise that yields implausible imaginations.
We address this with two complementary objectives: a semantic objective with a Vision-Language Model that provides informative gradients by reasoning about the generated video, and a plausibility objective that prevents the optimized noise from drifting OOD.
With state-of-the-art video world models for autonomous driving and robotic manipulation, we show that \ours effectively steers imaginations toward high-impact yet plausible outcomes specified by text at inference time, such as task failures, enabling robust policy evaluation and improvement by identifying actions whose plausible futures include undesirable outcomes.

\end{abstract}

\keywords{\small{World Models, Inference-time Steering, Policy Learning and Evaluation}} \vspace{-0.05in}

\vspace{-0.05in}
\section{Introduction}

Video world models~(WMs), or learned simulators of physical environments, have attracted significant interest in robotics, promising policy evaluation and improvement with substantially less costly real-world interaction~\cite{agarwal2025cosmos, wan2025wan, mei2026video}. At their core, video WMs are generative models, such as diffusion and flow matching~\cite{ho2020denoising, karras2022elucidating, lipman2023flow}, learning \textit{distributions} over future observations conditioned on ego robot actions, and thereby capturing the inherent uncertainty of physical interactions or behaviors of other agents~\cite{alonso2024diffusion, chen2024diffusion, gao2025adaworld, kerssies2026frame} (\textit{top}, Fig.~\ref{fig:front-fig}). For example, in robotic manipulation, dropping an open bag from high above the table may or may not cause its contents to spill, leading to either task success or failure; in autonomous driving, the same ego-vehicle decision can elicit qualitatively different outcomes depending on the responses of surrounding agents, such as braking or maintaining speed.

Despite learning rich distributions over future outcomes, policy evaluation and improvement with WMs typically rely on \textit{nominal} imaginations~\cite{chen2024diffusion, gao2024vista, guo2025ctrl, guo2026vlaw}, which can under-explore the diverse outcomes they are trained to model.
Naively sampling imaginations from WMs can easily miss critical outcomes without drawing prohibitively many samples, making it ill-suited to the role of WMs: exploring \textit{plausible}, \textit{high-impact} outcomes of robot actions for policy evaluation and improvement~\cite{zhang2025epona,quevedo2025worldgym, team2025evaluating, sharma2026world}, where plausibility refers to outcomes supported by the learned WM distribution.
For example, when a manipulator drops an open bag from high above the table, the WM's distribution may include outcomes both with and without spilling; in contrast, when the bag is placed close to the table, spilling may be unlikely or absent from the outcome distribution (\textit{bottom}, Fig.~\ref{fig:front-fig}). The ability to imagine plausible, high-impact outcomes can therefore allow the latter action to be preferred during policy evaluation, while the former is downweighted or suppressed during policy improvement. This motivates the central question of our work:\vspace{-0.05in}\begin{quote}
    \centering
    \textit{How can we efficiently steer video world models at inference time to imagine plausible, high-impact outcomes for policy evaluation and improvement?}
\end{quote}\vspace{-0.05in}

Our key idea is to optimize the \textit{initial noise} of a diffusion-based WM, steering generation toward high-impact events specified at inference time (e.g., spills or collisions), within the distribution of plausible outcomes.
The challenge, however, is to guide this optimization in \textit{extremely high-dimensional} noise spaces (e.g., \(1\)M dims~\cite{gao2024vista}): the optimization must reason about nuanced, scene-dependent target events from generated videos, while naive optimization can easily push the initial noise into out-of-distribution (OOD) regions~\cite{samuel2023norm,tang2024inference}, producing implausible videos. 
To address this, we propose \ours, which combines a \textit{semantic} objective, using a Vision-Language Model (VLM) that reasons about \textit{what happens in the generated video} to guide generation toward target events, with a \textit{plausibility} objective that keeps the optimized noise within the in-distribution region.
Together, \ours provides dense gradient signals for searching the high-dimensional noise space, while preserving plausibility of generations, enabling video WMs to explore plausible, high-impact outcome distributions for \textit{robust} policy evaluation and improvement.

\para{Statement of Contributions}
We propose \ours, a novel inference-time steering method for diffusion-based video world models. Our algorithm optimizes the high-dimensional Gaussian initial noise to steer generations toward high-impact outcomes while preserving plausibility. Through experiments, we show: (\romannumeral 1) in controlled settings with access to the true system dynamics, \ours steers WM imaginations toward target task-failure events only when such outcomes are plausible;
(\romannumeral 2) on state-of-the-art video world models for autonomous driving~\cite{gao2024vista} and robotic manipulation~\cite{guo2025ctrl}, \ours enables robust policy evaluation by detecting high-impact outcomes, such as task failures, with substantially higher recall (\(54\%\rightarrow94\%\));
and (\romannumeral 3) this robust evaluation improves policy by promoting robust actions that avoid potential failures, increasing the success rate of a Vision-Language-Action (VLA) policy~\cite{intelligence2025pi} (\(39\%\rightarrow71\%\)).

\newcommand{\wm}{f}
\newcommand{\wmParam}{\theta}
\newcommand{\denoise}{g}
\newcommand{\encoder}{\mathcal{E}}
\newcommand{\obs}{o}
\newcommand{\image}{I}
\newcommand{\joint}{q}
\newcommand{\obsSpace}{\mathcal{O}}
\newcommand{\latent}{z}
\newcommand{\latentSpace}{\mathcal{Z}}
\newcommand{\latentSpaceDimension}{C}
\newcommand{\dimension}{D}
\newcommand{\actionSequence}{\mathbf{a}}
\newcommand{\action}{a}
\newcommand{\horizon}{H}
\newcommand{\history}{h}
\newcommand{\historyLatent}{\textbf{\latent}_h}
\newcommand{\historyObs}{\mathbf{\obs}^\text{hist}}
\newcommand{\dynamics}{f}

\newcommand{\timeIdx}{t}
\newcommand{\diffusionTimeMax}{T}
\newcommand{\diffusionTime}{\tau}
\newcommand{\diffusionCoeff}{\alpha}
\newcommand{\noiseCoeff}{\sigma}
\newcommand{\noise}{\epsilon}
\newcommand{\data}{x}
\newcommand{\dataDistribution}{p^{\text{data}}}

\newcommand{\reward}{\mathcal{C}}
\newcommand{\rewardParam}{\psi}
\newcommand{\criterion}{\reward^{\text{test}}}
\newcommand{\denoising}{d}
\newcommand{\regularizer}{\reward^{\text{pla}}}

\newcommand{\prompt}{\bm{l}}

\addvalue{inner}{dark-gray}
\addvalue{outer}{vibrant_blue}

\section{Background: Action-conditioned Video World Models}\vspace{-0.05in}

\para{Video World Models}
An action-conditioned video world model \(\wm_\wmParam\), parameterized by \(\wmParam\), models distributions of future observations \(\mathbf{\obs} := \obs_{\timeIdx:\timeIdx+\horizon}\), where \(\timeIdx\) denotes the time and \(\horizon\) is the prediction horizon~\cite{guo2025ctrl, quevedo2025worldgym, team2025evaluating, li2025worldeval, kim2026cosmos, yin2026playworld}. Each observation \(\obs = [\image^1,\cdots,\image^n,\joint] \in \obsSpace\) consists of \(n\) camera images \(\image^i\) and proprioceptive state \(\joint\), such as the robot joint configuration. The prediction is conditioned on an observation history \(\historyObs := \obs_{\timeIdx-\history:\timeIdx}\) and a future ego-action sequence \(\actionSequence := \action_{\timeIdx:\timeIdx+\horizon}\):
\vspace{-0.1in}\begin{equation}
    \mathbf{\obs} \sim \wm_\wmParam(\cdot \mid \historyObs, \actionSequence),
    \qquad \text{where} \quad \mathbf{\obs}:= \obs_{\timeIdx:\timeIdx+\horizon}, \quad
    \historyObs := \obs_{\timeIdx-\history:\timeIdx}, \quad
    \actionSequence := \action_{\timeIdx:\timeIdx+\horizon}.
\end{equation}

\vspace{-0.1in}
\para{Diffusion Models as Video World Models} Recent video world models~\cite{gao2024vista, guo2025ctrl,yin2026playworld} often instantiate $\wm_\wmParam$ as a diffusion model~\cite{ho2020denoising, song2020denoising}, learning a transformation between the data distribution, $\mathbf{\data}^0 := \mathbf{\obs} \sim \dataDistribution(\mathbf{\obs})$, and a tractable noise distribution, typically a standard Gaussian $\mathbf{\data}^\diffusionTimeMax := \bm{\noise} \sim \mathcal{N}(\mathbf{0},\mathbf{I}_\dimension)$, where $\dimension$ is a noise dimension, indexed by diffusion time $\diffusionTime \in \{0,1,\dots,\diffusionTimeMax\}$:
\vspace{-0.05in}\begin{equation}
    \mathbf{\data}^\diffusionTime
    =
    \diffusionCoeff^\diffusionTime\,\mathbf{\data}
    +
    \noiseCoeff^\diffusionTime\,\bm{\noise}, \qquad \text{where}
    \quad
    \mathbf{\data}^0 := \mathbf{\obs}, \quad \mathbf{\data}^T := \bm{\noise} \sim \mathcal{N}(\mathbf{0},\mathbf{I}_\dimension),
\end{equation} \vspace{-0.3in}

\noindent where $\diffusionCoeff^\diffusionTime$ is a decreasing function of $\diffusionTime$ and $\noiseCoeff^\diffusionTime$ is an increasing function. Generation proceeds in the reverse direction by iteratively mapping a noisier sample to one at a less noisy diffusion time
:
\vspace{-0.05in}\begin{equation}\label{eq:denoising}
    \denoising^\diffusionTime(\mathbf{\data}^{\diffusionTime}, \historyObs, \actionSequence) := \,\mathbf{\data}^{\diffusionTime - 1}
    =  
    \frac{\diffusionCoeff^{\diffusionTime - 1}}{\diffusionCoeff^{\diffusionTime}} \left(
        \mathbf{\data}^{\diffusionTime}
        -
        \noiseCoeff^{\diffusionTime}\,
        \denoise^{\diffusionTime}_{\wmParam}\!\left(
            \mathbf{\data}^{\diffusionTime}, \historyObs,
            \actionSequence
        \right)
    \right)
    +
    \noiseCoeff^{\diffusionTime - 1}\,
    \denoise^{\diffusionTime}_{\wmParam}\!\left(
        \mathbf{\data}^{\diffusionTime}, \historyObs,
        \actionSequence
    \right),
\end{equation} \vspace{-0.25in}

\noindent where \(\denoise^{\diffusionTime}_{\wmParam}\) is the noise predictor (see Appendix~\ref{app:diffusion-background}). Starting from an initial Gaussian sample $\mathbf{\data}^\diffusionTimeMax=\bm{\noise}$, repeated application of this reverse update produces the predicted future observations $\mathbf{\data}^0 = \mathbf{\obs}$.

\vspace{-0.05in}
\para{Initial Noise as a Control Variable}\label{sec:diffusion-noise} Following \eqref{eq:denoising}, which corresponds to the probability-flow ODE~\cite{song2021scorebased}, for fixed conditioning inputs \((\historyObs,\actionSequence)\), the imaginations are a 
\textit{deterministic} function of the initial noise: \(\mathbf{\obs}=\wm_\wmParam(\bm{\noise}, \historyObs, \actionSequence).\) Thus, all stochasticity in generation is governed by the initial noise, implying that selecting the initial noise controls which video is generated from the WMs.

\vspace{-0.05in}
\section{Setup: Policy Evaluation and Improvement with Video World Models}\vspace{-0.05in}

\para{Policy Evaluation and Improvement with Video World Models} 
Given an action-conditioned video world model \(\wm_\wmParam\), we aim to evaluate candidate action sequences by imagining their future outcomes, which can then be used for policy evaluation~\cite{quevedo2025worldgym, team2025evaluating, li2025worldeval} and improvement~\cite{guo2025ctrl, guo2026vlaw, sharma2026world}.
The predicted futures \(\mathbf{\obs}\) are evaluated by a test-time criterion function \(\criterion(\mathbf{\obs})\in\mathbb{R}\), which measures whether \textit{plausible} outcomes with \textit{high impact} on policy evaluation and improvement occur in the generated videos.
These evaluations can be used to improve policies by selecting actions that avoid such outcomes~\cite{zhou2024dino,hansen2024tdmpc2}, or by training a policy \(\pi(\actionSequence\mid\historyObs)\) using world-model imaginations~\cite{guo2025ctrl,guo2026vlaw}.

However, when multiple futures are possible under the same action sequence, \textit{robust} policy~\cite{moos2022robust,akella2025risk} should account for the distribution of outcomes that are plausible under the ego robot action:
\vspace{0.1in}
\begin{equation}\label{eq:robust_control}
    \eqnhighlight{vibrant_blue}{
    \actionSequence^* =
        \tikzmarknode{outer}{\argmin_{\actionSequence \in \bm{\mathcal{A}}}}\,
        \eqnhighlight{dark-gray}{
        \tikzmarknode{inner}{
            \max_{\textcolor{vibrant_orange}{
                \bm{\noise} \in \mathbb{R}^{\dimension}
            }}
        }\;
            \criterion\!\left(
                \wm_\wmParam(\bm{\noise}, \historyObs, \actionSequence)
            \right)
        }
    } \hspace{0.3in}
\end{equation}
\annotate[yshift=-0.5em]{below}{outer}{\textbf{\textit{Outer opt.}}: action robust to the distribution over plausible futures}
\annotate[yshift=1.2em,xshift=-15.0em]{above}{inner}{\textbf{\textit{Inner opt.}}: \textcolor{vibrant_orange}{high-dim. (e.g., \(\dimension\approx\)1M) Gaussian noise} generating high-impact, yet plausible futures}
\vspace{-0.03in}

\noindent where the inner optimization chooses a Gaussian noise to steer the WM generation toward a worst-case plausible future that maximizes the criterion, while the outer optimization selects a robust action sequence that keeps the criterion low across plausible futures, even under worst-case outcomes. 
Various methods can be used for the outer optimization: \(\bm{\mathcal{A}}\) may be a discrete set in a sampling-based solver~\cite{zhou2024dino}, or a continuous action space over which a policy is optimized~\cite{guo2026vlaw}. However, solving the inner optimization is challenging, as it requires optimizing over a high-dimensional noise space.

\vspace{-0.05in}
\para{Steering Imagination via Inference-time Noise Optimization} Since the initial noise acts as a control variable for the generation of WMs (Sec.~\ref{sec:diffusion-noise}), the goal of the inner optimization corresponds to steering the WM imagination toward a high-impact outcome that remains plausible under the learned model. However, because evaluating each noise requires an expensive denoising process, repeated random sampling is highly inefficient and likely to miss rare but plausible critical outcomes within a limited sampling budget. Instead, we leverage the \textit{gradient} of a differentiable criterion function to steer generation more efficiently, optimizing the initial noise directly via gradient ascent: \begin{equation}\label{eq:gradient-ascent}
    \bm{\noise}_{i+1}
    =
    \bm{\noise}_i
    +
    \eta \,
    \nabla_{\bm{\noise}_i}
    \left[\,
        \criterion\!\left(
            \mathbf{\obs}_i
        \right)
    \right], \quad \text{where} \quad \mathbf{\obs}_i = \wm_{\wmParam}(\bm{\noise}_i, \historyObs, \actionSequence),
\end{equation}where \(\eta\) denotes a step size. Although this first-order gradient ascent requires a small number of iterative updates, each involving forward and backward passes through both the WM and the criterion function, the dense gradient signal makes it substantially more efficient than zeroth-order random sampling for discovering noises that induce plausible, high-impact video generations~\cite{tang2024inference,karunratanakul2024optimizing,eyring2024reno}.

\vspace{-0.05in}
\para{\textit{Challenge}\label{sec:challenges}: High-Dimensional Noise Space}
While gradient-based optimization in \eqref{eq:gradient-ascent} can steer diffusion model generation at inference time~\cite{tang2024inference, eyring2024reno}, applying it to video world models is challenging because their noise spaces are extremely high-dimensional, e.g., \(\approx 1\)M in driving~\cite{gao2024vista} and \(\approx 50\)K in manipulation~\cite{guo2025ctrl}. The optimization must guide generation toward meaningful, scene-dependent target events while preserving plausibility under the learned WM distribution. However, naive optimization can easily push the noise into out-of-distribution (OOD) regions of Gaussian noises, producing implausible videos (Fig.~\ref{fig:front-fig}), and the test-time criterion \(\criterion\) must be differentiable while reasoning about nuanced, task-relevant events that vary across scenes. Moreover, solving \eqref{eq:gradient-ascent} is inefficient because exact gradients require backpropagating through the iterative denoising process.

\vspace{-0.05in}
\section{\ours: Steering High-Dimensional Video World Models}\vspace{-0.05in} 

To address the challenges of steering video world models, we propose \ours, which defines a differentiable optimization criterion by combining a \textit{semantic} objective using Vision-Language Models (VLMs) with a \textit{plausibility} objective that prevents OOD noise: \(\criterion = \reward^{\text{sem}} + \reward^{\text{pla}}\). The semantic objective uses inference-time text specifications of high-impact outcomes to reason about whether the target events occur in the generated video (Sec.~\ref{sec:method-vlm}). The plausibility objective preserves the statistical properties of the standard Gaussian noise prior, thereby maintaining the plausibility of WM generations (Sec.~\ref{sec:high-dim-gaussian}). Finally, we approximate the noise gradient to avoid full backpropagation through the iterative denoising process, enabling efficient optimization (Sec.~\ref{sec:grad-approx}).

\vspace{-0.05in}
\subsection{Semantic Objective: Steering towards Target Events with VLM Gradients}\label{sec:method-vlm}

Since video WMs operate across diverse scenes and tasks, the high-impact target events for steering may change with the policy context, requiring a \textit{semantic} objective that can differentiably score whether scene-dependent target events occur in generated videos. We therefore leverage the general video-understanding capability of a Vision-Language Model (VLM) as the semantic objective. Given an inference-time text prompt \(\prompt\) describing the target high-impact outcome, the VLM measures how well the generated video \(\mathbf{\obs}\) matches the prompt. For example, in robotic manipulation, one may specify ``the coffee beans spill,'' while in driving, one may specify ``a collision occurs.'' 

To obtain a differentiable score, we use Qwen-VL~\cite{Qwen-VL} and prompt it to output a single token, \texttt{yes} or \texttt{no}, indicating whether \(\mathbf{\obs}\) matches \(\prompt\). We then define $\reward^{\text{sem}}$ as the difference in log token probabilities:\vspace{-0.05in}\begin{equation}\label{eq:vlm-token-difference}
\reward^{\text{sem}}(\mathbf{\obs}\,;\,\prompt)
=
\log p^{\text{VLM}}(\texttt{yes}\mid \mathbf{\obs}, \prompt)
-
\log p^{\text{VLM}}(\texttt{no}\mid \mathbf{\obs}, \prompt).
\end{equation} \vspace{-0.25in}

\noindent Using the VLM's single-token probabilities makes the objective differentiable, providing rich gradient signals for optimizing high-dimensional noise that generates an inference-time target event.

\vspace{-0.05in}
\subsection{Plausibility Objective: Keeping High-dimensional Noise in the Typical Set}\label{sec:high-dim-gaussian}\vspace{-0.05in}

Diffusion models are trained to generate samples from noise drawn from a Gaussian prior. When the noise leaves the \textit{typical set}~\cite{cover1999elements, betancourt2017conceptual,nalisnick2019detecting}, where most training-time noise samples lie, the resulting out-of-distribution (OOD) noise can produce implausible videos that are unlikely under the WM distribution or have degraded visual quality. Importantly, in high dimensions, the typical set is not the same as the region of highest probability density: some noise vectors, such as the zero vector, may have high density but are extremely unlikely to be sampled from the Gaussian prior (See Appendix~\ref{app:regularizer} for more details). Since gradient-based noise optimization can perturb the noise away from this typical set~\cite{samuel2023norm,tang2024inference,eyring2024reno,samuel2024generating,harrington2026noisediv}, we employ a \textit{plausibility objective} that encourages the noise to remain within the \textit{typical set} of the Gaussian prior, with a combination of functions of the noise that capture key statistics of high-dimensional Gaussian noise: \(
    \regularizer(\bm{\noise})
    =
    \lambda_1 \reward^{\text{norm}}(\bm{\noise})
    +
    \lambda_2 \reward^{\text{iso}}(\bm{\noise})
    +
    \lambda_3 \reward^{\text{spec}}(\bm{\noise})
\).

\para{Norm Concentration}\label{sec:norm-concentration} First, since the squared norm of Gaussian noise follows a chi-square distribution,
\(\|\bm{\noise}\|_2^2 \sim \chi^2_{\dimension}\), it concentrates sharply around the noise dimension \(\dimension\)~\cite{samuel2023norm}. We therefore regularize deviations of the noise radius from the typical Gaussian shell:
\(
    \reward^{\text{norm}}(\bm{\noise})
    :=
    -\left(
        \|\bm{\noise}\|_2
        -
        \sqrt{\dimension}
    \right)^2.
\)

\vspace{-0.05in}
\para{Isotropy}
While norm concentration enforces typicality at a \textit{global} scale, the optimized noise may still contain \textit{local} correlations or structured coordinate patterns that are unlikely under an i.i.d. Gaussian prior~\cite{tang2024inference}. To encourage coordinate-wise independence, we randomly permute the entries of \(\bm{\noise}\) and partition them into \(m\) subvectors \(\{\bm{\noise}_i\}_{i=1}^m\), where \(\bm{\noise}_i \in \mathbb{R}^k\) and \(\dimension=mk\). Under the Gaussian prior, these subvectors should behave as i.i.d. samples from \(\mathcal{N}(\mathbf{0},\mathbf{I}_k)\), and therefore their empirical second moment \(\widehat{\mathbf{\Sigma}}=\frac{1}{m}\sum_{i=1}^{m}\bm{\noise}_i\bm{\noise}_i^\top\) should be close to \(\mathbf{I}_k\). We therefore penalize deviations from blockwise isotropy as \(\reward^{\mathrm{iso}}(\bm{\noise}) := -\frac{1}{k}\|\widehat{\mathbf{\Sigma}}-\mathbf{I}_k\|_F^2\), averaged over multiple random permutations.

\vspace{-0.05in}
\para{Spectral Whiteness}
Even when the noise remains typical in coordinate space, optimization can introduce \textit{frequency-domain} artifacts~\cite{durall2020watch}. Since Gaussian noise has a flat expected power spectrum~\cite{stoica2005spectral,vaseghi2008advanced}, we encourage the noise to remain spectrally white. Let \(\mathcal{F}\) denote a unitary 2D discrete Fourier transform applied over the spatial dimensions of each noise slice, and compute the power spectrum \(\mathbf{P}=|\mathcal{F}(\bm{\noise})|^2\). We aggregate \(\mathbf{P}\) into \(B\) spatial-frequency bins to obtain bin-averaged powers \(\{\widehat{p}_b\}_{b=1}^{B}\). To preserve spectral flatness, we minimize the variance of these bin-averaged powers, \(\reward^{\mathrm{spec}}(\bm{\noise}) := -\frac{1}{B}\sum_{b=1}^{B}(\widehat{p}_b-\bar{p})^2\), where \(\bar{p}=\frac{1}{B}\sum_{b'=1}^{B}\widehat{p}_{b'}\) is the mean power across bins.

\subsection{Approximating Noise Gradients for Efficient Steering}\label{sec:grad-approx}

Noise optimization in \eqref{eq:gradient-ascent} requires computing the gradient of the criterion with respect to the initial noise,
\(
\nabla_{\bm{\noise}}\,
\criterion\!\left(
    \wm_{\wmParam}(\bm{\noise}, \historyObs, \actionSequence)
\right)
=
\nabla_{\bm{\noise}}\,
\criterion\!\left(
    \denoising^1\!\big(
        \cdots
        \denoising^T(\bm{\noise}, \historyObs, \actionSequence)
    \big)
\right),
\)
where \(\denoising^\diffusionTime(\cdot)\) denotes one denoising step in \eqref{eq:denoising}. Although diffusion models are differentiable in principle, backpropagating through the full iterative denoising process (e.g., \(50\) steps~\cite{blattmann2023stable}) can incur high memory cost and vanishing gradient~\cite{eyring2025noise,ahn2024noise}. To make optimization efficient, we adopt a score-distillation~\cite{ahn2024noise} (Appendix~\ref{app:score-distillation}), which approximates the gradient with respect to the noise using the gradient at the generated sample:
\begin{equation}
     \nabla_{\bm{\noise}}
    \criterion\!\left(
        \mathbf{\obs}
    \right)
    \approx
    \beta\,
    \nabla_{\mathbf{\obs}}
    \criterion(\mathbf{\obs}),   \quad \text{where} \quad \mathbf{\obs}= \wm_{\wmParam}(\bm{\noise}, \historyObs, \actionSequence) = \denoising^1\!\big(
        \cdots
        \denoising^T(\bm{\noise}, \historyObs, \actionSequence)
    \big),
\end{equation}
where \(\beta\) is a scalar coefficient. This approximation only requires backpropagation through the differentiable criterion function, avoiding full backpropagation through the iterative denoising process. 

In sum, \ours optimizes the initial noise as in \eqref{eq:gradient-ascent}, using a gradient that combines the \textcolor{wine}{gradient approximation}, the \textcolor{violet}{semantic objective (VLM)}, and the \textcolor{green-reg}{plausibility objective (typical set)}:
\vspace{-0.05in}\begin{equation}\label{eq:gradient-ascent-final}
    \nabla_{\bm{\noise}}
    \,
        \criterion\!\left(
           \mathbf{\obs}
        \right)
     = \,
    \eqnmark[wine]{approx}{\beta \,\nabla_{\mathbf{\obs}}}
        \, \eqnmarkbox[violet]{vlm}{\reward^{\text{sem}}\left(
            \mathbf{\obs} ; \prompt
        \right)}
        +
        \eqnmarkbox[green-reg]{regularizer}{\, \nabla_{\bm{\noise}}\,\regularizer(\bm{\noise})},
\end{equation}
\vspace{-0.25in} 

\noindent where the coefficients \(\beta,\lambda_1,\lambda_2,\lambda_3\) are tuned depending on the WM, noise dimension, and VLMs.

\section{Case Study: Steering Dubins Car Video World Model}
\vspace{-0.05in}
\label{sec:dubins} 

We first study \ours in controlled settings where the system dynamics and uncertainty are known. Since we know the true dynamics, this setup allows us to test whether WM steering can effectively identify high-impact futures (i.e., failures) only when they are possible under the system.

\para{\textit{Naughty} 3D Dubins Car}
We consider an image-based setup of a discrete-time 3D Dubins car with state \(\state=[p_x,p_y,\theta]\), continuous angular-velocity action \(\action_t\in[-1.25,\,1.25]\,\mathrm{rad/s}\), fixed speed \(v=1\,\mathrm{m/s}\), and timestep \(\Delta t=0.05\,\mathrm{s}\). The \textit{naughty} Dubins car introduces uncertainty by randomly flipping the sign of the control input: \(\delta_t=-1\) with probability \(p=0.2\), and \(\delta_t=1\) otherwise:
\vspace{-0.00in}\begin{equation}\label{eq:naughty-dynamics}
    \state_{t+1}
    =
    \state_t
    +
    \Delta t\,[\, v\cos(\theta_t),\; v\sin(\theta_t),\; \delta_t \action_t \,],
    \qquad
    \delta_t \in \{-1,1\}.
\end{equation} Each state is rendered as a \(3\times128\times128\) RGB image showing the environment and vehicle. In this setup, instead of the VLM-based $\reward^{\text{sem}}$, we use the safety score defined as \(\reward(\state)=p_x^2+p_y^2-0.25^2\), which induces a circular failure set of radius \(0.25\,\mathrm{m}\) centered at the origin: \( \mathcal{F}:=\{\state \mid \reward(\state)\le 0\}. \) 

\vspace{-0.1in}
\para{Video World Model} We train a WM on an offline dataset of \(4{,}000\) random observation--action trajectories. We train a safety-score predictor to regress the ground-truth score from an image, and train a diffusion world model, \(\wm_{\wmParam}(\obs_{t+1}\mid \noise,\obs_{t},\action_{t})\), to generate one-step future observation \((\horizon=1)\) conditioned on the current observation and action. The resulting noise dimension is \(1024\).

 \begin{wrapfigure}{r}{0.5\textwidth}
    \centering
    \vspace{-0.3in}
    \includegraphics[width=\linewidth]{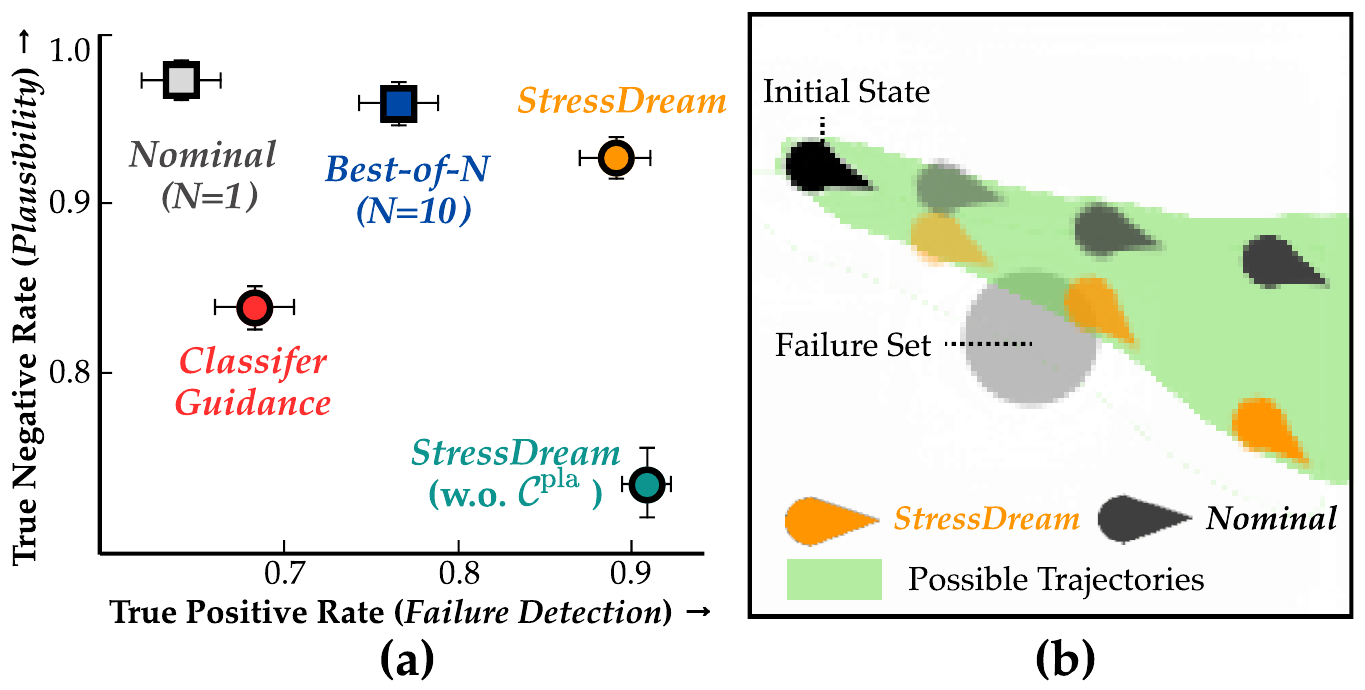} 
    \vspace{-0.25in}
    \caption{\small{\textit{Results: Naughty Dubins Car.}
    (a): TPR and TNR for detecting possible failures in WM imaginations.
    (b): Overlaid nominal and steered imaginations.
    \ours steers WM imaginations, detecting plausible failures that nominal imagination misses. See Appendix~\ref{sec:appendix-dubins} \& \href{https://stressdream.github.io/dubins/index.html}{Website} for more results and details.}}
    \vspace{-0.18in}
    \label{fig:dubins_results}
\end{wrapfigure}
\para{Experiment Setup} We aim to detect action sequences that have a possibility of entering the failure set using WM imaginations. We construct an evaluation dataset of \(5{,}000\) random initial state and action sequences. For each trajectory, we compute the ground-truth minimum safety score achievable under the stochastic dynamics in \eqref{eq:naughty-dynamics}, and label it as \textit{positive} if the g.t. minimum score is below zero, indicating that the trajectory can enter the failure set. We then perform open-loop world-model rollouts using only the initial image \(\obs_0\) and the action sequence, and classify an imagined trajectory as a failure, i.e., positive, if its minimum predicted safety score along the rollout is below zero.

\vspace{-0.03in}
\para{Baselines} \ours steers imagination by minimizing the predicted safety-score for \(10\) optimization steps. We compare against sampling without optimization: \nominal (\(N\)=\(1\)), and \bestofN (\(N\)=\(10\)). We ablate \ours without \(\regularizer\), and compare against classifier guidance~(\classGuidance)~\cite{dhariwal2021diffusion,li2022upainting}, which applies gradients during denoising instead of optimizing the initial noise.

\vspace{-0.03in}
\para{\ours reliably detects failures only when they are plausible} Fig.~\ref{fig:dubins_results} shows that \ours accurately detects potential failures in WM imaginations, achieving a high true positive rate, i.e., correctly identifying trajectories that can fail, while maintaining a high true negative rate, i.e.,  not falsely classifying safe trajectories as failures. In contrast, \ours without the plausibility objective in Sec.~\ref{sec:high-dim-gaussian} yields a low true-negative rate, indicating implausible failures are imagined without the typical set constraint. \classGuidance similarly produces many false positives by directly modifying the denoising trajectory, generating implausible imaginations. Random sampling methods (\nominal, \bestofN) generate plausible imaginations, but are less effective at identifying rare possible failures compared to \ours, often missing them even with multiple samples. 

\vspace{-0.1in}
\section{Experiments: Robust Policy Evaluation and Improvement}
\label{sec:sota}\vspace{-0.05in}

In this section, we validate \ours for robust policy evaluation and improvement using state-of-the-art video world models in both autonomous driving~\cite{gao2024vista} and robotic manipulation~\cite{guo2025ctrl}.

\vspace{-0.05in}
\para{Autonomous Driving} We use \vista~\cite{gao2024vista} that predicts \(25\) future front-view camera frames at \(576 \times 1024\), conditioned on future waypoints as actions, with noise dimension \(\dimension = 921{,}600\). We fine-tune the model on the PhysicalAI-Autonomous-Vehicles (PAI-AV) dataset~\cite{nvidia_physicalai_av} and the Nexar Collision Prediction Dataset~\cite{moura2025nexar}, which include diverse driving scenarios such as collisions. We optimize for $20$ steps with Qwen2.5-VL-7B-Instruct~\cite{bai2025qwen2} fine-tuned with Wolf~\cite{li2025wolf} and X-CLIP~\cite{ma2022x}.

\vspace{-0.05in}
\para{Robotic Manipulation} We use \ctrlworld~\cite{guo2025ctrl} trained in the DROID setup~\cite{khazatsky2024droid}, which predicts \(5\) future frames from \(3\) camera views of resolution \(192 \times 320\), conditioned on joint-position actions, with \(\dimension = 57{,}600\). We consider six contact-rich manipulation tasks (See Fig.~\ref{fig:pi-qualitative}) and collect approximately \(150\) teleoperation trajectories per task containing both successes and failures to fine-tune the model. We optimize noise for $10$ steps using Qwen3-VL-4B-Instruct~\cite{Qwen3-VL}, providing the three-view videos along with a prompt describing the task failure (More details in Appendix~\ref{sec:app-sota-vwm}).

\vspace{-0.05in}
\subsection{Robust Policy Evaluation using State-of-the-Art Video World Models}\label{sec:experiments-inner} \vspace{-0.05in}
We first study whether \ours enables robust policy evaluation by efficiently steering WM imaginations toward high-impact but plausible outcomes specified at inference time. As a baseline, we compare against random generation (\bestofN), which samples multiple imaginations without optimization with the same budget as \ours, and selects the one with the highest score.

\begin{figure}[t]
    \centering
    \includegraphics[width=1.0\linewidth]{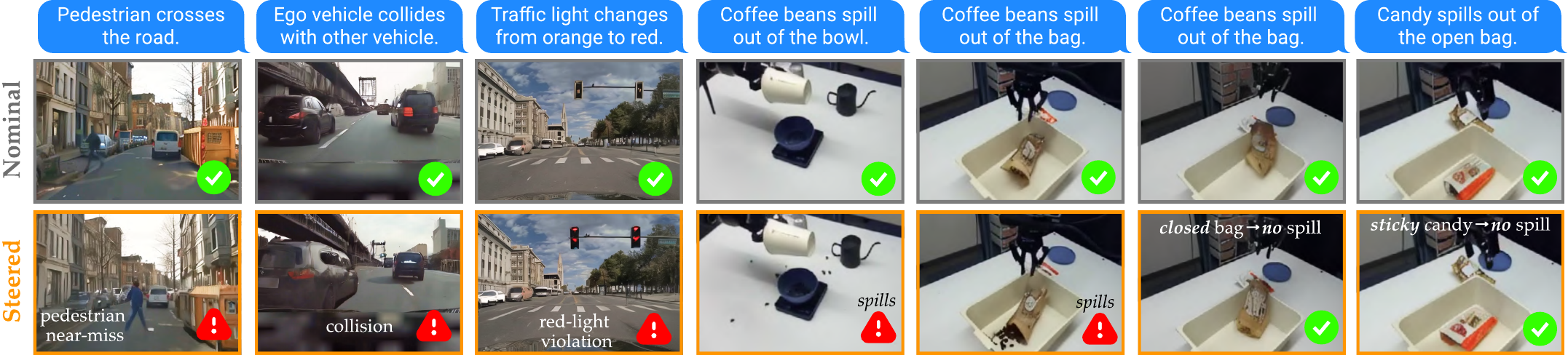}
    \vspace{-0.2in}
    \caption{\small{
        \textit{Nominal vs.\ Steered Imaginations.} Top texts show the inference-time prompts describing target outcomes. \ours steers WM imaginations toward specified high-impact outcomes that nominal generation misses. Imaginations are grounded in plausible outcomes: when target outcomes are not supported by the WM distribution, e.g., spilling sticky candies or from a closed bag, \ours does not imagine them.
        }}
    \label{fig:inner_opt_results}
    \vspace{-0.22in}
\end{figure}

\begin{wrapfigure}{r}{0.48\textwidth}
    \centering
    \vspace{-0.18in} 
    \includegraphics[width=\linewidth]{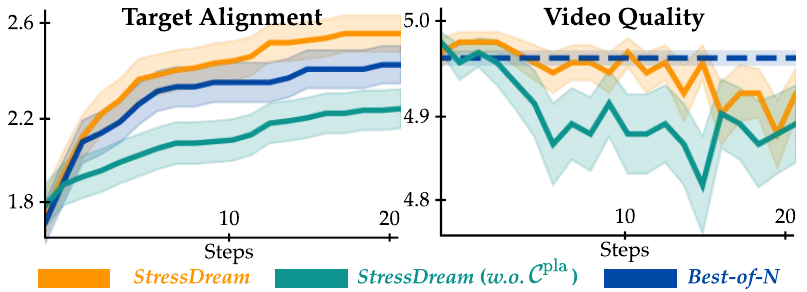} 
    \vspace{-0.2in}
    \caption{\small{\textit{Steering Driving Video World Models.} 
    \ours steers WM imaginations toward the target events more effectively than random sampling (\bestofN), while $\regularizer$ helps preserve video quality.
    }}
    \vspace{-0.1in}\label{fig:vista-results}
\end{wrapfigure}\vspace{-0.03in}\para{Evaluation Setup} We generate imaginations from an initial observation \(\obs_0\) and action sequence \(\actionSequence\), and evaluate whether the generated video matches the inference-time text specification \(\prompt\) of the target event. For driving, we curate \(100\) image--action--text pairs from the PAI-AV dataset~\cite{nvidia_physicalai_av} across \(8\) safety-critical event categories (see Appendix~\ref{sec:app-av-curation}), as well as \(200\) imminent-collision examples~\cite{moura2025nexar}. Each prompt describes a target event that has not yet occurred in the initial observation but occurs within the prediction horizon. For manipulation, we collect \(100\) failure trajectories across tasks as an evaluation dataset, and evaluate whether WM imaginations from the initial images and the action sequences can reliably identify failures (See Appendix~\ref{sec:appendix-manipulation-details}).

\vspace{-0.03in}
\para{Evaluation Metrics}
For driving, we report \textit{target alignment} and \textit{video quality} using a held-out evaluator, WorldModelBench~\cite{Li2025WorldModelBench}. Target alignment measures whether the video matches the text description of the target event, while video quality serves as a proxy for plausibility. For manipulation, we label each generation as a success or failure based on human judgment. (See Appendix~\ref{sec:app-evaluation-metrics})

\begin{wrapfigure}{r}{0.18\textwidth}
    \centering
    \vspace{-0.18in} 
    \includegraphics[width=\linewidth]{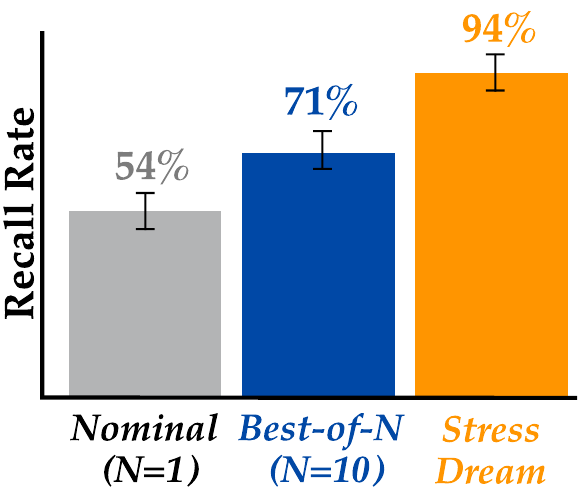} 
    \vspace{-0.2in}
    \caption{\small{\textit{Steering \ctrlworld:} Detecting task failures in WM imaginations.}}
    \vspace{-0.2in}\label{fig:manipulation-planning}
\end{wrapfigure} 
\vspace{-0.03in}\para{\ours robustly detects high-impact outcomes in imaginations} Fig.~\ref{fig:inner_opt_results} compares nominal and steered generations. \ours plausibly generates videos aligned with inference-time specifications of target safety-critical or failure events that nominal imaginations miss, enabling robust evaluation of action sequences through WM imaginations. Fig.~\ref{fig:vista-results} shows that, without the typical-set constraint, both target-alignment and video-quality scores decrease, indicating that the plausibility objective helps preserve plausibility. For manipulation, \ours detects task-failure events in imagination with higher recall, whereas random generations are often overly optimistic and miss possible failures, as shown in Fig.~\ref{fig:manipulation-planning}. More results in Appendix~\ref{sec:appendix-results-policy-eval}~\&~\href{https://stressdream.github.io/#policy-eval}{\textcolor{vibrant_orange}{Website}}.

\begin{wrapfigure}{r}{0.25\textwidth}
    \centering
    \vspace{-0.18in}
    \includegraphics[width=\linewidth]{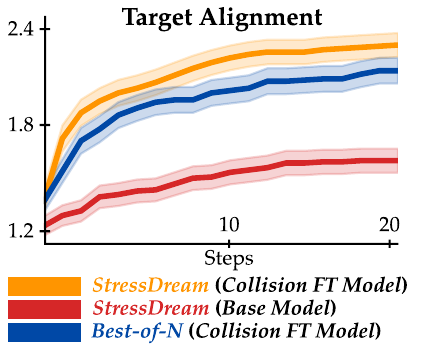} 
    \vspace{-0.18in}
    \caption{\small{\textit{Steering to collision}: \ours can induce target events only when they are plausible.}}
    \vspace{-0.2in}
    \label{fig:vista-ablation}
\end{wrapfigure}\vspace{-0.03in}\para{Steering is grounded in plausible outcomes} We study whether steering remains grounded in plausible outcomes supported by the WM's distribution, rather than hallucinating implausible futures. To test this, we steer generations toward collision events using both the \textit{collision-finetuned} \vista model and the \textit{base} \vista model, which is not trained on collision events. Fig.~\ref{fig:vista-ablation} shows that the base model cannot imagine collision outcomes even with steering, achieving lower target-alignment scores than random sampling from the collision-finetuned model. This indicates that \ours steers only toward events supported by the WM outcome distribution, rather than synthesizing implausible events. Similarly, the right two columns of Fig.~\ref{fig:inner_opt_results} show that target spilling outcomes are induced only when plausible under the WM distribution: \ours does not induce spilling for the open \textit{sticky} candy bag or \textit{closed} coffee bag, but does so for the open coffee bag. Since sticky candies are unlikely to spill and a closed bag does not expose its contents, spilling is not supported by the WM distribution in those scenes. More details and results in Appendix~\ref{sec:appendix-plausibility}.

\vspace{-0.05in}
\subsection{Policy Improvement with Steered Video World Model Imaginations}
\label{sec:exp-pi}
\vspace{-0.05in}

\begin{figure}[t]
    \centering
    \includegraphics[width=1.0\linewidth]{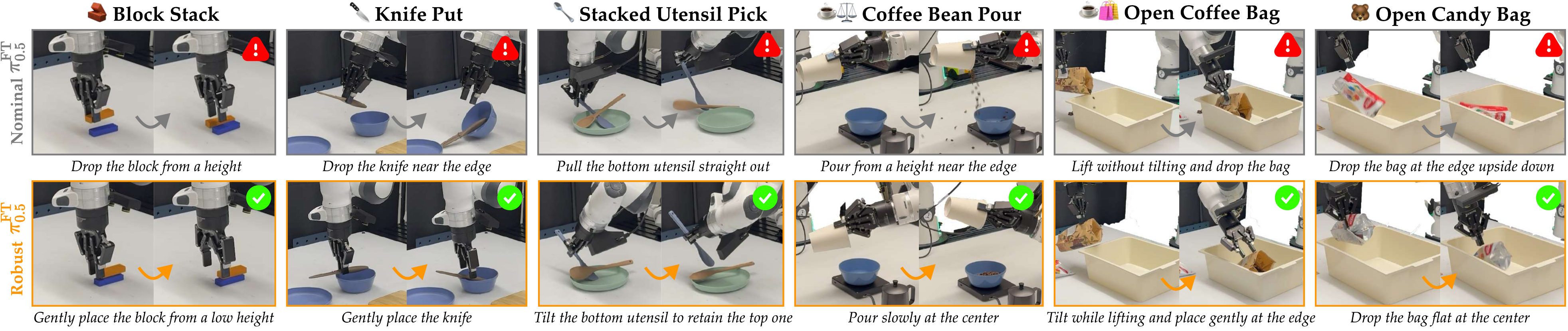}
    \vspace{-0.25in}
    \caption{\small{\textit{Policy Improvement via \ours.} Fine-tuning \(\pi_{0.5}\) with steered WM imaginations favors robust actions that remain successful under worst-case plausible outcomes, whereas nominal fine-tuning can propose actions where failures are plausible under the outcome distribution. Videos available at the \href{https://stressdream.github.io/\#policy-improve}{website}.}}
    \label{fig:pi-qualitative}
    \vspace{-0.25in}
\end{figure}

\para{Setup}
We study whether \ours can improve a robotic policy through steered WM imaginations, promoting robust actions that remain successful over the distribution of outcomes. We consider improving a behavior-cloning policy, the Vision-Language-Action (VLA) model \(\pi_{0.5}\)~\cite{intelligence2025pi}, with action evaluation in WM imaginations. Specifically, we fine-tune \(\pi_{0.5}\)-\textsc{droid} with \(40\) successful demonstrations per task using a weighted regression objective~\cite{guo2026vlaw}, and compare two settings: \textit{Nominal} \(\pi_{0.5}^{\mathrm{FT}}\), which assigns a uniform weight of \(1.0\) to all trajectories, and \textit{Robust} \(\pi_{0.5}^{\mathrm{FT}}\), which assigns weight \(1.0\) to trajectories that remain successful under steering toward task failures and weight \(0.1\) to trajectories that fail in steered imaginations. This encourages \textit{robust} actions whose plausible outcome distribution does not include failures, while downweighting \textit{risky} actions that can fail.

\vspace{-0.05in}
\begin{wrapfigure}{r}{0.15\textwidth}
    \centering
    \vspace{-0.15in} 
    \includegraphics[width=\linewidth]{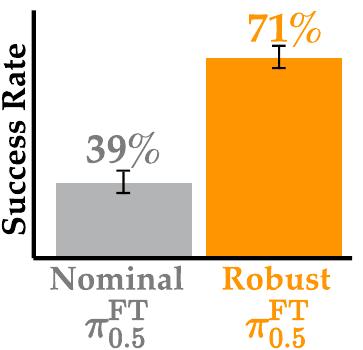} 
    \vspace{-0.22in}
    \caption{\small{\textit{$\pi_{0.5}$ Rollout Results.}}}
    \vspace{-0.12in}\label{fig:manipulation-pi}
\end{wrapfigure}
\para{\ours improves policies by promoting robust actions} Fig.~\ref{fig:pi-qualitative} illustrates rollouts of fine-tuned \(\pi_{0.5}\) policies. \textit{Robust} \(\pi_{0.5}^{\mathrm{FT}}\) favors actions whose plausible outcomes do not include task failures, as identified by steering imaginations. In contrast, \textit{Nominal} \(\pi_{0.5}^{\mathrm{FT}}\) proposes risky actions whose plausible outcome distribution includes failures, even if they happened to succeed during data collection. Fig.~\ref{fig:manipulation-pi} reports success rates over \(20\) rollouts per task, showing that \textit{Robust} \(\pi_{0.5}^{\mathrm{FT}}\) achieves higher success by proposing robust actions that avoid potential failures.



\vspace{-0.1in}
\section{Conclusion}\label{sec:conclusion}\vspace{-0.1in}
In this work, we propose \ours, which steers video world model imaginations toward high-impact outcomes specified at inference-time, while remaining within the distribution of plausible outcomes. \ours optimizes the high-dimensional initial noise of diffusion-based video world models using a combination of semantic and plausibility objectives: VLM gradients steer generation toward high-impact outcomes specified by inference-time prompts, while typical-set constraints keep the high-dimensional noise typical under the Gaussian prior. With WMs for autonomous driving and robotic manipulation, we demonstrate that \ours effectively steers imaginations while preserving plausibility, enabling robust policy evaluation by identifying actions with potential failures and improving policies by favoring robust actions that avoid such failures.

\vspace{-0.05in}
\para{Limitations}
\ours requires specifying high-impact outcomes with text descriptions, so its effectiveness depends on prompt quality and may suffer from reward hacking, where the score increases without meaningful changes in generations~\cite{eyring2024reno,hong2026understanding}, highlighting the need for generalizable and robust reward models for robotics~\cite{tan2025robo, liang2026robometer}. In addition, \ours may fail to preserve plausibility or to discover relevant failures when the base WM is limited: it may steer within flawed videos produced by the WM, or fail to imagine target outcomes absent from the WM's training distribution. Thus, our notion of plausibility is limited to what the WM supports, and does not necessarily imply physical plausibility in the real world, underscoring the need for diverse robot data to train high-fidelity WMs with physically consistent predictions. Runtime also remains expensive, as current WMs take several minutes per imagination~\cite{gao2024vista,guo2025ctrl}, though this could be reduced by efficient models~\cite{song2023consistency,kim2024consistency,esser2024scaling}. We refer readers to Appendix~\ref{sec:appendix-limitations} for a more detailed discussion.

\clearpage
\acknowledgments{JS would like to thank Luca Eyring for insightful discussions on noise optimization, and Lasse Peters for invaluable feedback on the writing. This research was supported in part by funding from the Defense Advanced Research Projects Agency (DARPA) Young Faculty Award (YFA). The views, opinions and/or findings expressed are those of the author and should not be interpreted as representing the official views or policies of DARPA or the U.S. Government. This work used Bridges-2 at Pittsburgh Supercomputing Center through allocation CIS251277 from the Advanced Cyberinfrastructure Coordination Ecosystem: Services \& Support (ACCESS) program, which is supported by National Science Foundation grants \#2138259, \#2138286, \#2138307, \#2137603, and \#2138296.
}


\bibliography{reference}  

\clearpage

{\Large\bfseries Project Website\par}
\vspace{0.5em}

{\large
Video results and interactive demos are available at
\href{https://junwon.me/StressDream/}{\texttt{junwon.me/StressDream/}}. \\[0.5em]
Codes are available at \href{https://github.com/CMU-IntentLab/StressDream}{\texttt{https://github.com/CMU-IntentLab/StressDream}}.
\par}

\appendix

\let\addcontentsline\oldaddcontentsline
\renewcommand{\contentsname}{\Large Appendix}
\setcounter{tocdepth}{4}

\makeatletter

\renewcommand*\l@section[2]{%
  \ifnum \c@tocdepth >\z@
    \addvspace{0.8em}
    \begingroup
      \parindent \z@
      \rightskip \@pnumwidth
      \parfillskip -\@pnumwidth
      \leavevmode
      \large\bfseries
      \setlength\@tempdima{1.8em}%
      \advance\leftskip\@tempdima
      \hskip -\leftskip
      #1\nobreak\hfill\nobreak
      \hb@xt@\@pnumwidth{\hss #2}\par
    \endgroup
  \fi}

\renewcommand*\l@subsection[2]{%
  \addvspace{0.25em}
  \@dottedtocline{2}{1.8em}{2.6em}{#1}{#2}}

\renewcommand*\l@subsubsection[2]{%
  \addvspace{0.15em}%
  \@dottedtocline{3}{4.4em}{3.4em}{#1}{#2}}

\renewcommand*\l@paragraph[2]{%
  \addvspace{0.1em}%
  \@dottedtocline{4}{7.8em}{4.2em}{#1}{#2}}

\makeatother

\begingroup
\large
\tableofcontents

\endgroup

\clearpage

\appendix

\section{Background}\label{app:diffusion-background}

\subsection{Related Works}

\para{World Models for Robotics} World models have emerged as a promising paradigm for robotics, enabling the prediction of future states directly from high-dimensional sensory observations conditioned on actions~\cite{finn2017deep, ebert2018visual, hafner2019learning, wu2023daydreamer, zhou2024dino, micheli2023transformers, bruce2024genie, agarwal2025cosmos, maes2026leworldmodel}. By generating synthetic trajectories, they have proven effective for vision-based policy learning~\cite{Hafner2020Dream, hafner2021mastering,hafner2023dreamerv3,hafner2025training, nakamura2025generalizing, agrawal2026anysafe, sharma2026world, guo2026vlaw}, planning in latent space~\cite{zhou2024dino, psenka2026parallel}, and policy evaluation~\cite{hansen2024tdmpc2, guo2025ctrl, quevedo2025worldgym}. Advances in video generation models~\cite{blattmann2023stable}, which produce realistic and temporally consistent predictions, have further increased the expressiveness of world models in diverse real-world scenarios, especially when fine-tuned with embodiment data for action conditioning~\cite{gao2024vista, hassan2025gem, guo2025ctrl}. World models typically operate under partial observability, which naturally gives rise to uncertainty and multimodality in future predictions~\cite{walker2016uncertain, zhang2023storm, kerssies2026frame}, particularly in settings such as contact-rich manipulation~\cite{guo2025ctrl} or autonomous driving~\cite{gao2024vista, russell2025gaia, bartoccioni2025vavim, hassan2025gem}. In particular, diffusion- and flow-matching-based~\cite{ho2020denoising, karras2022elucidating, lipman2023flow} approaches have shown strong performance in modeling high-dimensional, complex environments due to their ability to capture rich, multimodal distributions over future observations~\cite{alonso2024diffusion, chen2024diffusion, gao2025adaworld, kerssies2026frame}. Despite this, many existing methods using world models either rely on nominal rollouts without reasoning over distribution over plausible futures~\cite{chen2024diffusion, guo2025ctrl, seo2025uncertainty} or adopt deterministic world models that cannot represent such diversity~\cite{zhou2024dino, maes2026leworldmodel}, thereby failing to explicitly account for risks arising from uncertain futures.

\para{Inference-time Steering of Generative Models} Inference-time steering of diffusion~\cite{ho2020denoising, song2021scorebased} or flow-matching~\cite{lipman2023flow, liu2023flow} model aims to guide a pretrained generative model toward samples that maximize an inference-time reward~\cite{tang2024inference, uehara2024fine, uehara2025inference, holderrieth2026diamond}. While directly fine-tuning diffusion models with reward objectives can be effective for reward alignment~\cite{fan2023dpok, wallace2024diffusion, prabhudesai2024video, clark2024directly, uehara2024fine, domingo-enrich2025adjoint}, it is costly, complex, tied to a specific reward~\cite{tang2024inference, holderrieth2026diamond}, and may shift the model away from the original data distribution~\cite{eyring2025noise}. In contrast, inference-time steering keeps the pretrained model fixed and instead modifies the sampling procedure~\cite{chung2023diffusion, he2024manifold, tang2024inference, uehara2025inference, singhal2025a, yuan2026inference}, for example through classifier guidance~\cite{song2021scorebased} or particle-based sampling methods such as Sequential Monte Carlo (SMC)~\cite{wu2023practical, dou2024diffusion, oshima2026inferencetime, holderrieth2026diamond}. One particularly relevant direction is \textit{noise optimization}, which optimizes the noise sampled at the denoising process~\cite{wallace2023end, guo2024initno, karunratanakul2024optimizing, eyring2024reno, novack2024ditto, tang2024inference}. Prior work has shown that this can improve image quality~\cite{guo2024initno, chen2024find, eyring2025noise}, mimic the effects of diffusion guidance~\cite{ahn2024noise}, generate rare~\cite{samuel2023norm,samuel2024generating, samuel2025lightningfast} or diverse concepts~\cite{kim2026diverse}, and adapt diffusion policies in robotic control~\cite{wagenmaker2025steering}. However, a central challenge, especially when the noise space becomes high-dimensional, is out-of-distribution noise: samples may achieve high reward while no longer remaining in the support of the pretrained distribution once the noise deviates from the prior~\cite{samuel2023norm, ben-hamu2024dflow, tang2024inference, eyring2024reno, zhai2025mira, eyring2025noise, harrington2026noisediv}. Moreover, these methods often require backpropagating reward gradients through the denoising process; in practice, many therefore rely on step-distilled models to make backpropagation tractable~\cite{Novack2024Ditto2, eyring2024reno, eyring2025noise, ding2025dollar}, while recent works propose multistep score distillation to approximate reward gradients to the initial noise more efficiently~\cite{ahn2024noise}. Our approach adopts initial noise optimization to steer action-conditioned video world models~\cite{gao2024vista, guo2025ctrl} with a practical solution for high-dimensional noise optimization.

\subsection{Diffusion Models}
Recent generative models are often based on diffusion or flow-matching formulations, which generate samples from noise by simulating a learned differential equation. This section expands on the diffusion-model formulation in \secref{eq:denoising}, which provides the foundation for recent video world models capable of generating high-quality videos. We omit conditioning on \((\historyObs, \actionSequence)\) throughout for brevity.

\para{Forward Probability Path} Score-based diffusion models~\cite{ho2020denoising, karras2022elucidating, song2020denoising} and flow-matching models~\cite{albergo2023building, lipman2023flow} define generative models \(\wm_{\wmParam}\) whose sampling process can be described by a stochastic or ordinary differential equation (SDE or ODE). The denoising network \(\denoise^{\diffusionTime}_{\wmParam}\), which parameterizes these dynamics, is trained to recover the reverse generative process, typically by predicting the score of the perturbed data distribution, the injected noise, or the conditional vector field.

Starting from clean data \(\mathbf{\data}=\mathbf{\obs}\sim\dataDistribution(\obs)\) and isotropic Gaussian noise \(\bm{\noise}\sim\mathcal{N}(0,\mathbf{I})\), the forward process, indexed by diffusion time \(\diffusionTime\in\{0,1,\dots,\diffusionTimeMax\}\), defines a perturbed sample as
\begin{equation}\label{eq:app-forward}
    \mathbf{\data}^{\diffusionTime}
    =
    \diffusionCoeff^{\diffusionTime}\,\mathbf{\data}
    +
    \noiseCoeff^{\diffusionTime}\,\bm{\noise}.
\end{equation}
For variance-preserving parameterizations such as DDPM, the boundary conditions satisfy
\(\diffusionCoeff^{0}=1\), \(\noiseCoeff^{0}=0\), and
\(\diffusionCoeff^{\diffusionTimeMax}\to 0\),
\(\noiseCoeff^{\diffusionTimeMax}\to 1\).
More generally, we only require that the terminal state be noise-dominated. Representative examples include the variance-preserving schedule of DDPM~\cite{ho2020denoising}, where \((\diffusionCoeff^{\diffusionTime})^2+(\noiseCoeff^{\diffusionTime})^2=1\), and the variance-exploding schedule of EDM~\cite{karras2022elucidating}, where \(\diffusionCoeff^{\diffusionTime}\equiv 1\) and \(\noiseCoeff^{\diffusionTime}\) increases with diffusion time.

\para{Training Objective}
The denoiser \(\denoise^{\diffusionTime}_{\wmParam}\) is trained to recover the injected Gaussian noise from a perturbed future observation, conditioned on the observation history and action sequence. With the noise-prediction parameterization,
\begin{equation}\label{eq:app-train}
    \mathcal{L}(\wmParam)
    =
    \mathbb{E}_{\mathbf{\data}\sim\dataDistribution,\,\bm{\noise}\sim\mathcal{N}(0,\mathbf{I}),\,\diffusionTime}
    \left[\,
        w(\diffusionTime)\,
        \big\|\,
            \bm{\noise}
            -
            \denoise^{\diffusionTime}_{\wmParam}\!\big(
                \diffusionCoeff^{\diffusionTime}\mathbf{\data}
                +
                \noiseCoeff^{\diffusionTime}\bm{\noise}
            \big)
        \big\|_2^2
    \,\right],
\end{equation}
where \(w(\diffusionTime)\) is a time-dependent weighting~\cite{karras2022elucidating}. The clean-sample, score, and velocity parameterizations are linearly equivalent~\cite{karras2022elucidating,song2021scorebased}; we take \(\denoise^{\diffusionTime}_{\wmParam}\) as a noise predictor, and the analysis transfers to the other parameterizations.

\para{Generation via Deterministic ODE Updates}
At inference time, generation starts from an initial Gaussian noise sample \(\mathbf{\data}^{\diffusionTimeMax}=\bm{\noise}\sim\mathcal{N}(0,\mathbf{I})\) and iteratively maps it to a clean sample by following the reverse probability-flow ODE. The deterministic update used in \eqref{eq:denoising} can be viewed as a first-order Euler step of this ODE, with DDIM~\cite{song2020denoising} and EDM~\cite{karras2022elucidating} as representative instances. Given the current noisy sample \(\mathbf{\data}^{\diffusionTime}\), the predicted clean sample is obtained by inverting the forward path \eqref{eq:app-forward}:
\begin{equation}\label{eq:app-clean-pred}
    \hat{\mathbf{\data}}^{0\mid\diffusionTime}
    =
    \frac{
        \mathbf{\data}^{\diffusionTime}
        -
        \noiseCoeff^{\diffusionTime}\,
        \denoise^{\diffusionTime}_{\wmParam}\!\left(
            \mathbf{\data}^{\diffusionTime}
        \right)
    }{
        \diffusionCoeff^{\diffusionTime}
    }.
\end{equation}
The sample is then advanced to the next, less-noisy time \(\diffusionTime-1\) by reapplying the forward path to \(\hat{\mathbf{\data}}^{0\mid\diffusionTime}\), while using the predicted noise as the noise direction:
\begin{equation}\label{eq:app-ddim}
    \mathbf{\data}^{\diffusionTime - 1}
    =
    \diffusionCoeff^{\diffusionTime - 1}\,
    \hat{\mathbf{\data}}^{0\mid\diffusionTime}
    +
    \noiseCoeff^{\diffusionTime - 1}\,
    \denoise^{\diffusionTime}_{\wmParam}\!\left(
        \mathbf{\data}^{\diffusionTime}
    \right).
\end{equation}
Iterating this deterministic update from \(\diffusionTimeMax\) to \(0\) defines the generative model sampling \(\mathbf{\data}\sim\wm_\wmParam\). Substituting \eqref{eq:app-clean-pred} into \eqref{eq:app-ddim} recovers the denoising update in \eqref{eq:denoising}.

\para{Probability-Flow ODE and Initial-Noise Determinism}
In the continuous-time limit, the deterministic updates in \eqref{eq:app-ddim} correspond to an Euler discretization of the \emph{probability-flow ODE} associated with the forward path in \eqref{eq:app-forward}~\cite{song2021scorebased,karras2022elucidating}. This ODE transports samples from the Gaussian prior \(\mathcal{N}(0,\mathbf{I})\) to the data distribution \(\dataDistribution\) along deterministic trajectories. Therefore, the generated sample can be written as
\(
    \bm{\noise}
    \mapsto
    \wm_{\wmParam}(\bm{\noise})
    =
    \mathbf{\data}^{0},
\)
where \(\wm_{\wmParam}\) denotes the deterministic sampler induced by the trained denoiser. This makes the generated sample a deterministic and differentiable function of the initial noise, justifying the use of \(\bm{\noise}\) as a control variable and allowing gradients of downstream criteria to be backpropagated through the sampler.
\subsection{Action-conditioned Video World Models}

Video world models for robotics have shown promise for policy evaluation and improvement by simulating hard-to-model systems directly from high-dimensional sensor observations~\cite{mei2026video}. In robotic manipulation, for example, a world model can anticipate how objects may spill, slip, or topple during physical interaction from camera observations~\cite{nakamura2025generalizing,seo2025uncertainty,wiedemer2025video}. In autonomous driving, it can simulate future scene evolution in complex environments, capturing agent interactions and safety-critical situations directly in image space~\cite{gao2024vista,hassan2025gem}. These capabilities can support the evaluation and improvement of visuomotor control policies in simulation, reducing reliance on costly, time-consuming, and potentially unsafe real-world trials~\cite{mei2026video, yin2026playworld}.

\para{Stable Video Diffusion}
Recent video world models often build on video generation models~\cite{agarwal2025cosmos,wan2025wan}, which have demonstrated strong ability to synthesize realistic and temporally coherent videos. Rather than learning from scratch, many recent approaches initialize from large-scale video diffusion models and then fine-tune them on domain-specific datasets such as driving or robotic manipulation. A widely used backbone is Stable Video Diffusion (SVD)~\cite{blattmann2023stable}, a latent diffusion model for image-to-video generation. SVD uses an encoder--decoder architecture to map RGB frames into a learned latent space,
\(
    \latent = \encoder(\obs) \in \mathbb{R}^{4 \times H/8 \times W/8},
\)
where \(H\) and \(W\) denote the image height and width. The diffusion model is then trained to denoise latent video representations, following an objective of the form in \eqref{eq:app-train}, and the generated latents are decoded back into RGB video frames using the decoder.

\para{Action Conditioning}
For robotic applications, video world models are conditioned on actions to enable controllable future generation~\cite{gao2024vista,guo2025ctrl}. These actions may correspond to future ego-vehicle controls or trajectories in autonomous driving~\cite{gao2024vista,hassan2025gem}, or to robot action sequences such as end-effector poses, joint-position commands, or low-level control commands in robotic manipulation~\cite{guo2025ctrl,kim2026cosmos}. In the notation of \eqref{eq:app-train}, the denoiser is conditioned on histories and actions,
\(
    \denoise^{\diffusionTime}_{\wmParam}
    \left(
        \mathbf{\data}^{\diffusionTime},
        \historyObs,
        \actionSequence
    \right),
\)
so that the generated future is constrained to be consistent with both the observed history and the specified action sequence. While earlier world models often learned latent dynamics from task-specific observation--action trajectories~\cite{Hafner2020Dream,hafner2023dreamerv3,hansen2024tdmpc2}, recent diffusion-based video world models built on SVD typically start from a pretrained video diffusion model and post-train it with action-conditioned data. When trained with conditional dropout, these models can also use classifier-free guidance at sampling time to trade off visual realism and action controllability~\cite{ho2022classifier}.

\para{History Conditioning}
To generate coherent future frames that remain consistent with the conditioning observations, recent video world models incorporate observation history in several ways. Some methods inject conditioning frames by replacing the corresponding noisy latent frames with clean observed latents during sampling~\cite{gao2024vista}; others use progressive or autoregressive sampling with per-frame noise schedules to encourage causal structure across time~\cite{chen2024diffusion,hassan2025gem}; and some provide sparse history frames or memory tokens as additional network inputs through attention-based conditioning~\cite{guo2025ctrl}. These mechanisms help preserve temporal continuity, keep generated futures consistent with the observed history, supporting longer-horizon, autoregressive video generation by conditioning future predictions on previous observations or generated frames.

\para{Application: Policy Evaluation \& Improvement}
Action-conditioned video world models can support robot learning without repeated real-world rollouts, enabling scalable data generation, planning, policy evaluation, and policy improvement. Because current state-of-the-art video generation models are often slow and memory-intensive, they are typically used offline rather than for real-time planning. Prior work has primarily leveraged action-conditioned video prediction in two ways: (\romannumeral 1) using imagined rollouts to evaluate or optimize policies without repeated hardware trials~\cite{quevedo2025worldgym,guo2025ctrl,tseng2025scalable,li2025worldeval,yin2026playworld}, and (\romannumeral 2) generating synthetic trajectories for data augmentation or policy fine-tuning~\cite{guo2025ctrl,jang2025dreamgen,sharma2026world,guo2026vlaw,yin2026playworld}. In particular, recent works report that video world models can serve as simulators for closed-loop evaluation of visuomotor policies, with predicted performance correlating with real-world success rates~\cite{guo2025ctrl}. Similarly, policies can be improved using video world models, either through reinforcement learning inside imagined rollouts~\cite{sharma2026world} or by searching for successful synthetic trajectories that are then used to fine-tune the policy~\cite{guo2026vlaw}.

While our method can be applied to these broader settings, our experiments focus specifically on the value of steering video world models for robust policy evaluation and improvement. Rather than evaluating a complete robotic policy in a closed loop, we evaluate candidate trajectories that could be produced by a policy and assess their potential failure risks. Similarly, rather than improving policies through closed-loop exploration inside the world model, we use the steered world model to identify failure-prone expert trajectories and reweight them during policy fine-tuning.

\subsection{Inference-time Steering of Generative Models}

Given a generative model \(\wm_\wmParam\) that maps an initial noise sample \(\bm{\noise}\sim\mathcal{N}(0,\mathbf{I})\) to a generated sample \(\bm{\data}=\wm_\wmParam(\bm{\noise})\), inference-time optimization aims to find generations that maximize a user-specified criterion \(\mathcal{C}\), while preserving the plausibility of the generated samples with a regularizer \(\mathrm{Reg}(\bm{\data})\) that penalizes generations that deviate from the original model distribution. This can be written as
\begin{equation}
    \bm{\data}^*
    =
    \argmax_{\bm{\data}}
    \;
    \mathcal{C}(\bm{\data})
    -
    \alpha\,\mathrm{Reg}(\bm{\data}).
\end{equation}

A natural way to realize this objective is to adapt the generative model itself so that it samples from a reward-tilted distribution,
\begin{equation}
    \wm_{\wmParam}^{*}(\bm{\data})
    \propto
    \wm_{\wmParam}(\bm{\data})
    \exp\!\left(\frac{\mathcal{C}(\bm{\data})}{\alpha}\right),
\end{equation}
which upweights high-criterion samples while remaining close to the original model distribution. However, directly fine-tuning the generator is often impractical for inference-time control: the criterion may change across tasks or users, and fine-tuning requires significant computation for each new criterion. Moreover, preserving closeness to the original distribution can require density estimates that are intractable or prohibitively expensive for high-dimensional generative models~\cite{wallace2024diffusion, ai2026joint}. As a result, model adaptation can drift away from the original data distribution and produce implausible generations~\cite{eyring2025noise, holderrieth2026diamond}.

\para{Initial Noise Optimization} Initial noise optimization instead keeps the generative model fixed and optimizes only the input noise according to an inference-time criterion evaluated on the generated sample:
\begin{equation}
    \bm{\noise}^{*}
    =
    \argmax_{\bm{\noise}}
    \;
    \mathcal{C}\!\left(
        \wm_\wmParam(\bm{\noise})
    \right).
\end{equation}
This formulation allows arbitrary criteria to be specified at inference time without modifying the model parameters. If the optimized noise remains plausible under the original Gaussian prior, the resulting generation is still induced by the pretrained model distribution, so additional regularization on the generated sample itself is not required.

When \(\mathcal{C}\) is differentiable, the optimization can be solved by backpropagating through the diffusion sampler~\cite{wallace2023end, karunratanakul2024optimizing, eyring2024reno}. For non-differentiable criteria, prior work estimates gradients of the criterion~\cite{tang2024inference}, amortizes the optimization with a hypernetwork~\cite{eyring2025noise}, or formulates steering as a reinforcement learning problem~\cite{chen2024find, wagenmaker2025steering}. Since backpropagating through many iterative denoising steps is computationally expensive, several methods further distill diffusion models into one-step generators to make optimization more tractable~\cite{eyring2024reno,Novack2024Ditto2,eyring2025noise}.

\subsection{Typical Set of High-Dimensional Gaussian Noise}
\label{app:typical-set}

The typical set describes where samples from a distribution are expected to lie with high probability. 
It is therefore a notion of probability \textit{mass}, rather than simply probability \textit{density}~\cite{cover1999elements,nalisnick2019detecting}. 
For a random vector \(\bm{\noise}\sim\mathcal{N}(\mathbf{0},\mathbf{I}_{\dimension})\), let
\(p(\bm{\noise})\) denote the joint density of the full \(\dimension\)-dimensional Gaussian prior. 
The \(\delta\)-typical set can be written as
\begin{equation}
\mathcal{T}_{\delta}^{(\dimension)}
=
\left\{
\bm{\noise}:
\left|
-\frac{1}{\dimension}\log p(\bm{\noise})
-
h(\mathcal{N}(0,1))
\right|
\le \delta
\right\},
\end{equation}
where \(h(\mathcal{N}(0,1))=\frac{1}{2}\log(2\pi e)\) is the differential entropy of a one-dimensional standard Gaussian. 
The one-dimensional entropy appears because the standard Gaussian prior factorizes across coordinates,
\(p(\bm{\noise})=\prod_{i=1}^{\dimension}p(\noise_i)\). 
Thus, \(-\frac{1}{\dimension}\log p(\bm{\noise})\) is the average information content per coordinate, analogous to the empirical entropy of an i.i.d. sequence. 
For the isotropic Gaussian prior,
\begin{equation}
-\frac{1}{\dimension}\log p(\bm{\noise})
=
\frac{1}{2}\log(2\pi)
+
\frac{1}{2\dimension}\|\bm{\noise}\|_2^2.
\end{equation}
Therefore, typicality implies
\(\|\bm{\noise}\|_2^2/\dimension \approx 1\). 
In high dimensions, Gaussian samples concentrate on a thin shell of radius approximately \(\sqrt{\dimension}\), rather than near the mode. 
This illustrates the key distinction between density and typicality: the zero vector has the highest density under the Gaussian prior, but is extremely unlikely to be sampled in high dimensions. Although the Gaussian distribution has full support over \(\mathbb{R}^{\dimension}\), most of its probability mass concentrates on a small subset of that space as \(\dimension\) grows. 
Equivalently, a typical sample is not characterized only by its likelihood, but by a collection of sample statistics that take values close to their expectations under the Gaussian prior.

\section{Inference-time Steering of Video World Models}

\subsection{Regularizing Noise within the Typical Set of High-Dimensional Gaussian Distribution}\label{app:regularizer}

In this section, we elaborate on Sec.~\ref{sec:high-dim-gaussian}, where we impose typical-set constraints during noise optimization. In high-dimensional Gaussian distributions, regions of high probability \textit{density} do not necessarily coincide with regions of high probability \textit{mass}, where typical samples concentrate. For example, the zero vector has the highest density under a standard Gaussian, yet it is almost never sampled in high dimensions; instead, typical samples lie on a thin shell with squared norm approximately \(d\). We refer to this high-mass region as the \textit{typical set}. As illustrated in Fig.~\ref{fig:appendix-high-dim}, the typical set is distinct from the high-density region, and sampling noise outside the typical set can lead to implausible generations. This distinction is important for diffusion models, which are trained to denoise samples initialized from typical Gaussian noise. If optimization pushes the initial noise toward atypical, low-mass regions of the prior, the denoiser may be evaluated on inputs rarely seen during training, leading to approximation errors and, consequently, implausible or OOD generations.

\begin{figure}[ht]
    \centering
    \vspace{-0.1in}
    \includegraphics[width=1.0\linewidth]{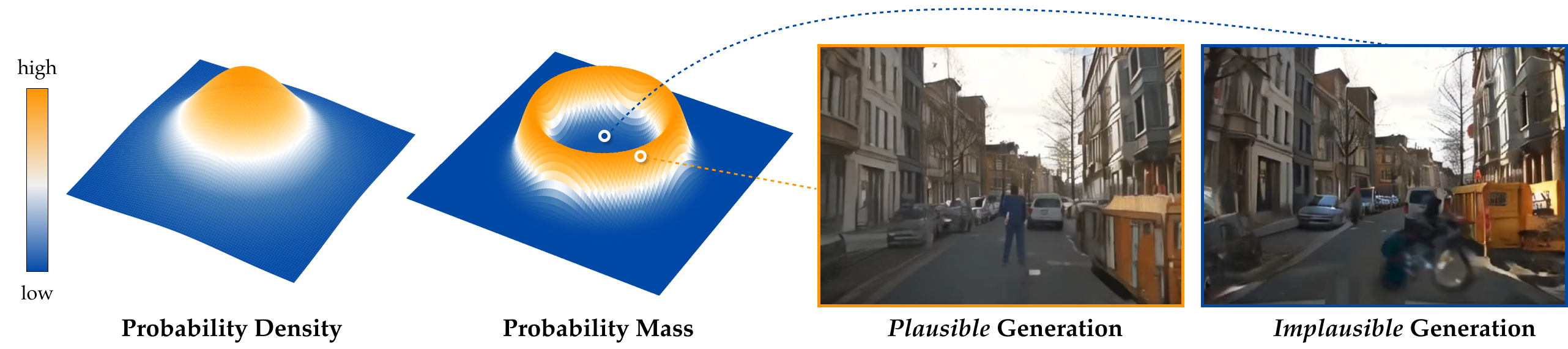}
    \vspace{-0.25in}
    \caption{\small{\textit{Probability density vs.\ mass in a high-dimensional Gaussian distribution.} Noise sampled from the typical set leads to plausible generations, whereas noise outside the typical set--despite having high probability density--produces implausible generations, e.g., humans becoming blurry or transforming into vehicles.}}
    \label{fig:appendix-high-dim}
    \vspace{-0.15in}
\end{figure}

A common way to encourage typicality is to regularize the norm of the optimized noise, since Gaussian samples exhibit strong norm concentration in high dimensions. Several prior methods have found norm-based regularization sufficient for preserving plausibility in diffusion generation~\cite{samuel2023norm, eyring2024reno, tang2024inference}. However, the norm captures only one of the numerous sample statistics of a Gaussian sample. In practice, we find that additional typicality constraints are important for high-dimensional video world models. Thus, as described in Sec.~\ref{sec:high-dim-gaussian}, we regularize multiple statistics of the optimized noise, including norm concentration, isotropy, and spectral whiteness, each motivated by concentration properties of high-dimensional Gaussian samples.

\begin{figure}[ht]
    \centering
    \vspace{-0.15in}
    \includegraphics[width=1.0\linewidth]{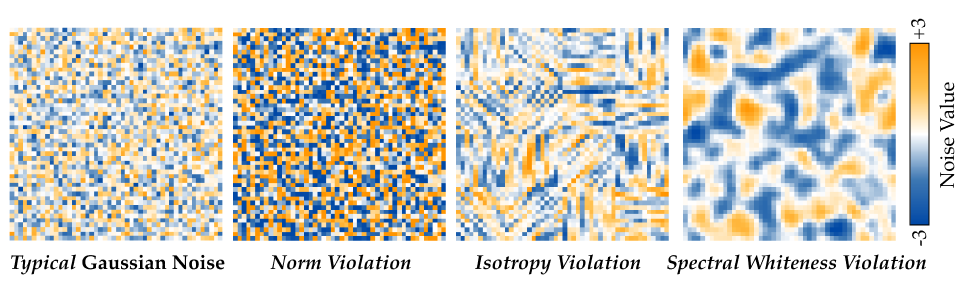}
    \vspace{-0.3in}
    \caption{\small{\textit{Examples of atypical Gaussian noise.} \textit{Left}: typical noise sampled from a Gaussian prior. \textit{Others}: noises perturbed to violate norm concentration, isotropy, and spectral whiteness, respectively.}}
    \label{fig:appendix-noise-violated} \vspace{-0.1in}
\end{figure}

Fig.~\ref{fig:appendix-noise-violated} illustrates examples of Gaussian noise visualized as 2D images, where each atypical sample violates one statistic of the Gaussian typical set. Violating norm concentration produces noise with globally biased values compared to typical Gaussian noise. Violating isotropy introduces local coordinate patterns that do not appear in typical i.i.d. noise, while violating spectral whiteness produces structured, non-white patterns in the frequency domain.

\subsection{Concentration of Gaussian Typical Set Regularizers}

\para{Norm Concentration}
For Gaussian noise \(\bm{\noise}\sim\mathcal{N}(\mathbf{0},\mathbf{I}_{\dimension})\), the squared norm satisfies
\(\|\bm{\noise}\|_2^2\sim\chi^2_{\dimension}\), and therefore concentrates sharply around \(\dimension\). Equivalently, the Euclidean norm, or radius \(\|\bm{\noise}\|_2\) concentrates around \(\sqrt{\dimension}\). Moreover, if \(R=\|\bm{\noise}\|_2\), then \(R\) follows a chi distribution with \(\dimension\) degrees of freedom, whose standard deviation remains bounded independently of \(\dimension\) and approaches \(1/\sqrt{2}\) as \(\dimension\to\infty\). Thus, although the typical radius grows as \(\sqrt{\dimension}\), the absolute width of the shell containing most Gaussian probability mass remains approximately constant. This motivates penalizing deviations from the typical Gaussian shell in radius:
\[
    \reward^{\mathrm{norm}}(\bm{\noise})
    :=
    -\left(
        \|\bm{\noise}\|_2
        -
        \sqrt{\dimension}
    \right)^2.
\]

\begin{lemma}[Norm concentration of Gaussian noise]
\label{lem:gaussian-norm-concentration}
Let \(\bm{\noise}\sim\mathcal{N}(\mathbf{0},\mathbf{I}_{\dimension})\). Then, for any \(\delta\in(0,1)\),
\[
    \mathbb{P}\!\left(
        \left|
            \frac{\|\bm{\noise}\|_2^2}{\dimension}
            -
            1
        \right|
        >
        2\sqrt{\frac{\log(2/\delta)}{\dimension}}
        +
        2\frac{\log(2/\delta)}{\dimension}
    \right)
    \le
    \delta .
\]
Equivalently, the probability that \(\bm{\noise}\) lies outside this typical norm shell is at most \(\delta\).
\end{lemma}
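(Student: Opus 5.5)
We apply the Laurent--Massart chi-square tail bound, which we derive ourselves by the Chernoff method. Let \(X=\|\bm{\noise}\|_2^2=\sum_{i=1}^{\dimension}\noise_i^2\sim\chi^2_{\dimension}\). Since the bound is symmetric, we split the two-sided event into an upper tail \(\{X-\dimension>2\sqrt{\dimension x}+2x\}\) and a lower tail \(\{\dimension-X>2\sqrt{\dimension x}\}\), where \(x=\log(2/\delta)\). We aim to show each has probability at most \(e^{-x}=\delta/2\); a union bound then gives total failure probability \(\delta\). Dividing through by \(\dimension\) turns \(2\sqrt{\dimension x}+2x\) into \(2\sqrt{\log(2/\delta)/\dimension}+2\log(2/\delta)/\dimension\), which is exactly the stated threshold. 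The lower deviation \(2\sqrt{\dimension x}\) is smaller than this threshold, so the lower-tail event contains the corresponding part of the stated event.

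\textbf{Step 1 (moment generating function).} For each coordinate and \(\lambda<1/2\), we have \(\mathbb{E}e^{\lambda(\noise_i^2-1)}=e^{-\lambda}(1-2\lambda)^{-1/2}\). Hence
\[
\log\mathbb{E}e^{\lambda(X-\dimension)}=\dimension\bigl(-\lambda-\tfrac12\log(1-2\lambda)\bigr).
\]
We then use the elementary inequality
\[
-\lambda-\tfrac12\log(1-2\lambda)\le \frac{\lambda^2}{1-2\lambda}\quad\text{for }0\le\lambda<1/2,
\]
which follows from the series expansion \(\sum_{k\ge2}(2\lambda)^k/(2k)\le\sum_{k\ge2}\lambda^2(2\lambda)^{k-2}\). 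For the lower tail with \(\lambda<0\), we use
\[
-\lambda-\tfrac12\log(1-2\lambda)\le\lambda^2,
\]
which holds because the alternating series is dominated by its first term, or alternatively because \(\log(1+u)\ge u-u^2/2\) for \(u\ge0\).

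\textbf{Step 2 (Chernoff optimization).} These bounds show that \(X-\dimension\) is sub-gamma on the right with variance factor \(v=2\dimension\) and scale \(c=2\). The standard sub-gamma inversion gives
\[
\mathbb{P}\bigl(X-\dimension>\sqrt{2vx}+cx\bigr)\le e^{-x},
\]
and \(\sqrt{2vx}+cx=2\sqrt{\dimension x}+2x\). On the left tail, \(X-\dimension\) is sub-Gaussian with variance factor \(2\dimension\), which gives \(\mathbb{P}(\dimension-X>2\sqrt{\dimension x})\le e^{-x}\).

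\textbf{Main obstacle.} The main difficulty is the sub-gamma inversion step: we must choose \(\lambda\) so that the bound \(\inf_\lambda\{\dimension\lambda^2/(1-2\lambda)-\lambda t\}\) is at most \(-x\) when \(t=2\sqrt{\dimension x}+2x\). We would handle this with the explicit choice
\[
\lambda=\frac{\sqrt{x}}{2(\sqrt{\dimension}+\sqrt{x})},
\]
for which \(1-2\lambda=\sqrt{\dimension}/(\sqrt{\dimension}+\sqrt{x})\), and then verify the resulting inequality algebraically. Alternatively, we can simply cite Laurent and Massart (2000, Lemma 1), since this is exactly their result.
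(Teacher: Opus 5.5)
Your reduction is the same as the paper's: apply the Laurent--Massart tail bounds with $x=\log(2/\delta)$, note that the lower half of the two-sided event lies inside $\{D-X>2\sqrt{Dx}\}$, take a union bound, and divide by $D$. If you cite Laurent and Massart, or the standard sub-gamma inversion $\mathbb{P}(Z>\sqrt{2vx}+cx)\le e^{-x}$, as your Step~2 does, the proof is complete. Your Step~1 moment-generating-function bounds are also correct.

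\textbf{The step that fails} is the explicit verification in your ``main obstacle'' paragraph. Write $t=2\sqrt{Dx}+2x=2\sqrt{x}(\sqrt{D}+\sqrt{x})$. With $\lambda=\sqrt{x}/(2(\sqrt{D}+\sqrt{x}))$ you do get $\lambda t=x$. But $D\lambda^2/(1-2\lambda)=x\sqrt{D}/(4(\sqrt{D}+\sqrt{x}))$, so the Chernoff bound is $\exp\bigl(-x+x\sqrt{D}/(4(\sqrt{D}+\sqrt{x}))\bigr)$. This is strictly larger than $e^{-x}=\delta/2$ for every $D,x>0$, and it approaches $(\delta/2)^{3/4}$ as $D\to\infty$. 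No algebra will close that inequality, because your $\lambda$ is roughly half the optimizer.

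The fix is to solve the first-order condition $t=2D\lambda(1-\lambda)/(1-2\lambda)^2$. This gives $1-2\lambda=(1+2t/D)^{-1/2}=\sqrt{D}/(\sqrt{D}+2\sqrt{x})$, i.e.\ $\lambda=\sqrt{x}/(\sqrt{D}+2\sqrt{x})$. For this choice, $D\lambda^2/(1-2\lambda)=x\sqrt{D}/(\sqrt{D}+2\sqrt{x})$ and $\lambda t=2x(\sqrt{D}+\sqrt{x})/(\sqrt{D}+2\sqrt{x})$. The exponent is then exactly $-x$, which recovers the Laurent--Massart upper tail.
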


\begin{proof}
Since \(\bm{\noise}\sim\mathcal{N}(\mathbf{0},\mathbf{I}_{\dimension})\), its coordinates
\(\noise_1,\ldots,\noise_{\dimension}\) are independent standard normal random variables. Hence,
\[
    \|\bm{\noise}\|_2^2
    =
    \sum_{j=1}^{\dimension}\noise_j^2
    \sim
    \chi^2_{\dimension}.
\]
Let \(X=\|\bm{\noise}\|_2^2\). By the Laurent--Massart inequality~\cite{laurent2000adaptive}, for any \(x>0\),
\[
    \mathbb{P}\!\left(
        X-\dimension \ge 2\sqrt{\dimension x}+2x
    \right)
    \le e^{-x},
    \qquad
    \mathbb{P}\!\left(
        \dimension-X \ge 2\sqrt{\dimension x}
    \right)
    \le e^{-x}.
\]
Taking \(x=\log(2/\delta)\), each tail event has probability at most
\(\delta/2\). Therefore, by a union bound, with probability at least
\(1-\delta\),
\[
    -2\sqrt{\dimension\log(2/\delta)}
    \le
    X-\dimension
    \le
    2\sqrt{\dimension\log(2/\delta)}
    +
    2\log(2/\delta).
\]
Dividing by \(\dimension\) gives
\[
    \left|
        \frac{X}{\dimension}
        -
        1
    \right|
    \le
    2\sqrt{\frac{\log(2/\delta)}{\dimension}}
    +
    2\frac{\log(2/\delta)}{\dimension}.
\]
Substituting \(X=\|\bm{\noise}\|_2^2\) completes the proof.
\end{proof}

\begin{corollary}[High-probability scale of the radial norm regularizer]
\label{cor:norm-regularizer-scale}
For
\[
    |\,\reward^{\mathrm{norm}}(\bm{\noise})\,|
    =
    \left(
        \|\bm{\noise}\|_2
        -
        \sqrt{\dimension}
    \right)^2,
\]
we have, for any \(\delta\in(0,1)\), with probability at least \(1-\delta\),
\[
    |\,\reward^{\mathrm{norm}}(\bm{\noise})\,|
    \le
    \left(
        2\sqrt{\log(2/\delta)}
        +
        2\frac{\log(2/\delta)}{\sqrt{\dimension}}
    \right)^2.
\]
Consequently,
\[
    |\,\reward^{\mathrm{norm}}(\bm{\noise})\,|
    =
    O\!\left(
        \log(1/\delta)
        +
        \frac{\log^2(1/\delta)}{\dimension}
    \right)
\]
with high probability.
\end{corollary}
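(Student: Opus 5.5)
The plan is to pass from the squared-norm concentration of Lemma~\ref{lem:gaussian-norm-concentration} to a radial bound. The tool is the elementary inequality
\[
\bigl|\,r-\sqrt{\dimension}\,\bigr| = \frac{|r^2-\dimension|}{r+\sqrt{\dimension}} \le \frac{|r^2-\dimension|}{\sqrt{\dimension}}, \qquad r\ge 0,
\]
applied with \(r=\|\bm{\noise}\|_2\).

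\textbf{Step 1.} Fix \(\delta\in(0,1)\) and set \(L=\log(2/\delta)\). By Lemma~\ref{lem:gaussian-norm-concentration}, with probability at least \(1-\delta\),
\[
\bigl|\|\bm{\noise}\|_2^2-\dimension\bigr| \le 2\sqrt{\dimension L}+2L.
\]

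\textbf{Step 2.} On this event, the inequality above gives
\[
\bigl|\|\bm{\noise}\|_2-\sqrt{\dimension}\bigr| \le \frac{2\sqrt{\dimension L}+2L}{\sqrt{\dimension}} = 2\sqrt{L}+\frac{2L}{\sqrt{\dimension}} .
\]
Squaring both sides yields exactly the stated bound on \(|\reward^{\mathrm{norm}}(\bm{\noise})|=(\|\bm{\noise}\|_2-\sqrt{\dimension})^2\). Note that \(r+\sqrt{\dimension}\ge\sqrt{\dimension}\) holds for every \(r\ge0\), so no lower bound on the norm is needed.

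\textbf{Step 3.} For the asymptotic form, use \((a+b)^2\le 2a^2+2b^2\) to get
\[
|\reward^{\mathrm{norm}}(\bm{\noise})| \le 8L+\frac{8L^2}{\dimension}.
\]
Then note that \(L=\log 2+\log(1/\delta)=O(\log(1/\delta))\) once \(\delta\) is bounded away from \(1\), say \(\delta\le 1/2\). For larger \(\delta\) the constant term \(\log 2\) is absorbed into the \(O(\cdot)\) constant, since the statement is understood as a high-probability regime. This gives \(O(\log(1/\delta)+\log^2(1/\delta)/\dimension)\).

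The argument is essentially routine. The only mild subtlety is the constant bookkeeping in Step~3, namely that \(\log(2/\delta)\) and \(\log(1/\delta)\) are comparable only for \(\delta\) away from \(1\). I would address it with the explicit restriction \(\delta\le 1/2\) or by folding \(\log 2\) into the constant. An alternative would be a direct Lipschitz concentration of \(\|\bm{\noise}\|_2\), which gives a bound of \(O(\log(1/\delta))\) with no \(1/\dimension\) term, but reusing the lemma matches the stated form.
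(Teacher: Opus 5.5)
Your proposal is correct and follows the paper's proof essentially line for line. Both invoke Lemma~\ref{lem:gaussian-norm-concentration}, apply $|R-\sqrt{\dimension}| = |R^2-\dimension|/(R+\sqrt{\dimension}) \le |R^2-\dimension|/\sqrt{\dimension}$ with $R=\|\bm{\noise}\|_2$, and square. Your added bookkeeping for the $O(\cdot)$ form, which the paper leaves implicit, is fine; stating it with the explicit restriction $\delta\le 1/2$ is the right choice, because as $\delta\to 1$ the additive $\log 2$ cannot be absorbed multiplicatively into $\log(1/\delta)$.
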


\begin{proof}
By Lemma~\ref{lem:gaussian-norm-concentration}, with probability at least \(1-\delta\),
\[
    \left|
        \|\bm{\noise}\|_2^2
        -
        \dimension
    \right|
    \le
    2\sqrt{\dimension\log(2/\delta)}
    +
    2\log(2/\delta).
\]
Let \(R=\|\bm{\noise}\|_2\). Since
\[
    |R-\sqrt{\dimension}|
    =
    \frac{|R^2-\dimension|}{R+\sqrt{\dimension}}
    \le
    \frac{|R^2-\dimension|}{\sqrt{\dimension}},
\]
we obtain
\[
    |R-\sqrt{\dimension}|
    \le
    2\sqrt{\log(2/\delta)}
    +
    2\frac{\log(2/\delta)}{\sqrt{\dimension}}.
\]
Squaring both sides gives the desired bound.
\end{proof}

\begin{corollary}[Dimension-independent width of the Gaussian shell]
\label{cor:dimension-independent-radius-width}
Let \(R=\|\bm{\noise}\|_2\) with
\(\bm{\noise}\sim\mathcal{N}(\mathbf{0},\mathbf{I}_{\dimension})\). Then the standard deviation of \(R\) is bounded independently of \(\dimension\):
\[
    \operatorname{std}(R)
    =
    O(1).
\]
More specifically, \(R\) follows a chi distribution with \(\dimension\) degrees of freedom, and
\[
    \operatorname{std}(R)
    \to
    \frac{1}{\sqrt{2}}
    \qquad
    \text{as} \qquad
    \dimension\to\infty .
\]
\end{corollary}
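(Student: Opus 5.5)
The plan is to work with the identity $\operatorname{Var}(R)=\mathbb{E}[R^2]-(\mathbb{E}R)^2=\dimension-(\mathbb{E}R)^2$, since $\mathbb{E}[R^2]=\mathbb{E}\|\bm{\noise}\|_2^2=\dimension$ for $R^2\sim\chi^2_{\dimension}$. Everything then reduces to showing that $\mathbb{E}R$ is close enough to $\sqrt{\dimension}$, namely $\dimension-(\mathbb{E}R)^2\to 1/2$. The upper bound $(\mathbb{E}R)^2\le \mathbb{E}R^2=\dimension$ by Jensen already gives $\operatorname{Var}(R)\ge 0$. For an $O(1)$ bound, I will argue from the Lipschitz structure. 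The map $\bm{\noise}\mapsto\|\bm{\noise}\|_2$ is $1$-Lipschitz, so the Gaussian Poincaré inequality gives $\operatorname{Var}(R)\le \mathbb{E}\|\nabla R\|^2=1$ for every $\dimension$. This establishes $\operatorname{std}(R)=O(1)$ immediately and uniformly in $\dimension$.

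A less self-contained alternative uses Corollary~\ref{cor:norm-regularizer-scale}. That corollary gives $\mathbb{P}(|R-\sqrt{\dimension}|>t)\le 2e^{-ct^2}$ for $t\lesssim\sqrt{\dimension}$. Integrating this tail bound yields $\mathbb{E}(R-\sqrt{\dimension})^2=O(1)$, and $\operatorname{Var}(R)\le\mathbb{E}(R-\sqrt{\dimension})^2$ then gives the same conclusion. The Poincaré route is cleaner, so I will present that one.

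For the limit, I will use the explicit chi-distribution mean $\mathbb{E}R=\sqrt{2}\,\Gamma((\dimension+1)/2)/\Gamma(\dimension/2)$. The asymptotic ratio $\Gamma(x+1/2)/\Gamma(x)=\sqrt{x}\,(1-\tfrac{1}{8x}+O(x^{-2}))$ with $x=\dimension/2$ gives $(\mathbb{E}R)^2=\dimension(1-\tfrac{1}{4\dimension}+O(\dimension^{-2}))=\dimension-\tfrac14\cdot 2+O(1/\dimension)$. Checking the constant: $2x(1-\tfrac{1}{4x}+\dots)=2x-\tfrac12$, so $(\mathbb{E}R)^2=\dimension-\tfrac12+O(1/\dimension)$. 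Hence $\operatorname{Var}(R)\to\tfrac12$ and $\operatorname{std}(R)\to 1/\sqrt{2}$.

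The main obstacle is justifying the Gamma-ratio expansion rigorously. I would cite Stirling's series for $\log\Gamma$ and subtract the expansions at $x+\tfrac12$ and $x$.

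A probabilistic alternative avoids Gamma functions. Write $R=\sqrt{\dimension}\sqrt{1+Y}$ with $Y=(R^2-\dimension)/\dimension$, so that $\sqrt{\dimension}\,Y\Rightarrow\mathcal{N}(0,2)$ by the CLT. The delta method then gives $R-\sqrt{\dimension}\Rightarrow\mathcal{N}(0,1/2)$. Uniform integrability of $(R-\sqrt{\dimension})^2$ is needed to conclude convergence of the variance, and it follows from the sub-Gaussian tails already established via Lemma~\ref{lem:gaussian-norm-concentration}, or from bounded fourth moments. Together these give convergence of the second moment and the mean, hence of the variance.
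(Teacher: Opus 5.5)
Your proposal is correct. For the limit it starts where the paper does, from $\operatorname{Var}(R)=D-2\bigl(\Gamma(\tfrac{D+1}{2})/\Gamma(\tfrac{D}{2})\bigr)^2$, but it proves more. The paper stops at this formula and simply asserts that it stays bounded and tends to $\tfrac12$, so both the $O(1)$ claim and the limit rest on an unproved Gamma-ratio asymptotic.

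You handle the two claims separately:
\begin{itemize}
\item The uniform bound comes from the Gaussian Poincar\'e inequality for the $1$-Lipschitz map $x\mapsto\|x\|_2$. This gives an explicit, non-asymptotic $\operatorname{Var}(R)\le 1$ for every $D$, which the paper's argument does not provide.
\item The limit comes from the Stirling expansion $\Gamma(x+\tfrac12)/\Gamma(x)=\sqrt{x}\,\bigl(1-\tfrac{1}{8x}+O(x^{-2})\bigr)$. This is exactly the step the paper omits, and it also gives the rate $\operatorname{Var}(R)=\tfrac12+O(1/D)$.
\end{itemize}

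Your CLT/delta-method alternative is a genuinely Gamma-free route. The uniform integrability it needs follows directly from $|R-\sqrt{D}|\le|R^2-D|/\sqrt{D}$, which gives $\mathbb{E}(R-\sqrt{D})^4\le\mathbb{E}(R^2-D)^4/D^2=12+48/D$. It also yields more than the variance limit, namely $R-\sqrt{D}\Rightarrow\mathcal{N}(0,\tfrac12)$. That asymptotic normality is what the paper's later claim actually depends on, namely that the $\pm 3$ projection threshold covers more than $99.99\%$ of samples. A variance bound alone does not give that; Chebyshev yields only about $94\%$.

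One small slip: the intermediate display should read $(\mathbb{E}R)^2=D\bigl(1-\tfrac{1}{2D}+O(D^{-2})\bigr)$, because $\tfrac{1}{4x}=\tfrac{1}{2D}$ when $x=D/2$. Your ``checking the constant'' line then correctly gives $D-\tfrac12+O(1/D)$.
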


\begin{proof}
Since \(R=\|\bm{\noise}\|_2\) and
\(\|\bm{\noise}\|_2^2\sim\chi^2_{\dimension}\), the random variable \(R\) follows a chi distribution with \(\dimension\) degrees of freedom. The chi distribution has moments
\[
    \mathbb{E}[R^2]=\dimension,
    \qquad
    \mathbb{E}[R]
    =
    \sqrt{2}\,
    \frac{\Gamma((\dimension+1)/2)}
         {\Gamma(\dimension/2)}.
\]
Therefore,
\[
    \operatorname{Var}(R)
    =
    \dimension
    -
    2
    \left(
        \frac{\Gamma((\dimension+1)/2)}
             {\Gamma(\dimension/2)}
    \right)^2.
\]
This variance remains bounded as \(\dimension\) grows, and in fact converges to \(1/2\). Hence
\[
    \operatorname{std}(R)\to \frac{1}{\sqrt{2}},
\]
showing that the width of the Gaussian shell is asymptotically independent of the dimension.
\end{proof}

\para{Isotropy Concentration}
We now show that the isotropy regularizer is small with high probability under the Gaussian prior. Let
\(\bm{\noise}\sim\mathcal{N}(\mathbf{0},\mathbf{I}_{\dimension})\), and suppose
\(\dimension=mk\). We randomly permute the coordinates of \(\bm{\noise}\) and partition them into
\(m\) blocks
\[
    \bm{\noise}_1,\ldots,\bm{\noise}_m \in \mathbb{R}^k .
\]
Since the Gaussian prior has i.i.d. coordinates, this random permutation does not change the distribution, and the blocks satisfy
\[
    \bm{\noise}_i
    \overset{\mathrm{i.i.d.}}{\sim}
    \mathcal{N}(\mathbf{0},\mathbf{I}_k).
\]
Thus, their empirical second moment
\[
    \widehat{\mathbf{\Sigma}}
    :=
    \frac{1}{m}
    \sum_{i=1}^{m}
    \bm{\noise}_i\bm{\noise}_i^\top
\]
should be close to \(\mathbf{I}_k\) under the Gaussian prior. We measure deviations from this blockwise isotropy using
\[
    \reward^{\mathrm{iso}}(\bm{\noise})
    :=
    -\frac{1}{k}
    \left\|
        \widehat{\mathbf{\Sigma}}
        -
        \mathbf{I}_k
    \right\|_F^2 .
\]

\begin{lemma}[Small probability of anisotropic Gaussian block covariance]
\label{lem:isotropy-concentration}
There exists a universal constant \(C>0\) such that, for any
\(\delta\in(0,1)\), if \(m \gtrsim \log(2k^2/\delta)\), then
\[
    \mathbb{P}\!\left(
        |\,\reward^{\mathrm{iso}}(\bm{\noise}) \,|
        >
        C
        \frac{k\log(2k^2/\delta)}{m}
    \right)
    \le
    \delta .
\]
\end{lemma}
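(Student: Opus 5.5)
The plan is an entrywise concentration bound for $\widehat{\mathbf{\Sigma}}-\mathbf{I}_k$, followed by a union bound over its $k^2$ entries. Write
\[
    \Delta := \widehat{\mathbf{\Sigma}}-\mathbf{I}_k ,
    \qquad
    \Delta_{ab}=\frac{1}{m}\sum_{i=1}^m \bigl(\noise_{i,a}\noise_{i,b}-\mathbb{1}[a=b]\bigr),
\]
where $\noise_{i,a}$ denotes the $a$-th coordinate of block $i$. As noted just before the statement, the random permutation leaves the distribution unchanged, so the blocks are i.i.d.\ $\mathcal{N}(\mathbf{0},\mathbf{I}_k)$. Hence, for each fixed pair $(a,b)$, the summands $\noise_{i,a}\noise_{i,b}-\mathbb{1}[a=b]$, $i=1,\dots,m$, are i.i.d., centered, and sub-exponential with a universal $\psi_1$-norm. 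Indeed, a product of two standard Gaussians (whether equal or independent) is sub-exponential, with $\|\noise_{i,a}\noise_{i,b}\|_{\psi_1}\le\|\noise_{i,a}\|_{\psi_2}\|\noise_{i,b}\|_{\psi_2}$, and centering only changes the constant.

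Next I apply Bernstein's inequality for sums of sub-exponential variables. It gives, for universal constants $c, C_0$ and every $t\ge 0$,
\[
    \mathbb{P}\bigl(|\Delta_{ab}|>t\bigr)
    \le
    2\exp\!\bigl(-c\,m\min(t^2,t)\bigr).
\]
For the diagonal entries one could instead use Lemma~\ref{lem:gaussian-norm-concentration} with $\dimension=m$, since $m\widehat{\Sigma}_{aa}\sim\chi^2_m$; for uniformity I will use Bernstein throughout. Choose
\[
    t=C_0\sqrt{\frac{\log(2k^2/\delta)}{m}} .
\]
Then $t\le 1$ once $m\gtrsim\log(2k^2/\delta)$, which is exactly where that hypothesis is used. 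In this regime $\min(t^2,t)=t^2$, so each entry satisfies $\mathbb{P}(|\Delta_{ab}|>t)\le\delta/k^2$ for $C_0$ chosen suitably.

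A union bound over the at most $k^2$ entries then shows that, with probability at least $1-\delta$, every entry satisfies $|\Delta_{ab}|\le t$. On this event,
\[
    \frac{1}{k}\|\Delta\|_F^2
    \le
    \frac{1}{k}\,k^2 t^2
    =
    C_0^2\,\frac{k\log(2k^2/\delta)}{m},
\]
which is the claim with $C=C_0^2$.

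The main obstacle is keeping the constants universal: the sub-exponential norms must not depend on $(a,b)$, and the regime condition $t\le 1$ must follow from the hypothesis on $m$. Both hold because all entries are products of standard Gaussians. The entrywise union bound loses nothing relative to the stated rate, so no operator-norm or matrix-Bernstein machinery is needed.
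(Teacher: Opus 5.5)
Your proposal is correct and follows essentially the same route as the paper. Both arguments apply Bernstein's inequality to each entry of $\widehat{\mathbf{\Sigma}}-\mathbf{I}_k$ (an average of centered sub-exponential products of Gaussians), take a union bound over the $k^2$ entries at $t = C'\sqrt{\log(2k^2/\delta)/m}$, and conclude $\frac{1}{k}\|\widehat{\mathbf{\Sigma}}-\mathbf{I}_k\|_F^2 \le k t^2$. You are more explicit than the paper about where the hypothesis $m \gtrsim \log(2k^2/\delta)$ enters (it keeps $t \le 1$ so the quadratic branch of Bernstein applies), while the paper additionally computes $\mathbb{E}\,|\mathcal{C}^{\mathrm{iso}}| = (k+1)/m$, which the tail bound does not need.
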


\begin{proof}
For each entry of \(\widehat{\mathbf{\Sigma}}\),
\[
    \widehat{\Sigma}_{ab}
    =
    \frac{1}{m}
    \sum_{i=1}^{m}
    \noise_{i,a}\noise_{i,b}.
\]
First, consider the diagonal terms. Since
\(\noise_{i,a}\sim\mathcal{N}(0,1)\),
\[
    \widehat{\Sigma}_{aa}
    =
    \frac{1}{m}
    \sum_{i=1}^{m}
    \noise_{i,a}^2,  \quad  \text{therefore,} \quad \mathbb{E}[\widehat{\Sigma}_{aa}]
    =
    1,
    \quad
    \operatorname{Var}(\widehat{\Sigma}_{aa})
    =
    \frac{2}{m}, \quad \Rightarrow\, 
    \mathbb{E}
    \left[
        (\widehat{\Sigma}_{aa}-1)^2
    \right]
    =
    \frac{2}{m}.
\]

For the off-diagonal terms \(a\neq b\), the products
\(\noise_{i,a}\noise_{i,b}\) have mean zero and variance one. Hence
\[
    \mathbb{E}[\widehat{\Sigma}_{ab}]
    =
    0,
    \qquad
    \operatorname{Var}(\widehat{\Sigma}_{ab})
    =
    \frac{1}{m}, \quad \text{which gives} \quad
    \mathbb{E}
    \left[
        \widehat{\Sigma}_{ab}^2
    \right]
    =
    \frac{1}{m}.
\]
Therefore,
\[
\begin{aligned}
    \mathbb{E}
    \left[
        \left\|
            \widehat{\mathbf{\Sigma}}
            -
            \mathbf{I}_k
        \right\|_F^2
    \right]
    &=
    \sum_{a=1}^{k}
    \mathbb{E}
    \left[
        (\widehat{\Sigma}_{aa}-1)^2
    \right]
    +
    \sum_{a\neq b}
    \mathbb{E}
    \left[
        \widehat{\Sigma}_{ab}^2
    \right] \\
    &=
    k\cdot\frac{2}{m}
    +
    k(k-1)\cdot\frac{1}{m} \\
    &=
    \frac{k^2+k}{m}.
\end{aligned}
\]
Dividing by \(k\) gives
\[
    \mathbb{E}
    \left[
        |\,\reward^{\mathrm{iso}}(\bm{\noise})\,|
    \right]
    =
    \frac{k+1}{m}.
\]

For the high-probability statement, note that each diagonal error
\(\widehat{\Sigma}_{aa}-1\) is an average of centered sub-exponential random variables, and each off-diagonal entry \(\widehat{\Sigma}_{ab}\) is an average of products of independent Gaussians, which are also sub-exponential. Therefore, by Bernstein-type concentration~\cite{hsu2012tail}, for all entries,
\[
    \mathbb{P}
    \left(
        |\widehat{\Sigma}_{ab} - \delta_{ab}| > t
    \right)
    \le
    2\exp(-cmt^2)
\]
for \(0<t\lesssim 1\), where \(c>0\) is a universal constant. Applying a union bound over the \(k^2\) entries and taking
\[
    t
    =
    C'
    \sqrt{
        \frac{\log(2k^2/\delta)}{m}
    },
\]
we obtain, with probability at least \(1-\delta\),
\[
    |\widehat{\Sigma}_{ab}-\delta_{ab}|
    \le
    C'
    \sqrt{
        \frac{\log(2k^2/\delta)}{m}
    }
    \qquad
    \text{for all } a,b.
\]
Hence,
\[
    \left\|
        \widehat{\mathbf{\Sigma}}
        -
        \mathbf{I}_k
    \right\|_F^2
    \le
    k^2
    {C'}^2
    \frac{\log(2k^2/\delta)}{m}.
\]
Dividing by \(k\) yields
\[
    \reward^{\mathrm{iso}}(\bm{\noise})
    \le
    C
    \frac{k\log(2k^2/\delta)}{m}
\]
with probability at least \(1-\delta\), for a universal constant \(C>0\).
\end{proof}

This shows that the normalized isotropy penalty is small under the training-time Gaussian prior whenever \(m\gg k\). Unlike norm concentration, which only constrains the total energy
\(\operatorname{tr}(\widehat{\mathbf{\Sigma}})\), the isotropy penalty also discourages unequal coordinate variances and nonzero off-diagonal correlations. Therefore, it detects structured coordinate patterns that may still have a typical global norm but are unlikely under an i.i.d. Gaussian prior.

\para{Spectral-Whiteness Concentration}
The isotropy regularizer checks whether randomly grouped coordinates have approximately identity covariance, but it does not directly rule out spatial frequency structure. For example, optimized noise may have excessive low-frequency energy, producing smooth spatial patterns, or excessive high-frequency energy, producing oscillatory artifacts. Under an i.i.d. Gaussian prior, the noise should instead be spectrally white: its energy should be spread evenly across spatial frequencies.

To formalize this, consider one spatial slice of the noise and let
\[
    \widetilde{\bm{\noise}}
    =
    U\bm{\noise}
\]
be its discrete Fourier transform, where \(U\) is an orthonormal Fourier basis. For real-valued noise, this can be interpreted as the equivalent real sine--cosine basis. Since the standard Gaussian distribution is rotationally invariant,
\[
    \widetilde{\bm{\noise}}
    \sim
    \mathcal{N}(\mathbf{0},\mathbf{I}).
\]
Thus, no frequency band should systematically contain more energy than another. Individual Fourier powers \(|\widetilde{\noise}_j|^2\) are noisy, but averages over sufficiently large frequency bins concentrate.

Let \(\{\mathcal{B}_b\}_{b=1}^{B}\) be disjoint frequency bins, and define the average power in each bin by
\[
    \widehat{p}_b
    :=
    \frac{1}{|\mathcal{B}_b|}
    \sum_{j\in\mathcal{B}_b}
    |\widetilde{\noise}_j|^2 .
\]
We penalize deviations from a flat binned spectrum using
\[
    \reward^{\mathrm{spec}}(\bm{\noise})
    :=
    -\frac{1}{B}
    \sum_{b=1}^{B}
    \left(
        \widehat{p}_b-\bar{p}
    \right)^2,
    \qquad
    \bar{p}
    :=
    \frac{1}{B}
    \sum_{b=1}^{B}
    \widehat{p}_b .
\]

\begin{lemma}[Small probability of atypical spectral energy]
\label{lem:spectral-whiteness-concentration}
Let \(\bm{\noise}\sim\mathcal{N}(\mathbf{0},\mathbf{I})\), and let
\(\widetilde{\bm{\noise}}=U\bm{\noise}\), where \(U\) is an orthonormal discrete Fourier transform. Let \(\{\mathcal{B}_b\}_{b=1}^{B}\) be disjoint frequency bins, and define \(\reward^{\mathrm{spec}}\) as above. Let
\[
    n_{\min}:=\min_b |\mathcal{B}_b|.
\]
Then, for any \(\delta\in(0,1)\),
\[
    \mathbb{P}\!\left(
        |\,\reward^{\mathrm{spec}}(\bm{\noise})\,|
        >
        4
        \left(
            2\sqrt{
                \frac{\log(2B/\delta)}{n_{\min}}
            }
            +
            2
            \frac{\log(2B/\delta)}{n_{\min}}
        \right)^2
    \right)
    \le
    \delta .
\]
\end{lemma}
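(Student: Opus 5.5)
The plan is to reduce the spectral statement to the norm-concentration bound of Lemma~\ref{lem:gaussian-norm-concentration}, applied to each frequency bin separately. Since \(U\) is orthonormal and the standard Gaussian is rotationally invariant, \(\widetilde{\bm{\noise}}=U\bm{\noise}\sim\mathcal{N}(\mathbf{0},\mathbf{I})\). Working in the equivalent real sine--cosine basis, the coordinates \(\widetilde{\noise}_j\) are i.i.d. standard normal. Because the bins are disjoint, each quantity \(|\mathcal{B}_b|\,\widehat{p}_b=\sum_{j\in\mathcal{B}_b}|\widetilde{\noise}_j|^2\) is a \(\chi^2_{|\mathcal{B}_b|}\) variable, and these are independent across bins (independence is not actually needed; a union bound suffices).

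The first step is to apply Lemma~\ref{lem:gaussian-norm-concentration} with dimension \(|\mathcal{B}_b|\) and failure probability \(\delta/B\). This gives, with probability at least \(1-\delta/B\),
\[
|\widehat{p}_b-1|\le 2\sqrt{\frac{\log(2B/\delta)}{|\mathcal{B}_b|}}+2\frac{\log(2B/\delta)}{|\mathcal{B}_b|}\le \varepsilon,
\qquad
\varepsilon:=2\sqrt{\frac{\log(2B/\delta)}{n_{\min}}}+2\frac{\log(2B/\delta)}{n_{\min}},
\]
using that the right-hand side is decreasing in the bin size, so the smallest bin gives the worst case. A union bound over the \(B\) bins then yields the event \(\max_b|\widehat{p}_b-1|\le\varepsilon\) with probability at least \(1-\delta\).

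The second step is deterministic on this event. Since \(\bar{p}\) is an average of the \(\widehat{p}_b\), it satisfies \(|\bar{p}-1|\le\varepsilon\). By the triangle inequality, \(|\widehat{p}_b-\bar{p}|\le 2\varepsilon\) for every \(b\). Hence
\[
|\reward^{\mathrm{spec}}(\bm{\noise})|=\frac{1}{B}\sum_b(\widehat{p}_b-\bar{p})^2\le 4\varepsilon^2,
\]
which is exactly the claimed threshold. (One could sharpen this using the fact that the empirical variance about \(\bar p\) is at most the mean square deviation about \(1\), which gives \(\varepsilon^2\), but the stated constant \(4\) indicates the crude triangle-inequality route.)

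The main obstacle is the first step's claim that each \(|\mathcal{B}_b|\widehat{p}_b\) is exactly chi-square with \(|\mathcal{B}_b|\) degrees of freedom. For a real signal, the complex DFT coefficients come in conjugate pairs, so \(|\widetilde{\noise}_j|^2\) for a complex frequency is a sum of two real squares with variance \(1/2\) each, and it is duplicated by its conjugate partner. I would handle this as the statement's setup suggests: interpret \(U\) as the real orthonormal sine--cosine basis, and assume each bin is a union of real basis coordinates, i.e. closed under conjugation with each pair counted once. Then each bin power is literally an average of i.i.d. \(\chi^2_1\) variables and Lemma~\ref{lem:gaussian-norm-concentration} applies verbatim. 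I would state this convention explicitly rather than track the factor-of-two bookkeeping for the DC and Nyquist frequencies.
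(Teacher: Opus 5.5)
Your proposal is correct and matches the paper's proof step for step. The paper applies Laurent--Massart directly with a union bound over the $B$ bins, which is what your appeal to Lemma~\ref{lem:gaussian-norm-concentration} at level $\delta/B$ packages; it then uses the same bound $|\bar{p}-1|\le\eta$ and the triangle inequality to reach $4\eta^2$, and your conjugation-closed-bin convention just makes explicit what the paper calls ``the equivalent real orthonormal Fourier representation.''
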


\begin{proof}
Since \(U\) is orthonormal and the standard Gaussian distribution is rotationally invariant,
\[
    \widetilde{\bm{\noise}}
    =
    U\bm{\noise}
    \sim
    \mathcal{N}(\mathbf{0},\mathbf{I}).
\]
Thus, in the equivalent real orthonormal Fourier representation, the Fourier coefficients are independent standard normal random variables.

For each frequency bin \(\mathcal{B}_b\),
\[
    |\mathcal{B}_b|\widehat{p}_b
    =
    \sum_{j\in\mathcal{B}_b}
    |\widetilde{\noise}_j|^2
    \sim
    \chi^2_{|\mathcal{B}_b|}.
\]
By the Laurent--Massart inequality~\cite{laurent2000adaptive} and a union bound over the \(B\) bins, with probability at least \(1-\delta\), for all \(b=1,\ldots,B\),
\[
    \left|
        \widehat{p}_b-1
    \right|
    \le
    2\sqrt{
        \frac{\log(2B/\delta)}{|\mathcal{B}_b|}
    }
    +
    2
    \frac{\log(2B/\delta)}{|\mathcal{B}_b|}.
\]
Since \(|\mathcal{B}_b|\ge n_{\min}\), we have
\[
    \max_b
    \left|
        \widehat{p}_b-1
    \right|
    \le
    \eta,
    \qquad
    \eta
    :=
    2\sqrt{
        \frac{\log(2B/\delta)}{n_{\min}}
    }
    +
    2
    \frac{\log(2B/\delta)}{n_{\min}}.
\]
Moreover,
\[
    |\bar{p}-1|
    =
    \left|
        \frac{1}{B}
        \sum_{b=1}^{B}
        (\widehat{p}_b-1)
    \right|
    \le
    \eta.
\]
Therefore, for every bin,
\[
    |\widehat{p}_b-\bar{p}|
    \le
    |\widehat{p}_b-1|
    +
    |\bar{p}-1|
    \le
    2\eta.
\]
Substituting this into the definition of the regularizer gives
\[
    |\,\reward^{\mathrm{spec}}(\bm{\noise})\,|
    =
    \frac{1}{B}
    \sum_{b=1}^{B}
    \left(
        \widehat{p}_b-\bar{p}
    \right)^2
    \le
    4\eta^2.
\]
Equivalently,
\[
    \mathbb{P}\!\left(
        |\,\reward^{\mathrm{spec}}(\bm{\noise})\,|
        >
        4\eta^2
    \right)
    \le
    \delta,
\]
which proves the claim.
\end{proof}

Lemma~\ref{lem:spectral-whiteness-concentration} shows that the spectral-whiteness penalty is small under the Gaussian prior when each frequency bin contains sufficiently many coefficients. Thus, a large value indicates an atypical spatial frequency structure. This regularizer is complementary to isotropy: isotropy checks covariance after random coordinate grouping, whereas spectral whiteness checks whether energy is evenly distributed across spatial frequencies.

\subsection{Vision-Language Models as General Inference-Time Criterion}\label{app:vlm-reward}

Rather than learning a task-specific reward function or failure classifier for a narrow set of events~\cite{nakamura2025generalizing,seo2025uncertainty}, we use a VLM as an inference-time verifier that can flexibly specify steering targets across diverse scenes and event categories~\cite{wu2025foresight}. This is particularly useful for general-purpose video world models, whose possible futures may vary widely with the scene and action context. \begin{wrapfigure}{l}{0.5\textwidth}
    \centering
    \vspace{-0.2in} 
    \includegraphics[width=\linewidth]{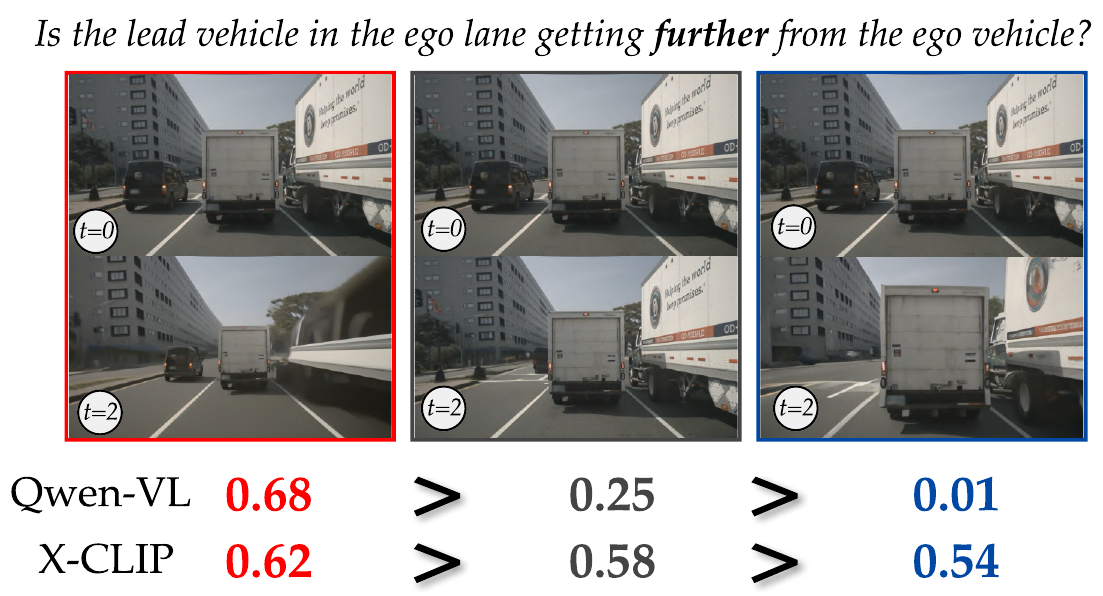} 
    \vspace{-0.25in}
    \caption{\small{\textit{VLM score for driving scenes.}
The three videos are generated from the same scene but exhibit different temporal relationships to the truck in the ego lane. Both the Qwen-VL-based model and X-CLIP correctly distinguish these temporal relationships.}}
    \vspace{-0.2in}\label{fig:vlm-driving}
\end{wrapfigure}VLMs also provide semantically meaningful gradient signals: task-specific reward models trained on limited datasets may fail to cover the large space of possible video generations, making them vulnerable to \textit{reward hacking}~\cite{clark2024directly, eyring2024reno}, where optimization increases the reward through adversarial or visually meaningless changes rather than by producing the intended event. In contrast, because VLMs are trained on large-scale vision-language data, they can provide a more general semantic signal for whether the generated video depicts the specified event~\cite{wu2025rewarddance}.

Since our method relies on dense gradient signals from VLMs, we use open-source models rather than API-based evaluators~\cite{team2023gemini}, despite the strong scene-understanding capabilities of recent proprietary models. We adopt different VLMs for autonomous driving and robotic manipulation based on the semantic structure of each domain.

\para{Autonomous Driving}
In autonomous driving, the target events we aim to steer often depend on temporal relationships among agents, such as pedestrian motion, vehicle movement, or changes in relative distance. We empirically find that off-the-shelf Qwen models~\cite{Qwen-VL} are less reliable at understanding driving scenes and their temporal structure. We therefore use the Qwen2.5-VL model fine-tuned on autonomous driving videos by \citet{li2025wolf}, which provides improved driving-scene understanding. We also use X-CLIP~\cite{ma2022x}, which we find effective at capturing temporal relationships. As shown in Fig.~\ref{fig:vlm-driving}, for three generated videos where the relative distance to the lead vehicle is increasing, unchanged, or decreasing, both the fine-tuned Qwen model and X-CLIP~\cite{ma2022x} assign scores that correctly distinguish these cases. For CLIP-style models, we define a positive prompt \(\prompt_+\) and a negative prompt \(\prompt_-\). We embed both prompts and the generated video into the shared embedding space, and use the difference in cosine similarities as the steering objective:
\[
    \reward^\text{sem}(\mathbf{\obs})
    =
    \mathrm{sim}\!\left(
        e^{\text{video}}(\mathbf{\obs}), e^{\text{text}}(\prompt_+)
    \right)
    -
    \mathrm{sim}\!\left(
        e^{\text{video}}(\mathbf{\obs}), e^{\text{text}}(\prompt_-)
    \right).
\]

\begin{figure}[ht]
    \centering
    \includegraphics[width=1.0\linewidth]{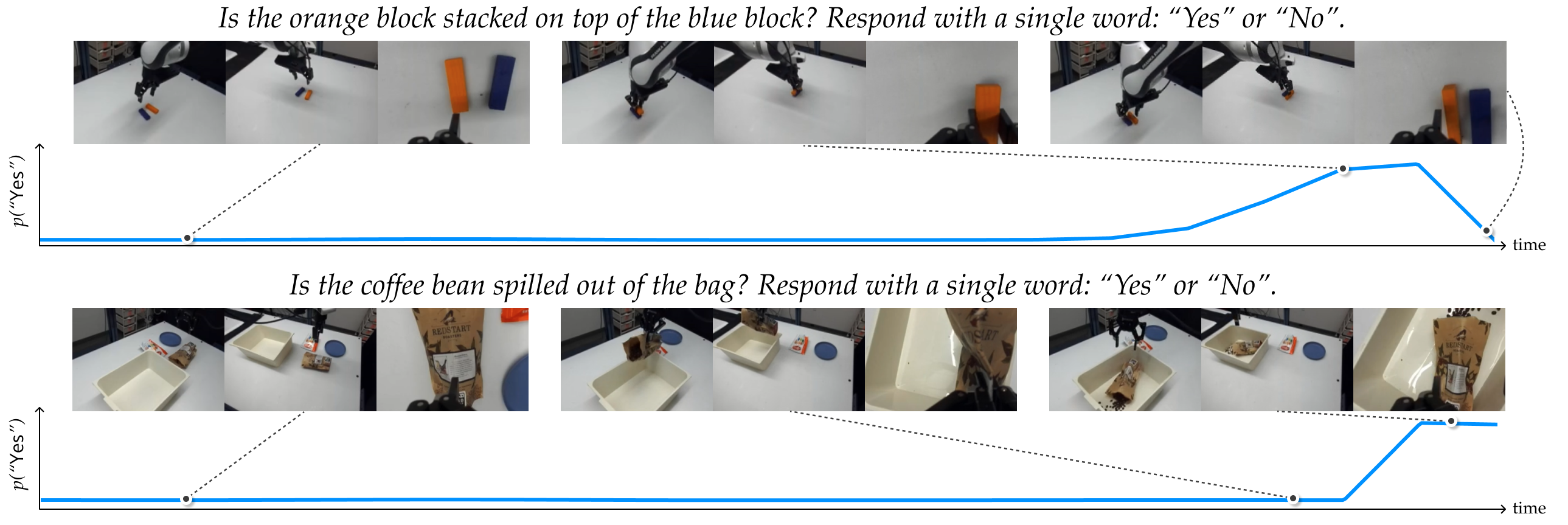}
    \caption{\small{\textit{Robotic Manipulation: VLM scores for video understanding.}The plot shows the token probability of answering ``Yes'' to a given prompt. The VLM takes videos from three camera views simultaneously and assigns high scores when the specified events occur, correctly detecting task-relevant moments.}}
    \label{fig:vlm-manipulation}\vspace{-0.2in}
\end{figure} 

\para{Robotic Manipulation}
In tabletop manipulation, our steering objective focuses on detecting task-failure events from generated videos. We use Qwen3-VL~\cite{Qwen3-VL} with a text prompt that defines the failure condition, and compute the score from the token probability of the target response, similar to recent generalizable reward-modeling approaches~\cite{chen2026topreward}. Fig.~\ref{fig:vlm-manipulation} shows that the model score identifies task-relevant events at the correct moments in generated videos. We provide the five-frame videos generated by the world model as input and backpropagate dense gradient signals through the video. We find that using all three camera views simultaneously is essential for reliable scoring: for the same scene and text prompt, using a single viewpoint leads to substantially poorer failure-event detection. We also find that CLIP-style models, including X-CLIP~\cite{ma2022x}, are less effective at understanding these manipulation scenes.

\subsection{Score-Distillation for Approximating Initial-Noise Gradients}
\label{app:score-distillation}

This section justifies the gradient approximation used in \eqref{eq:gradient-ascent-final}. We omit the conditioning variables \((\historyObs,\actionSequence)\) for readability. Let the deterministic sampler define the reverse trajectory
\[
    \mathbf{\data}^{\diffusionTime-1}
    =
    \denoising^{\diffusionTime}(\mathbf{\data}^{\diffusionTime}),
    \qquad
    \mathbf{\data}^{\diffusionTimeMax}=\bm{\noise},
    \qquad
    \mathbf{\data}^{0}=\wm_{\wmParam}(\bm{\noise}),
\]
where each denoising step follows the update in \eqref{eq:denoising}:
\[
    \denoising^{\diffusionTime}(\mathbf{\data}^{\diffusionTime})
    =
    \frac{\diffusionCoeff^{\diffusionTime - 1}}{\diffusionCoeff^{\diffusionTime}}
    \left(
        \mathbf{\data}^{\diffusionTime}
        -
        \noiseCoeff^{\diffusionTime}
        \denoise_{\wmParam}^{\diffusionTime}(\mathbf{\data}^{\diffusionTime})
    \right)
    +
    \noiseCoeff^{\diffusionTime - 1}
    \denoise_{\wmParam}^{\diffusionTime}(\mathbf{\data}^{\diffusionTime}).
\]
For a differentiable criterion \(\reward(\mathbf{\data}^{0})\), the exact initial-noise gradient is obtained by the chain rule:
\[
    \nabla_{\bm{\noise}}
    \reward(\mathbf{\data}^{0})
    =
    \left[
        \prod_{\diffusionTime=\diffusionTimeMax}^{1}
        \frac{
            \partial \denoising^{\diffusionTime}
        }{
            \partial \mathbf{\data}^{\diffusionTime}
        }
    \right]^{\!\top}
    \nabla_{\mathbf{\data}^{0}}
    \reward(\mathbf{\data}^{0}),
\]
where the product is ordered along the reverse denoising trajectory. The Jacobian of each step is
\[
    \frac{
        \partial \denoising^{\diffusionTime}
    }{
        \partial \mathbf{\data}^{\diffusionTime}
    }
    =
    \frac{\diffusionCoeff^{\diffusionTime - 1}}{\diffusionCoeff^{\diffusionTime}}
    \mathbf{I}
    +
    \left(
        \noiseCoeff^{\diffusionTime - 1}
        -
        \frac{\diffusionCoeff^{\diffusionTime - 1}}{\diffusionCoeff^{\diffusionTime}}
        \noiseCoeff^{\diffusionTime}
    \right)
    \mathbf{J}^{\diffusionTime}_{\denoise}, \, \text{where} \,
    \mathbf{J}^{\diffusionTime}_{\denoise}
    =
    \frac{
        \partial
        \denoise_{\wmParam}^{\diffusionTime}
        (\mathbf{\data}^{\diffusionTime})
    }{
        \partial \mathbf{\data}^{\diffusionTime}
    }.
\]
Thus, exact backpropagation requires multiplying by all denoiser Jacobians and storing the full denoising computation graph.

\para{Proportional-Jacobian approximation} Ahn et al.~\cite{ahn2024noise} empirically observe that the denoiser Jacobian is approximately diagonal and often behaves like a timestep-dependent multiple of the identity. Under the idealized approximation
\[
    \mathbf{J}^{\diffusionTime}_{\denoise}
    \approx
    \rho^{\diffusionTime}\mathbf{I},
\]
the sampler Jacobian becomes
\[
    \prod_{\diffusionTime=\diffusionTimeMax}^{1}
    \left[
        \frac{\diffusionCoeff^{\diffusionTime - 1}}{\diffusionCoeff^{\diffusionTime}}
        \mathbf{I}
        +
        \left(
            \noiseCoeff^{\diffusionTime - 1}
            -
            \frac{\diffusionCoeff^{\diffusionTime - 1}}{\diffusionCoeff^{\diffusionTime}}
            \noiseCoeff^{\diffusionTime}
        \right)
        \mathbf{J}^{\diffusionTime}_{\denoise}
    \right]
    \approx
    \beta\mathbf{I}, \, \text{where} \, 
    \beta
    :=
    \prod_{\diffusionTime=\diffusionTimeMax}^{1}
    \left[
        \frac{\diffusionCoeff^{\diffusionTime - 1}}{\diffusionCoeff^{\diffusionTime}}
        +
        \left(
            \noiseCoeff^{\diffusionTime - 1}
            -
            \frac{\diffusionCoeff^{\diffusionTime - 1}}{\diffusionCoeff^{\diffusionTime}}
            \noiseCoeff^{\diffusionTime}
        \right)
        \rho^{\diffusionTime}
    \right].
\]
Substituting this into the exact chain rule yields
\[
    \nabla_{\bm{\noise}}
    \reward(\wm_{\wmParam}(\bm{\noise}))
    \approx
    \beta\,
    \nabla_{\mathbf{\data}^{0}}
    \reward(\mathbf{\data}^{0}),
    \qquad
    \mathbf{\data}^{0}
    =
    \wm_{\wmParam}(\bm{\noise}).
\]
This is the approximation used in the main text: instead of differentiating through the sampler, we compute the criterion gradient with respect to the generated clean latent and use it as a scaled proxy for the initial-noise gradient.

\para{Diagonal-Jacobian variant} If the denoiser Jacobian is diagonal but not exactly proportional to the identity,
\[
    \mathbf{J}^{\diffusionTime}_{\denoise}
    \approx
    \operatorname{diag}(\bm{\rho}^{\diffusionTime}),
\]
then the same derivation gives a diagonal preconditioner
\[
    \nabla_{\bm{\noise}}
    \reward(\mathbf{\data}^{0})
    \approx
    \mathbf{B}\,
    \nabla_{\mathbf{\data}^{0}}
    \reward(\mathbf{\data}^{0}),
\]
where
\[
    \mathbf{B}
    =
    \prod_{\diffusionTime=\diffusionTimeMax}^{1}
    \operatorname{diag}\!\left[
        \frac{\diffusionCoeff^{\diffusionTime - 1}}{\diffusionCoeff^{\diffusionTime}}
        \mathbf{1}
        +
        \left(
            \noiseCoeff^{\diffusionTime - 1}
            -
            \frac{\diffusionCoeff^{\diffusionTime - 1}}{\diffusionCoeff^{\diffusionTime}}
            \noiseCoeff^{\diffusionTime}
        \right)
        \bm{\rho}^{\diffusionTime}
    \right].
\]
The scalar approximation \(\mathbf{B}\approx\beta\mathbf{I}\) is therefore a practical simplification: it assumes the diagonal entries vary slowly enough that their average effect can be absorbed into the optimization step size.

\para{Connection to stop-gradient MSD} Multistep score distillation~\cite{ahn2024noise} keeps the forward denoising trajectory unchanged but applies a stop-gradient operator to the denoiser outputs during backpropagation. In our notation, the stop-gradient version of one denoising step is
\[
    \denoising^{\diffusionTime}_{\mathrm{SG}}(\mathbf{\data}^{\diffusionTime})
    =
    \frac{\diffusionCoeff^{\diffusionTime - 1}}{\diffusionCoeff^{\diffusionTime}}
    \left(
        \mathbf{\data}^{\diffusionTime}
        -
        \noiseCoeff^{\diffusionTime}
        \operatorname{SG}\!\left[
            \denoise_{\wmParam}^{\diffusionTime}(\mathbf{\data}^{\diffusionTime})
        \right]
    \right)
    +
    \noiseCoeff^{\diffusionTime - 1}
    \operatorname{SG}\!\left[
        \denoise_{\wmParam}^{\diffusionTime}(\mathbf{\data}^{\diffusionTime})
    \right],
\]
where \(\operatorname{SG}[\cdot]\) is the identity in the forward pass and has zero derivative in the backward pass. Therefore, this removes \(\mathbf{J}^{\diffusionTime}_{\denoise}\) from the backward pass, leaving only the explicit linear dependence on \(\mathbf{\data}^{\diffusionTime}\):
\[
    \frac{
        \partial \denoising^{\diffusionTime}_{\mathrm{SG}}
    }{
        \partial \mathbf{\data}^{\diffusionTime}
    }
    =
    \frac{\diffusionCoeff^{\diffusionTime - 1}}{\diffusionCoeff^{\diffusionTime}}
    \mathbf{I}.
\]
Under the same proportional-Jacobian approximation, both the full-gradient and stop-gradient Jacobian products are scalar multiples of the identity:
\[
    \prod_{\diffusionTime=\diffusionTimeMax}^{1}
    \frac{
        \partial \denoising^{\diffusionTime}
    }{
        \partial \mathbf{\data}^{\diffusionTime}
    }
    \approx
    \beta\mathbf{I},
    \qquad
    \prod_{\diffusionTime=\diffusionTimeMax}^{1}
    \frac{
        \partial \denoising^{\diffusionTime}_{\mathrm{SG}}
    }{
        \partial \mathbf{\data}^{\diffusionTime}
    }
    =
    \left(
        \prod_{\diffusionTime=\diffusionTimeMax}^{1}
        \frac{\diffusionCoeff^{\diffusionTime - 1}}{\diffusionCoeff^{\diffusionTime}}
    \right)
    \mathbf{I}.
\]
Therefore, the stop-gradient gradient and the full sampler gradient differ primarily by timestep-dependent scalar factors. Our approximation uses this same principle, but applies it directly to optimize the initial noise: the clean-latent criterion gradient is used as a proxy direction for the initial-noise gradient, with the unknown scalar absorbed into \(\beta\) and the learning rate.

\begin{wrapfigure}{r}{0.45\textwidth}
    \centering
    \vspace{-0.1in} 
    \includegraphics[width=\linewidth]{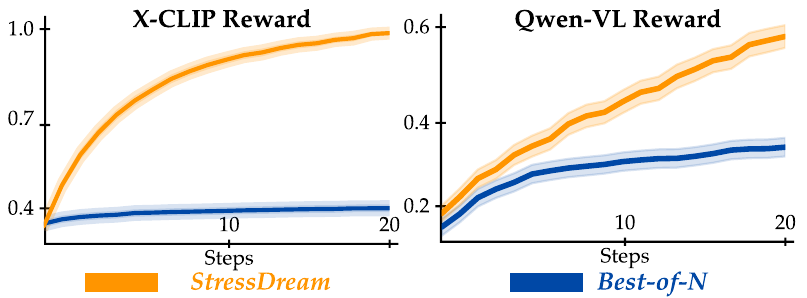} 
    \vspace{-0.22in}
    \caption{\small
    {
        \textit{Average VLM scores across optimization steps in \vista.} With the gradient approximation, \ours successfully increases the criterion values, demonstrating the effectiveness of the proposed approximation. 
    }
    }
    \vspace{-0.1in}\label{fig:appendix-gradient}
\end{wrapfigure} \para{Efficacy of Gradient Approximation for Noise Optimization} The noise-gradient approximation enables efficient gradient-based steering without backpropagating through the iterative denoising process of large-scale video diffusion models. However, it relies on the assumption that the Jacobian of the diffusion sampler is approximately diagonal~\cite{ahn2024noise}. Indeed, prior work on noise optimization for text-to-image generation often relies on one-step distilled generators to make noise optimization tractable without differentiating through the full iterative sampling process.

We therefore empirically evaluate whether this approximation provides a useful gradient signal in the initial-noise space for generating high-reward samples in the output space. Fig.~\ref{fig:appendix-gradient} shows the average VLM score during noise optimization for steering Vista~\cite{gao2024vista} generations in Sec.~\ref{sec:experiments-inner}. The score increases throughout optimization, indicating that the approximate gradient can improve the output-space VLM objective by updating only the initial noise. On an H100 GPU with \(80\)GB VRAM, computing the full gradient through the denoising trajectory is impractical even with gradient checkpointing~\cite{chen2016training}, whereas the proposed approximation makes the optimization feasible.

\begin{wrapfigure}{r}{0.4\textwidth}
    \centering
    \vspace{-0.0in} 
    \includegraphics[width=\linewidth]{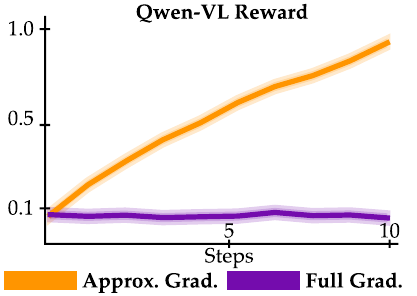} 
    \vspace{-0.22in}
    \caption{\small{\textit{Noise optimization with approximate vs.\ exact gradients.}}}
    \vspace{-0.1in}\label{fig:appendix-gradient-ctrlworld}
\end{wrapfigure}
For Ctrl-World~\cite{guo2025ctrl}, where the full noise gradient can be computed on an H100 GPU using gradient checkpointing~\cite{chen2016training}, we compare noise optimization with and without the gradient approximation. Fig.~\ref{fig:appendix-gradient-ctrlworld} shows that the approximate gradient successfully increases the VLM score, whereas optimization with the full gradient fails to improve it. We hypothesize two possible causes. First, backpropagating through \(50\) denoising steps may lead to inaccurate or vanishing gradients due to finite-precision computation. Second, while the VLM gradient is computed in the video space with relatively high numerical precision, the denoising process is typically run in lower precision due to memory constraints, which may further degrade gradient quality.

\subsection{Implementation Details}

We summarize the overall procedure of \ours in Algorithm~\ref{alg:noise_optim}. To improve optimization stability, we use both global gradient-norm clipping and per-coordinate gradient clipping. We additionally monitor the Euclidean norm of the noise and project it back to the typical shell, with norm \(\sqrt{\dimension}\), whenever it deviates by more than \(3.0\). This threshold is conservative: the standard deviation of the Gaussian norm is approximately \(0.707\) in high dimensions, as shown in Corollary~\ref{cor:dimension-independent-radius-width}, so the interval contains more than \(99.99\%\) of typical Gaussian samples. In addition, we omit momentum in the optimizer, since accumulated updates can easily push the noise outside the typical set.

\begin{algorithm}[H]
\caption{\ours}
\label{alg:noise_optim}

\SetKwInput{Input}{Input}
\SetKwInput{Initialize}{Initialize}

\Input{
    Action-conditioned video world model $\wm_{\wmParam}$, 
    Inference-time prompt $\prompt$, 
    Observation history $\historyObs$, 
    Action sequence $\actionSequence$, 
    Optimization steps $m$, 
    Learning rate $\eta$,
    Grad. approx. constant $\beta$
}

\Initialize{
    Initial noise $\bm{\noise}_0 \sim \mathcal{N}(0, \mathbf{I})$\;
    Best score $\reward^* \leftarrow -\infty$\;
    Best generation $\mathbf{\obs}^* \leftarrow \text{None}$\;
}

\For{$i = 0$ \KwTo $m-1$}{
    \tcp{Generate future observations from current noise}
    $\mathbf{\obs}_i = \wm_{\wmParam}(\bm{\noise}_i, \historyObs, \actionSequence)$\    

    \tcp{Evaluate Generation with VLMs (Sec.~\ref{sec:method-vlm})}
    $\reward_i = \log p^{\text{VLM}}(\texttt{yes}\mid \mathbf{\obs}_i, \prompt) - \log p^{\text{VLM}}(\texttt{no}\mid \mathbf{\obs}_i, \prompt)$ \
    
    \tcp{Compute Raw Gradient Components}
    $\mathbf{g}_{\text{vlm}} = \beta \cdot \nabla_{\mathbf{\obs}_i} \reward_i$\;
    $\mathbf{g}_{\text{reg}} = \nabla_{\bm{\noise}_i} \regularizer(\bm{\noise}_i)$\;
    $\mathbf{g}_{\text{total}} = \mathbf{g}_{\text{vlm}} + \mathbf{g}_{\text{reg}}$\;

    \tcp{1) Gradient clipping for each dimension}
    $\mathbf{g}_{\text{total}} = \text{clamp}(\mathbf{g}_{\text{total}}, -0.3, 0.3)$\;

    \tcp{2) Gradient norm clipping}
    \If{$\|\mathbf{g}_{\text{total}}\|_2 > 1.0$}{
        $\mathbf{g}_{\text{total}} = 1.0 \cdot \frac{\mathbf{g}_{\text{total}}}{\|\mathbf{g}_{\text{total}}\|_2}$\;
    }
    
    \tcp{Update Initial Noise via Gradient Ascent}
    $\bm{\noise}_{i+1} = \bm{\noise}_i + \eta \cdot \mathbf{g}_{\text{total}}$\;

    \tcp{Hard Normalize to Typical Shell if outside $\pm 3.0$ range}
    \If{$| \|\bm{\noise}_{i+1}\|_2 - \sqrt{\dimension} | > 3.0$}{
        $\bm{\noise}_{i+1} = \sqrt{\dimension} \cdot \frac{\bm{\noise}_{i+1}}{\|\bm{\noise}_{i+1}\|_2}$\;
    }
    
    \If{$\reward_i > \reward^*$}{
        $\reward^* = \reward_i$\;
        $\mathbf{\obs}^* = \mathbf{\obs}_i$\;
    }
}

\Return{Worst-case generation $\mathbf{\obs}^*$}
\end{algorithm}

\subsection{Time Complexity}\label{sec:appendix-time-complexity}

We analyze the time complexity of the proposed world model steering algorithm. Let \(\mathcal{T}_{\mathrm{denoise}}\) denote the cost of one denoising step and \(K\) the number of denoising steps used by the video world model. A single deterministic generation costs
\(
    \mathcal{T}_{\mathrm{gen}} = K \times \mathcal{T}_{\mathrm{denoise}}.
\)
Let \(\mathcal{T}_{\mathrm{verifier}}\) denote the cost of evaluating the verifier, such as a VLM-based criterion, on a generated video.

\begin{wraptable}{r}{0.6\textwidth}
    \vspace{-0.15in}
    \scriptsize
    \centering
    \setlength{\tabcolsep}{4pt}
    \renewcommand{\arraystretch}{1.15}
    \resizebox{1.0\linewidth}{!}{
    \begin{tabular}{lc}
        \toprule
        \textbf{Method} & \textbf{Time Complexity} \\
        \midrule
        Nominal generation
        &
        \(K\times\mathcal{T}_{\mathrm{denoise}}
        +
        \mathcal{T}_{\mathrm{verifier}}\)
        \\
        Best-of-\(N\)
        &
        \(N\times\left(
        K\times\mathcal{T}_{\mathrm{denoise}}
        +
        \mathcal{T}_{\mathrm{verifier}}
        \right)\)
        \\
        \ours
        &
        \(N\times\left(
        K\times\mathcal{T}_{\mathrm{denoise}}
        +
        2\,\mathcal{T}_{\mathrm{verifier}}
        \right)\)
        \\
        \ours w/o Grad. Approx.~(Sec.\ref{sec:grad-approx})
        &
        \(N\times\left(
        2K\times\mathcal{T}_{\mathrm{denoise}}
        +
        2\,\mathcal{T}_{\mathrm{verifier}}
        \right)\)
        \\
        \bottomrule
    \end{tabular}
    }
    \vspace{-0.05in}
    \caption{\small{Time complexity comparison. Here, \(K\) is the number of denoising steps, \(N\) is the number of sampled candidates or steering iterations, \(\mathcal{T}_{\mathrm{denoise}}\) is the cost of one denoising step, and \(\mathcal{T}_{\mathrm{verifier}}\) is the cost of one verifier evaluation.}}
    \label{tab:time-complexity}
    \vspace{-0.15in}
\end{wraptable}

Table~\ref{tab:time-complexity} compares nominal generation, Best-of-\(N\) sampling, and \ours under the same budget of \(N\) sampled candidates or steering iterations. Compared to Best-of-\(N\) sampling, \ours adds the cost of computing a verifier gradient at each steering iteration. With the gradient approximation in Sec.~\ref{sec:grad-approx}, however, \ours avoids backpropagating through the full denoising trajectory, so the additional cost is mainly the backward pass through the verifier, represented by the second \(\mathcal{T}_{\mathrm{verifier}}\) term. Without this approximation, differentiating through the \(K\)-step sampler adds a backward pass through the denoising trajectory, substantially increasing both computation and memory.

In practice, using Vista~\cite{gao2024vista} with \(K=50\) denoising steps on a single H100 GPU, a single generation takes approximately \(1\)--\(2\) minutes, while noise optimization with \(N=20\) iterations takes about \(30\) minutes. Since the dominant cost comes from video generation, the runtime of \ours is expected to improve with faster video world models, such as shortcut~\cite{frans2025one, hafner2025training} models.

\section{\textit{Naughty} Dubins Car}\label{sec:appendix-dubins}

\subsection{Implementation Details}
\para{Video World Model}
We implement a video world model for the Naughty Dubins car environment using a smaller-scale SVD-style latent diffusion architecture~\cite{blattmann2023stable} trained with the EDM formulation~\cite{karras2022elucidating}. Following \citet{seo2025uncertainty}, each observation is a rendered \(128\times128\times3\) RGB image showing the vehicle and the circular failure set at the center of the environment. 
\[
    \text{Encoder: } \latent_t = \encoder_\wmParam(\obs_t),
    \qquad
    \text{Transition: } \latent_{t+1} \sim \wm_\wmParam(\latent_{t+1}\mid \noise, \latent_t,\action_t),
    \qquad
    \text{Failure: } \hat{\ell}_t = \ell_\wmParam(\latent_t).
\]
\begin{wraptable}{r}{0.6\textwidth}
    \vspace{-0.15in}
    \scriptsize
    \centering
    \setlength{\tabcolsep}{4pt}
    \renewcommand{\arraystretch}{1.15}
    \resizebox{1.0\linewidth}{!}{
    \begin{tabular}{lc}
        \toprule
        \textbf{\textsc{World Model Hyperparameter}} & \textbf{\textsc{Value}} \\
        \midrule
        \textsc{Image Resolution}            & \(128 \times 128 \times 3\) \\
        \textsc{Latent Dimension}            & 1024 \\
        \textsc{Prediction Horizon \(\horizon\)} & 1 \\
        \textsc{Action Dimension}            & 1 (continuous) \\
        \textsc{VAE Base Channels}           & 32 \\
        \textsc{VAE Latent Channels}         & 4 \\
        \textsc{Denoiser Hidden Dims.}       & [32, 64, 128] \\
        \textsc{Denoising Steps}             & 5 \\
        \textsc{\(\sigma_{\min}\), \(\sigma_{\max}\)} & 0.002, 80.0 \\
        \textsc{EDM \(\rho\)}                & 7 \\
        \textsc{Batch Size}                  & 128 \\
        \textsc{Training Epochs}             & 100 \\
        \textsc{Learning Rate}               & \(1\times10^{-4}\) \\
        \bottomrule
    \end{tabular}
    }
    \vspace{-0.05in}
    \caption{\small{Dubins Car WM Hyperparameters.}}
    \label{tab:dubins_hyperparams}
    \vspace{-0.15in}
\end{wraptable} We train the model on an offline dataset of observation--action trajectories generated from random initial states and random actions, with each trajectory annotated by the ground-truth safety margin. Each trajectory terminates after \(T=100\) timesteps or earlier if the ground-truth \(x\)- or \(y\)-position leaves the environment bounds \([-1.5\,\mathrm{m},1.5\,\mathrm{m}]\). The encoder maps observations \(\obs_t\in\mathbb{R}^{H\times W\times C}\) to latent representations
\(\latent_t\in\mathbb{R}^{H/8\times W/8\times4}\), which are decoded back to RGB observations using a lightweight decoder. We train the latent autoencoder with a reconstruction objective and train the failure-margin predictor \(\ell_\wmParam\) using an \(L_2\) regression loss against the ground-truth safety margin. We then train the latent diffusion transition model \(\wm_\wmParam\) to generate the next latent state conditioned on the current latent state and action. In our experiments, we use one-step latent prediction \((\horizon=1)\), yielding a latent and noise dimension of \(1024\). At inference time, we use an Euler sampler with \(5\) denoising steps. The world-model hyperparameters are summarized in Tables~\ref{tab:dubins_hyperparams}.

\para{Evaluation Setup}
We evaluate steering on autoregressive rollouts of the video world model. Each rollout starts from the initial image of an evaluation trajectory and is conditioned on the corresponding action sequence. We assign a ground-truth label to each evaluation trajectory using Monte Carlo rollouts of the true uncertain dynamics in \eqref{eq:naughty-dynamics}. For each initial state and action sequence, we sample \(10{,}000\) ground-truth trajectories. We label the trajectory as \textit{positive} if at least one sampled trajectory enters the failure set, indicating that failure is possible. \begin{wraptable}{r}{0.6\textwidth}
    \vspace{-0.15in}
    \scriptsize
    \centering
    \setlength{\tabcolsep}{4pt}
    \renewcommand{\arraystretch}{1.15}
    \resizebox{1.0\linewidth}{!}{
    \begin{tabular}{lc}
    \toprule
    \textbf{\textsc{Steering Hyperparameter}} & \textbf{\textsc{Value}} \\
    \midrule
    \textsc{Optimization Iterations}      & 10 \\
    \textsc{Step Size \(\eta\) in \eqref{eq:gradient-ascent-final}} & 1.0 \\
    \textsc{Gradient Scaling Coefficient\(\beta\) in \eqref{eq:gradient-ascent-final}} & 10.0 \\
    \textsc{Norm Concentration Coefficient \(\lambda_1\)} & 1.0 \\
    \textsc{Isotropy Coefficient \(\lambda_2\)} & 0.5 \\
    \textsc{Isotropy Regularizer Subvector Size \(k\)} & 16 \\
    \textsc{Isotropy Regularizer Permutation Number} & 100 \\
    \textsc{Spectral Coefficient \(\lambda_3\)} & 5.0 \\
    \bottomrule
\end{tabular}
} \vspace{-0.05in}
\caption{\small Dubins Car Steering hyperparameters.}
\label{tab:dubins_opt_hyperparams} \vspace{-0.2in}
\end{wraptable} We label it as \textit{negative} if none of the sampled trajectories enter the failure set. This allows us to evaluate whether pessimistic steering identifies genuinely failure-prone trajectories while avoiding spurious failures on trajectories that are effectively safe under the true dynamics. We steer the generated rollout using the hyperparameters in Table~\ref{tab:dubins_opt_hyperparams}. Since this setup is small-scale, we use the full gradient without the approximation. This allows us to directly evaluate the effect of optimizing the initial noise and the proposed regularizers without introducing additional approximation error from the gradient estimator.

\para{Baseline: Classifier Guidance}
We also compare against classifier guidance~\cite{dhariwal2021diffusion,li2022upainting}, which modifies the denoising process at each diffusion step using the gradient of the failure criterion. In the world-model setting, however, such guidance is difficult to apply directly. The failure predictor \(\ell_{\wmParam}\) is trained on clean latent states, not on noisy intermediate diffusion states. Therefore, evaluating \(\ell_{\wmParam}\) directly on \(\latent_{t}^{\diffusionTime}\) can provide unreliable gradients, while exact guidance would require backpropagating through the denoiser at every denoising step.

To make classifier guidance tractable, we use a one-step clean-latent approximation~\cite{li2022upainting}. Given a noisy latent \(\latent_{t+1}^{\diffusionTime}\), we estimate the corresponding clean latent by inverting the forward diffusion path:
\[
    \hat{\latent}_{t+1}^{0\mid\diffusionTime}
    =
    \frac{
        \latent_{t+1}^{\diffusionTime}
        -
        \noiseCoeff^{\diffusionTime}
        \denoise_{\wmParam}^{\diffusionTime}
        \left(
            \latent_{t+1}^{\diffusionTime},
            \latent_t,
            \action_t
        \right)
    }{
        \diffusionCoeff^{\diffusionTime}
    }.
\]
The guidance gradient is then computed on this approximately clean latent:
\[
    \nabla_{\latent_{t+1}^{\diffusionTime}}
    \ell_{\wmParam}
    \left(
        \hat{\latent}_{t+1}^{0\mid\diffusionTime}
    \right)
    \approx
    \frac{1}{\diffusionCoeff^{\diffusionTime}}
    \nabla_{\hat{\latent}_{t+1}^{0\mid\diffusionTime}}
    \ell_{\wmParam}
    \left(
        \hat{\latent}_{t+1}^{0\mid\diffusionTime}
    \right),
\]
which avoids backpropagating through the denoising network, but it can introduce bias because the one-step clean prediction may be misaligned with the final generated latent after the full denoising trajectory. This issue is particularly pronounced in action-conditioned world models, where small guidance errors can push the generation toward high-failure latents that are inconsistent with the conditioning action sequence or the learned dynamics.

\begin{wrapfigure}{r}{0.25\textwidth}
    \centering
    \vspace{-0.2in}
    \includegraphics[width=\linewidth]{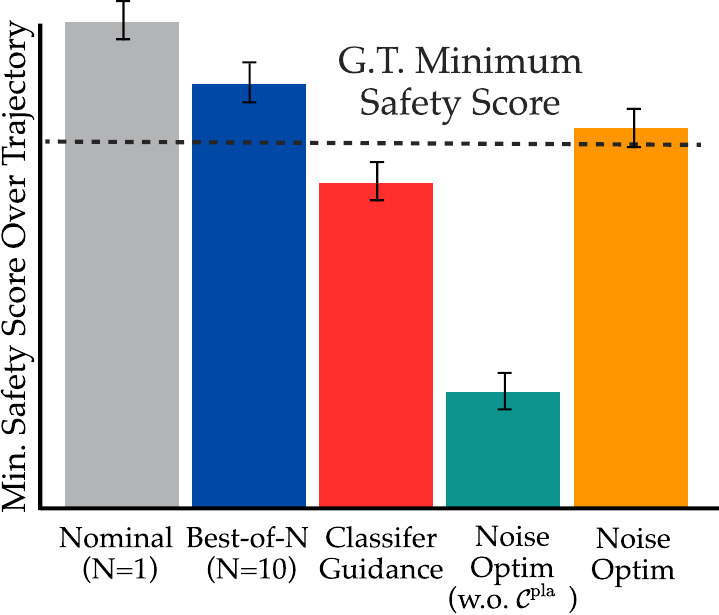} 
    \vspace{-0.22in}
    \caption{\small{\textit{Average minimum safety scores over imagined trajectories.}}}
    \vspace{-0.2in}
    \label{fig:app-dubins-pessimistic}
\end{wrapfigure}
\para{Results: Pessimistic Steering}
Fig.~\ref{fig:app-dubins-pessimistic} reports the average minimum safety score over imagined trajectories for the experiments in Sec.~\ref{sec:dubins}. \ours effectively steers imaginations toward lower safety scores that closely match the minimum achievable safety scores of the ground-truth system, suggesting that WM imaginations can provide meaningful signals for policy evaluation. In contrast, \classGuidance and \ours without the plausibility objective predict implausibly low safety scores, indicating that they can over-steer generations beyond plausible outcomes.

\subsection{Ablation: Optimistic Steering}
\begin{wrapfigure}{l}{0.3\textwidth}
    \centering
    \vspace{-0.1in}
    \includegraphics[width=\linewidth]{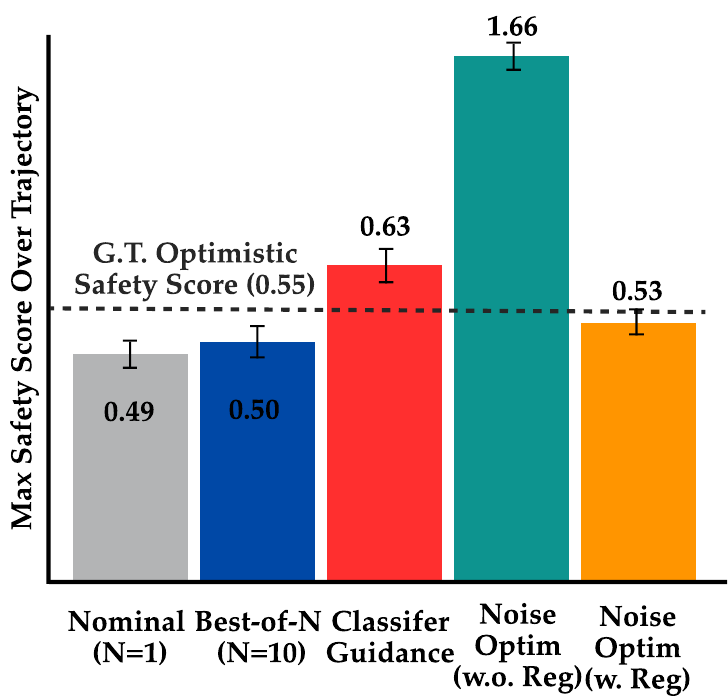} 
    \vspace{-0.22in}
    \caption{\small{\textit{Ablation: Optimistic Steering of Naughty Dubins Car.}}}
    \vspace{-0.2in}
    \label{fig:app-dubins-optimistic}
\end{wrapfigure}While Sec.~\ref{sec:dubins} focuses on pessimistic steering to detect potential entries into the failure set, we also ablate the opposite direction by performing optimistic steering. Specifically, we change the sign of the steering objective so that optimization seeks imaginations with higher safety scores. Fig.~\ref{fig:app-dubins-optimistic} shows the resulting maximum safety score for each imagined trajectory, together with the maximum safety score achievable under the ground-truth stochastic dynamics as a horizontal reference line. The results are consistent with the findings in Sec.~\ref{sec:dubins}: \ours can steer the world model effectively, here toward optimistic outcomes, while keeping the generated trajectories plausibly bounded by the ground-truth uncertainty. In contrast, removing the regularizers or using classifier guidance can also increase the imagined safety score, but often produces overly optimistic generations that exceed what is plausible under the ground-truth system dynamics.

\subsection{Ablation: Impact of plausibility objective}

We ablate the impact of the regularizers in Sec.~\ref{sec:high-dim-gaussian} on keeping generations plausible, or in-distribution. Unlike video world models trained on internet-scale data, this controlled setting gives us tractable access to the training distribution, allowing us to directly inspect whether steering keeps generations consistent with the training-time distribution. We report typicality scores of the optimized initial noise, including the chi-square log-probability of the squared noise norm, the isotropy regularizer value, and the spectral-whiteness regularizer value. We also remove each regularizer individually to evaluate its contribution.

\para{Empirical Density Estimation}
We additionally evaluate whether the generated samples themselves remain \textit{in-distribution} by fitting a flow-matching density surrogate in the latent observation space~\cite{xu2025can}. Let
\(\latent=\encoder_\wmParam(\obs)\) denote the latent representation of an observation from the training dataset. We train a flow-matching vector field \(q_\psi\) that transports training latents to standard Gaussian noise. Specifically, for \(w\sim\mathcal{N}(0,\mathbf{I})\) and \(\tau\in[0,1]\), define the linear interpolation
\[
    \latent^\tau
    =
    \latent
    +
    \tau(w-\latent).
\]
The vector field is trained to predict the displacement toward the Gaussian endpoint:
\[
    q_\psi(\latent^\tau,\tau)
    \approx
    w-\latent.
\]
Given a generated latent \(\hat{\latent}_t\), we estimate its corresponding Gaussian endpoint with a one-step prediction from \(\tau=0\):
\[
    \widehat{w}_t
    =
    \hat{\latent}_t
    +
    q_\psi(\hat{\latent}_t,0).
\]
If \(\hat{\latent}_t\) is in-distribution, then \(\widehat{w}_t\) should be close to a sample from \(\mathcal{N}(0,\mathbf{I})\). Since the Gaussian log-density is proportional to \(-\|\widehat{w}_t\|_2^2\), we use
\[
    s_{\mathrm{OOD}}(\hat{\latent}_t)
    :=
    \|\widehat{w}_t\|_2^2
\]
as the latent OOD score. A higher score indicates that the generated latent is farther from the distribution of encoded in-distribution observations. This density-based OOD score complements the noise-space regularizers by checking whether the resulting generated samples, not only the optimized initial noise, remain plausible.

\para{Results}
Table~\ref{tab:app-dubins-noise-metrics} reports the initial-noise regularizer values and the latent OOD scores of the generated samples. Norm concentration is measured by the minimum chi-square log-probability of the noise norm, where higher values indicate more typical Gaussian noise. Isotropy and spectral whiteness are regularizer values, where lower values are better.

\begin{wraptable}{r}{0.7\textwidth}
    \scriptsize
    \centering
    \setlength{\tabcolsep}{4pt}
    \renewcommand{\arraystretch}{1.15}
    \resizebox{1.0\linewidth}{!}{
    \begin{tabular}{lcccc}
        \toprule
        \textbf{Method}
        &
        \textbf{Norm Conc.} \(\uparrow\)
        &
        \textbf{Isotropy} \(\downarrow\)
        &
        \textbf{Spectral White.} \(\downarrow\)
        &
        \textbf{OOD Score} \(\downarrow\)
        \\
        \midrule
        Nominal
        & \(-7.6\)
        & \(0.02\)
        & \(0.03\)
        & \(167.1\)
        \\
        Best-of-\(N\)
        & \(-7.6\)
        & \(0.02\)
        & \(0.03\)
        & \(167.4\)
        \\
        Classifier Guidance
        & \(-7.6\)
        & \(0.02\)
        & \(0.03\)
        & \(194.5\)
        \\
        \ours
        & \({-6.3}\)
        & \({0.02}\)
        & \({0.00}\)
        & \({171.7}\)
        \\
        \midrule
        \multicolumn{5}{l}{\textit{Ablations without regularization}} \\
        \ours~(w/o all reg.)
        & \(-717.6\)
        & \(0.83\)
        & \(0.36\)
        & \(696.6\)
        \\
        \ours~(w/o \(\regularizer_{\mathrm{norm}}\))
        & \(-31.7\)
        & \(0.08\)
        & \(0.06\)
        & \(206.1\)
        \\
        \ours~(w/o \(\regularizer_{\mathrm{iso}}\))
        & \(-5.6\)
        & \(0.03\)
        & \(0.11\)
        & \(187.3\)
        \\
        \ours~(w/o \(\regularizer_{\mathrm{spec}}\))
        & \(-6.3\)
        & \(0.00\)
        & \(0.11\)
        & \(172.2\)
        \\
        \bottomrule
    \end{tabular}
    }
    \vspace{-0.05in}
    \caption{\small{Initial-noise regularizer values and latent OOD scores of generated samples. \ours with all regularizers keeps the optimized noise close to the Gaussian prior and produces low-OOD generations, whereas removing regularization leads to atypical noise and more out-of-distribution generations.}}
    \label{tab:app-dubins-noise-metrics}
    \vspace{-0.1in}
\end{wraptable}
\ours with all regularizers keeps the noise statistics close to nominal generation while maintaining a low OOD score. In contrast, steering without regularization produces highly atypical noise and substantially larger OOD scores, suggesting that the optimized generations leave the training distribution. Classifier guidance keeps the initial noise statistics close to nominal because it modifies the denoising trajectory rather than the initial noise, but it still increases the OOD score, indicating that path-level guidance can push generated samples toward implausible regions. Removing individual regularizers also increases the OOD score compared to using all regularizers, highlighting the importance of combining norm concentration, isotropy, and spectral whiteness.

\section{Experiment Details}\label{sec:app-sota}

\subsection{Video World Models}\label{sec:app-sota-vwm}

\para{Hyperparameters} Table~\ref{tab:app-vista-hyperparam} and Table~\ref{tab:app-ctrl-world-hyperparam} summarize the video world model and noise-optimization hyperparameters for autonomous driving and robotic manipulation, respectively. For the driving world model, Vista~\cite{gao2024vista}\footnote{\url{https://github.com/OpenDriveLab/Vista}}, a single generation produces \(25\) future frames at \(10\)Hz. The action input is an \(8\)-dimensional vector representing the top-down future ego trajectory as four future waypoints. For Ctrl-World~\cite{guo2025ctrl}\footnote{\url{https://github.com/Robert-gyj/Ctrl-World}}, videos are subsampled to \(5\)Hz, and the action input consists of joint-position and gripper commands. Since each prediction generates \(5\) future frames, the action dimension is \(40=8\times\horizon\). To imagine longer manipulation trajectories, we use autoregressive generation, where the last generated frame is used as the initial conditioning frame for the next prediction window.

\begin{table}[ht]
\centering
\begin{minipage}[t]{0.48\linewidth}
    \vspace{0pt}
    \scriptsize
    \centering
    \setlength{\tabcolsep}{4pt}
    \renewcommand{\arraystretch}{1.15}
    \resizebox{\linewidth}{!}{
    \begin{tabular}{lc}
    \toprule
    \textbf{\textsc{Hyperparameter}} & \textbf{\textsc{Value}} \\
    \midrule
    \textsc{Number of Cameras}            & 1 \\
    \textsc{Image Resolution}             & \(576 \times 1024 \times 3\) \\
    \textsc{Video Frequency}              & 10 Hz \\
    \textsc{Prediction Horizon}           & 25 \\
    \textsc{Noise Dimension}              & 921,600 \\
    \textsc{Action Dimension}             & 8 \\
    \textsc{Denoising Steps}              & 50 \\
    \textsc{CFG Scale}                    & 2.5 \\
    \textsc{Optimization Iterations}      & 20 \\
    \textsc{Step Size \(\eta\)}           & 1.0 \\
    \textsc{Gradient Scaling \(\beta\)}   & 300.0 \\
    \textsc{Norm Coefficient \(\lambda_1\)} & 0.5 \\
    \textsc{Isotropy Coefficient \(\lambda_2\)} & 10.0 \\
    \textsc{Isotropy Subvector Size \(k\)} & 192 \\
    \textsc{Isotropy Permutations}        & 1,000 \\
    \textsc{Spectral Coefficient \(\lambda_3\)} & 100.0 \\
    \bottomrule
    \end{tabular}
    }\vspace{0.05in}
    \caption{\small World model and steering hyperparameters for Vista~\cite{gao2024vista}.}
    \label{tab:app-vista-hyperparam}
\end{minipage}
\hfill
\begin{minipage}[t]{0.46\linewidth}
    \vspace{0pt}
    \scriptsize
    \centering
    \setlength{\tabcolsep}{4pt}
    \renewcommand{\arraystretch}{1.15}
    \resizebox{\linewidth}{!}{
    \begin{tabular}{lc}
    \toprule
    \textbf{\textsc{Hyperparameter}} & \textbf{\textsc{Value}} \\
    \midrule
    \textsc{Number of Cameras}            & 3 \\
    \textsc{Image Resolution}             & \(192 \times 320 \times 3\) \\
    \textsc{Video Frequency}              & 5 Hz \\
    \textsc{Prediction Horizon}           & 5 \\
    \textsc{Noise Dimension}              & 57,600 \\
    \textsc{Action Dimension}             & \(40\;(8\times\horizon)\) \\
    \textsc{Denoising Steps}              & 50 \\
    \textsc{CFG Scale}                    & 2.0 \\
    \textsc{Optimization Iterations}      & 10 \\
    \textsc{Step Size \(\eta\)}           & 1.0 \\
    \textsc{Gradient Scaling \(\beta\)}   & 100.0 \\
    \textsc{Norm Coefficient \(\lambda_1\)} & 0.2 \\
    \textsc{Isotropy Coefficient \(\lambda_2\)} & 0.1 \\
    \textsc{Isotropy Subvector Size \(k\)} & 240 \\
    \textsc{Isotropy Permutations}        & 1,000 \\
    \textsc{Spectral Coefficient \(\lambda_3\)} & 100.0 \\
    \bottomrule
    \end{tabular}
    }\vspace{0.05in}
    \caption{\small World model and steering hyperparameters for Ctrl-World~\cite{guo2025ctrl}.}
    \label{tab:app-ctrl-world-hyperparam}
\end{minipage}
\vspace{-0.2in}
\end{table}

\para{Fine-tuning}
We find that fine-tuning the base checkpoints of the video world models is important for plausibly imagining task-relevant rare events, such as collisions or coffee-bean spills, and for aligning the train- and test-time distributions to improve video quality. While the base models are trained on diverse datasets, such as nuScenes~\cite{caesar2020nuscenes} or DROID~\cite{khazatsky2024droid}, their generated futures can suffer from degraded dynamics accuracy and visual quality when evaluated on our specific scenarios. We therefore fine-tune each video world model for a small number of iterations on task-relevant data.

\begin{table}[ht]
    \centering
    \small
    \setlength{\tabcolsep}{4pt}
    \renewcommand{\arraystretch}{1.15}
    \resizebox{0.95\linewidth}{!}{
    \begin{tabular}{lcccccc}
    \toprule
    \textbf{Task}
    &
    \textbf{Block Stack}
    &
    \textbf{Knife Put}
    &
    \textbf{Stacked Utensil Pick}
    &
    \textbf{Coffee Bean Pour}
    &
    \textbf{Open Coffee Bag}
    &
    \textbf{Open Candy Bag}
    \\
    \midrule
    \textbf{\# Trajectories}
    & \(250\)
    & \(170\)
    & \(200\)
    & \(100\)
    & \(170\)
    & \(150\)
    \\
    \bottomrule
    \end{tabular}
    }\vspace{0.05in}
    \caption{\small Number of teleoperation trajectories used to fine-tune video world model.}
    \label{tab:app-manipulation-traj-num}
    \vspace{-0.2in}
\end{table}

For autonomous driving, we fine-tune the video world model using a subset of the PhysicalAI-Autonomous-Vehicles (PAI-AV) dataset~\cite{nvidia_physicalai_av}, nuScenes~\cite{caesar2020nuscenes}, and the Nexar Collision Prediction Dataset~\cite{moura2025nexar}. We train for \(13{,}750\) iterations with a batch size of \(256\). For manipulation, we fine-tune Ctrl-World using teleoperation trajectories that include both successful and failure cases. We use a learning rate of \(1\times10^{-6}\), a batch size of \(64\), and train for \(10{,}000\) iterations. Table~\ref{tab:app-manipulation-traj-num} summarizes the number of teleoperation trajectories used for each manipulation task.

\newcommand{\vlmprompt}[1]{%
    \noindent\textit{VLM prompt:}
    {\small\ttfamily #1}%
    \vspace{-0.05in}
}

\subsection{Robotic Manipulation: Task Details}\label{sec:appendix-manipulation-details}

We describe the manipulation task setups, the criteria used to determine success or failure, and the language prompts used for steering the generated futures. Visual examples of the task setups are shown in Fig.~\ref{fig:pi-qualitative} and on the \href{https://stressdream.github.io/}{project website}.

\para{Block Stack}
Two Jenga blocks, one orange and one blue, are placed on the table. The robot is tasked with stacking the orange block on top of the blue block. A trajectory is considered a \textit{success} if the orange block remains on top of the blue block, and a \textit{failure} if the robot fails to stack the block or if the orange block falls off after stacking.
\vlmprompt{Are the two blocks stacked? Respond with a single word: Yes or No.}

\para{Knife Put}
A lightweight plastic bowl and a knife are placed on the table. The robot is tasked with picking up the knife and placing it inside the bowl without causing either object to fall. This task is challenging because the knife has an uneven center of mass and a complex shape, while the lightweight bowl can easily tip over. A trajectory is considered a \textit{success} if the knife is placed inside the bowl and remains there, and a \textit{failure} if the knife falls out or the bowl tips over.
\vlmprompt{Is the knife placed inside of the blue bowl without falling out? Respond with a single word: Yes or No.}

\para{Stacked Utensil Pick}
A blue knife and a red spoon are overlapped on a green plate, with the red spoon placed on top of the blue knife. The robot is instructed to pick up the blue knife while keeping the red spoon on the plate. A trajectory is considered a \textit{success} if the blue knife is successfully picked up and the spoon remains on the plate, and a \textit{failure} if the blue knife is not picked up or if the spoon falls off the plate. The robot should therefore remove the lower utensil while safely leaving the upper utensil on the plate.
\vlmprompt{Is the red spoon in the green plate? Respond with a single word: Yes or No.}

\para{Coffee Bean Pour}
A bowl is on top of a coffee scale, and a cup contains coffee beans. The robot is tasked with pouring the coffee beans from the cup into the bowl on the scale. A trajectory is considered a \textit{success} if the coffee beans are poured into the bowl without spilling, and a \textit{failure} if any coffee beans spill onto the table.
\vlmprompt{Is the coffee bean spilled on the table? Respond with a single word: Yes or No.}

\para{Open Coffee Bag}
An open coffee bag with a variable and partially unobservable amount of coffee beans inside is on the table, along with a container. The robot is instructed to place the coffee bag inside the container. A trajectory is considered a \textit{failure} if any coffee beans spill out of the bag, either into the container or onto the table. Thus, the robot must avoid spilling while lifting, carrying, and placing the bag into the container.
\vlmprompt{Is the coffee bean spilled? Respond with a single word: Yes or No.}

\para{Open Candy Bag}
An open gummy bear candy bag is on the table, and the robot is instructed to place it inside the container without spilling. Compared to coffee beans, gummy bears are stickier and heavier, creating a smaller risk of spilling out of the bag. A trajectory is considered a \textit{success} if the gummy bears remain inside the bag, and a \textit{failure} if any gummy bears spill out.
\vlmprompt{Does candy spill from an open bag? Respond with a single word: Yes or No.}

\subsection{Autonomous Driving: Evaluation Data Curation}
\label{sec:app-av-curation}

To evaluate steering for autonomous driving video world models, we curate an evaluation set from the PAI-AV dataset~\cite{nvidia_physicalai_av}. We focus on scenarios in which the future evolution is uncertain or interaction-dependent, such as pedestrian crossing, vehicle merging, traffic-light changes, and stop-sign behavior. Table~\ref{tab:app-av-curation} summarizes the event categories and the verifier prompts used for Qwen-VL~\cite{Qwen-VL} and X-CLIP~\cite{ma2022x}. For X-CLIP, the first text description is treated as the positive event description, while the remaining descriptions are used as negative alternatives.

To retrieve candidate clips from the large-scale dataset, we first extract VLM reasoning traces from Alpamayo-R1~\cite{wang2025alpamayo} on PAI-AV clips. We then embed these traces using Sentence-BERT~\cite{reimers-2019-sentence-bert} and retrieve the top-\(k\) clips whose embeddings are most similar to text descriptions of each event category. Human annotators then verify whether the target event occurs in each retrieved clip and label the event occurrence time. Since the video world model generates \(2.5\)s futures, we select the frame \(2.5\)s before the annotated event time as the initial observation for evaluation. We similarly curate an evaluation set of \(200\) collision clips from the Nexar Collision Prediction Dataset~\cite{moura2025nexar} by randomly sampling from the provided collision annotations.

\begin{table*}[t]
\centering
\scriptsize
\setlength{\tabcolsep}{3pt}
\renewcommand{\arraystretch}{1.15}
\resizebox{\textwidth}{!}{
\begin{tabular}{p{0.12\textwidth}p{0.53\textwidth}p{0.35\textwidth}}
\toprule
\textbf{Event Category}
&
\textbf{Qwen-VL Prompt}
&
\textbf{X-CLIP Text Descriptions}
\\
\midrule

Pedestrian Crossing
&
You are a driving assistant interpreting sequential frames from a car's front camera. There is a pedestrian in front of the ego vehicle. Is it true that a pedestrian crosses in front of the ego vehicle? Reply with exactly one word: 'Yes' if a pedestrian crosses in front of the ego vehicle, or 'No' if a pedestrian crossing in front of the ego vehicle is not happening.
&
\textbf{Positive:} A pedestrian in front of the ego vehicle is crossing the road.

\textbf{Negative:} A pedestrian in front of the ego vehicle is not crossing the road.

\textbf{Negative:} There is no pedestrian in front of the ego vehicle.
\\
\midrule

Cyclist Crossing Crosswalk
&
You are an expert driving assistant interpreting sequential frames from a car's front camera. Does a cyclist in front of the ego vehicle cross a crossway? Reply with exactly one word: 'Yes' if a cyclist crosses a crossway, or 'No' if a cyclist crosses a crossway is not happening.
&
\textbf{Positive:} A cyclist in front of the ego vehicle crosses a crossway.

\textbf{Negative:} A cyclist in front of the ego vehicle does not cross a crossway.
\\
\midrule

Vehicle Merging from Adjacent Lane
&
You are an expert driving assistant interpreting sequential frames from a car's front camera. Does a vehicle in the [left$\mid$right] lane merge into the ego vehicle's lane? Reply with exactly one word: 'Yes' if a vehicle in the [left$\mid$right] lane merges into the ego vehicle's lane, or 'No' if a vehicle in the [left$\mid$right] lane merges into the ego vehicle's lane is not happening.
&
\textbf{Positive:} A vehicle in the [left$\mid$right] lane merges into the ego vehicle's lane.

\textbf{Negative:} A vehicle in the [left$\mid$right] lane does not merge into the ego vehicle's lane.

\textbf{Negative:} A vehicle in the [left$\mid$right] lane does not move.
\\
\midrule

Opposing Vehicle Stops at Intersection
&
You are an expert driving assistant interpreting sequential frames from a car's front camera. Does an opposing vehicle at the intersection stop? Reply with exactly one word: 'Yes' if an opposing vehicle at the intersection stops, or 'No' if an opposing vehicle at the intersection stops is not happening.
&
\textbf{Positive:} An opposing vehicle at the intersection stops.

\textbf{Negative:} An opposing vehicle at the intersection does not stop.

\textbf{Negative:} An opposing vehicle at the intersection keeps moving.
\\
\midrule

Lead Vehicle Closing Distance
&
You are an expert driving assistant interpreting sequential frames from a car's front camera. Is the lead vehicle getting closer to the view of the front camera? The leading vehicle is the vehicle ahead in view only on the same lane, not in the other lane. Reply with exactly one word: 'Yes' if the lead vehicle is getting closer to the view of the front camera, or 'No' if the lead vehicle is not getting closer to the view of the front camera.
&
\textbf{Positive:} Ego vehicle is getting closer to the leading vehicle

\textbf{Negative:} The leading vehicle is getting further away from the ego vehicle

\textbf{Negative:} The distance between the ego vehicle and the leading vehicle is not changing
\\
\midrule

Lead Vehicle Pulling Away
&
You are an expert driving assistant interpreting sequential frames from a car's front camera. Is the lead vehicle getting further from the view of the front camera? The leading vehicle is the vehicle ahead in view only on the same lane, not in the other lane. Reply with exactly one word: 'Yes' if the lead vehicle is getting further from the view of the front camera, or 'No' if the lead vehicle is not getting further from the view of the front camera.
&
\textbf{Positive:} The leading vehicle is getting further away from the ego vehicle

\textbf{Negative:} Ego vehicle is getting closer to the leading vehicle

\textbf{Negative:} The distance between the ego vehicle and the leading vehicle is not changing
\\
\midrule

Traffic Signal Changing
&
You are an expert driving assistant interpreting sequential frames from a car's front camera. Initially, the [red$\mid$orange$\mid$green] traffic light is on. Is it true that, in the driving scene, the traffic signal visible in the front camera is [red$\mid$orange$\mid$green]? Reply with exactly one word: 'Yes' if the traffic signal is [red$\mid$orange$\mid$green], or 'No' if the traffic signal is not [red$\mid$orange$\mid$green].
&
\textbf{Positive:} The traffic signal is [red$\mid$orange$\mid$green].

\textbf{Negative:} The traffic signal is [red$\mid$orange$\mid$green].

\textbf{Negative:} The traffic signal is [red$\mid$orange$\mid$green].

\textbf{Negative:} The traffic signal is off.
\\
\midrule

Front Vehicle Stop-Sign Violation
&
You are an expert driving assistant interpreting sequential frames from a car's front camera. In the driving scene, the ego vehicle is driving on the road, following the front car. Is it true that, in the driving scene, the front car did not stop at the stop sign? Reply with exactly one word: 'Yes' if a front car did not stop at the stop sign, or 'No' if a front car stopped at the stop sign.
&
\textbf{Positive:} The vehicle in front does not stop at the stop sign.

\textbf{Negative:} The vehicle in front stops at the stop sign.
\\
\midrule

Ego-Vehicle Collision
&
You are a driving safety assistant analyzing sequential frames from a dashcam. The clip shows 2.5 seconds of a driving scene recording. Is a collision or crash between the ego vehicle and another vehicle occurring? A collision means the ego vehicle crashes into another vehicle visible in the front. Reply with exactly one word: 'Yes' if the ego vehicle collides with another vehicle, or 'No' if the situation is safe.
&
\textbf{Positive:} The ego vehicle collides with another vehicle.

\textbf{Negative:} The ego vehicle does not collide with another vehicle.

\textbf{Negative:} There is no collision between the ego vehicle and another vehicle.\\

\bottomrule
\end{tabular}
}
\caption{\small Autonomous driving event categories and verifier prompts used for evaluation data. For X-CLIP, the first description is used as the positive event text and the remaining descriptions are used as negative alternatives.}
\label{tab:app-av-curation} \vspace{-0.25in}
\end{table*}

\subsection{Evaluation Metrics}\label{sec:app-evaluation-metrics}

\para{Autonomous Driving}
To evaluate steering, we use held-out metrics that are not used during optimization. This is important because a steered generation can increase the optimization score without producing a meaningful or plausible change in the video. We therefore evaluate two complementary aspects: (\romannumeral 1) \textit{target alignment}, i.e., whether the generated video clearly depicts the target event described by the text prompt, and (\romannumeral 2) \textit{plausibility}, i.e., whether the generated video remains physically reasonable and free from severe visual artifacts that indicate out-of-distribution generation.

For autonomous driving, we adopt WorldModelBench (WMB)~\cite{Li2025WorldModelBench}, which evaluates video world-model generations using both text-prompt alignment and video-quality criteria. We use WMB because it includes driving data in its evaluation distribution and, empirically, provides meaningful scores for whether generated driving videos both match the target prompt and remain visually plausible. WMB reports three scores: instruction following \((0\text{--}3)\), physics adherence \((0\text{--}5)\), and commonsense consistency \((0\text{--}2)\). We use the instruction-following score to measure target alignment, and use physics adherence and commonsense consistency as proxies for generation plausibility.

In addition, we report a target-alignment score from an external VLM judge. This judge is used only for evaluation and is not used during steering. We use the following prompt template with Gemini~\cite{team2023gemini} to obtain a \(0\text{--}10\) alignment score:
\begin{quote}
\scriptsize\ttfamily
You are an expert evaluator of autonomous driving world models.  
Your ONLY task is to judge how well the video follows the text instruction.  
Ignore visual quality issues such as blur, flickering, warping, or low resolution: these are expected artifacts of world-model generation and must NOT affect your score.

TEXT DESCRIPTION: \textbraceleft prompt\textbraceright

SCORING RUBRIC (0-10):

10: The video is aligned with the text description clearly and completely.  

7-9: The video is aligned with the text description but with minor deviations (e.g., slightly wrong direction or timing).  

4-6: The video is vaguely related to the text description, but the specific action does not happen.  

1-3: The video is completely unrelated to the text description.  

0: The video content is completely unrelated to the text description.
\end{quote}

\para{Robotic Manipulation} We use human evaluation to determine whether each generated trajectory corresponds to task success or failure. We additionally report scores from Robometer~\cite{liang2026robometer}, a general-purpose robotic reward model. However, we rely on human evaluation as the primary metric for a more accurate assessment of generated trajectories, since reward models provide continuous scores and require prompt engineering to infer task success. 

\subsection{Fine-tuning \texorpdfstring{\(\pi_{0.5}\)-\textsc{droid}}{pi0.5-DROID} Policy}

We fine-tune the \(\pi_{0.5}\)-\textsc{droid} policy\footnote{\url{https://github.com/Physical-Intelligence/openpi}} using \(40\) expert trajectories per task with a weighted flow-matching objective~\cite{guo2026vlaw}. After steering the imagined rollout of each trajectory toward task failure, we assign a weight of \(1.0\) to trajectories classified as successful and a weight of \(0.1\) to trajectories steered to failure. We use two camera views, consisting of a wrist camera and one third-person camera. The weights are applied both to the fine-tuning loss and to dataset sampling. We use joint-position actions for fine-tuning, although the base policy was originally trained with joint-velocity actions, because our controller~\cite{zhu2022viola} does not support joint-velocity control. We fine-tune the pretrained checkpoint for \(10\)k steps on a single H100 GPU. At inference time, we use an open-loop action horizon of \(16\). We found that training a diffusion policy~\cite{chi2024diffusionpolicy} from scratch performs poorly in this setting, likely due to the contact-rich nature of our tasks and the limited size of the fine-tuning dataset. In contrast, fine-tuning \(\pi_{0.5}\)-\textsc{droid} produces substantially better behavior and clearly imitates the expert demonstrations after fine-tuning.

\section{Additional Results}\label{sec:app-additional-results}

\subsection{Robust Policy Evaluation via Video World Model Steering}\label{sec:appendix-results-policy-eval}

In this section, we provide additional results for Sec.~\ref{sec:experiments-inner}. The evaluation focuses on steering video world-model generations toward text-defined failure events in each scene, enabling \textit{robust} evaluation of policy-proposed action sequences through pessimistic world-model imaginations. We measure both target alignment, i.e., how well the generated videos match the failure prompt, and plausibility, i.e., whether the generations remain visually and physically reasonable. Full results, including videos and interactive demos, are available on the \href{https://stressdream.github.io/}{project website}.

\begin{figure}[ht]
    \centering
    \includegraphics[width=1.0\linewidth]{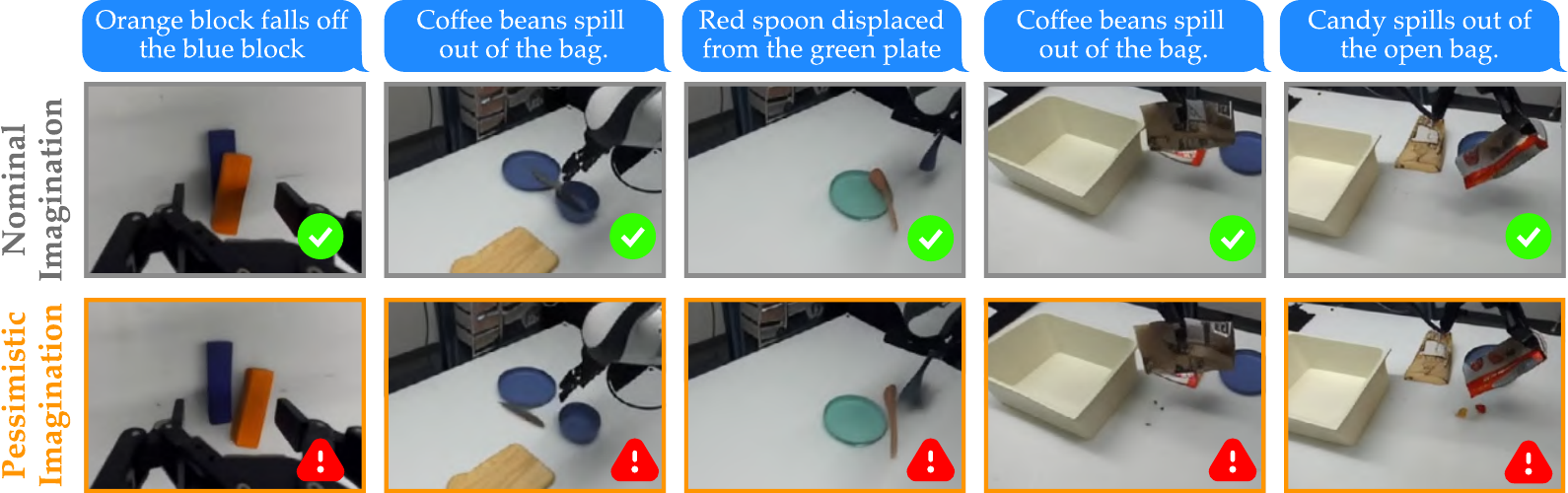}
    \vspace{-0.2in}
    \caption{\small{\textit{Additional Results: Nominal Generation vs.\ Steered Generation.} Given the same observation history and action sequence, \ours steers the video world model toward plausible task-failure events when failure is possible, whereas nominal generation often misses these failure modes.}}
    \label{fig:app-inner-ctrlworld}
    \vspace{-0.2in}
\end{figure}

\begin{figure}[ht]
    \centering
    \includegraphics[width=1.0\linewidth]{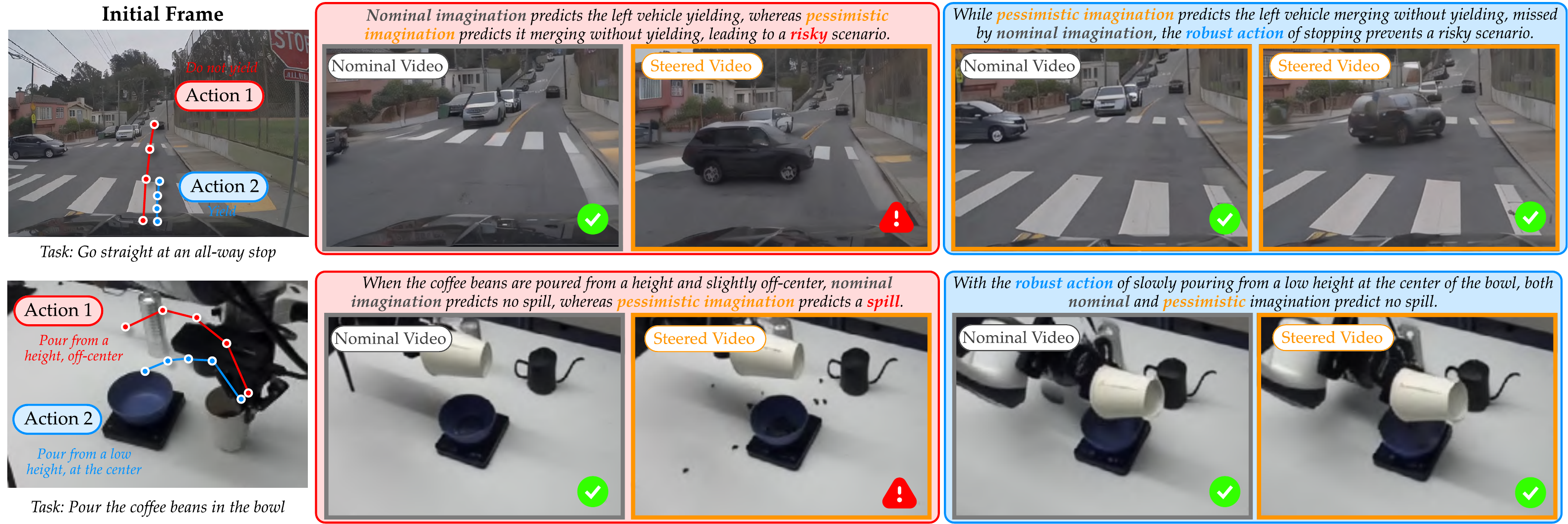}
    \vspace{-0.2in}
    \caption{\small{\textit{Robust Policy Evaluation with Video World Models.} Steering the video world model enables robust detection of safety-critical or task-failure events in imagined futures that nominal generation may miss, thereby enabling the selection of \textit{robust} action sequences that remain safe even under worst-case imaginations.}}
    \label{fig:app-inner-ctrlworld-planning}
    \vspace{-0.1in}
\end{figure}

\para{Robust Policy Evaluation with Pessimistic Imaginations}
Fig.~\ref{fig:app-inner-ctrlworld} shows additional results for the inner optimization in \eqref{eq:robust_control} on robotic manipulation tasks. For the same observation history and action sequence, nominal imaginations often do not predict failure events, whereas \ours generates plausible task-failure futures from the same input. This enables robust policy evaluation using the video world model by detecting possible failures through synthetic trajectories, without requiring repeated hardware rollouts. In contrast, nominal generation may miss such failures because it can sample non-failing futures even when failure is possible. Similarly, Fig.~\ref{fig:app-inner-ctrlworld-planning} shows that, across different candidate action sequences, \ours with pessimistic steering can reveal potential failures or safety-critical events in the imagined futures, while robust action sequences remain safe even under pessimistic imagination. This supports robust evaluation and action selection by using the video world model as a predictive model for planning and control.

\begin{wrapfigure}{r}{0.4\textwidth}
    \centering
    \vspace{-0.12in}
    \includegraphics[width=\linewidth]{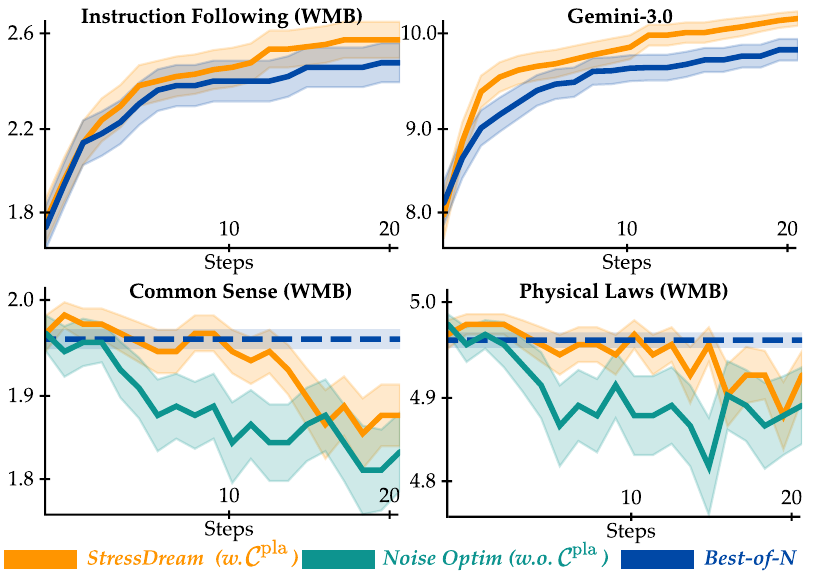} 
    \vspace{-0.22in}
    \caption{\small{\textit{Quantitative}: Driving WM.}}
    \vspace{-0.2in}
    \label{fig:app-vista-quantitative}
\end{wrapfigure}
\para{Detailed Results: Driving Video World Model}
Fig.~\ref{fig:app-vista-quantitative} shows the full quantitative results for driving world-model~\cite{gao2024vista} steering, evaluated with WorldModelBench (WMB)~\cite{Li2025WorldModelBench} and Gemini-3.0~\cite{team2023gemini}. The top two plots measure target alignment between the generated videos and the steering text prompts, while the bottom two plots report WMB video-quality metrics. Across both alignment metrics, \ours with noise optimization more effectively increases target-event alignment scores than the baselines with random generation, indicating that the proposed steering procedure reliably guides video world-model generation toward the desired events. At the same time, the proposed typical set constraint helps maintain plausibility: although video quality slightly decreases at later optimization steps, ours consistently preserves better quality in both common sense scores and physical laws scores than steering without regularization.

\para{Typical Set Constraints Preserve Plausibility of Steered Video Generations}
We have evaluated the plausibility of steered generations on VLM-based scores, such as the commonsense and physical-law scores in WorldModelBench~\cite{Li2025WorldModelBench}. To further assess the role of the plausibility objective in Sec.~\ref{sec:high-dim-gaussian}, we additionally evaluate generated driving videos using WorldLens~\cite{liang2025worldlens}. We focus on generation-quality metrics that assess whether generated videos are visually realistic, temporally stable, and semantically consistent, without requiring multi-view images or ground-truth labels. Unlike VLM-based scores, these metrics are computed from analytic feature-based measures\footnote{WorldLens also introduces a learned video score trained from large-scale human annotations of world realism, physical plausibility, and behavioral safety; however, since this model is not yet released.}.

\begin{wrapfigure}{l}{0.6\textwidth}
    \centering
    \vspace{-0.1in}
    \includegraphics[width=\linewidth]{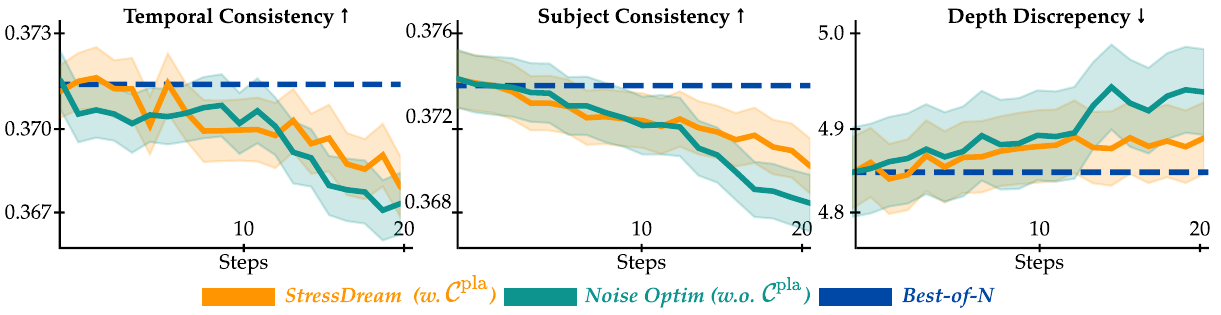} 
    \vspace{-0.2in}
    \caption{\small{\textit{Video generation quality of driving world-model generations evaluated with WorldLens~\cite{liang2025worldlens}.}}}
    \vspace{-0.2in}
    \label{fig:app-vista-worldlens}
\end{wrapfigure}Specifically, we evaluate \textit{Subject Consistency}, \textit{Depth Discrepancy}, and \textit{Temporal Consistency}. Subject Consistency uses DINO features~\cite{oquab2023dinov2} to measure whether dynamic subjects maintain consistent texture, shape, and structure over time. Depth Discrepancy measures the temporal stability of depth representations inferred from generated videos using DepthAnything~\cite{chen2025video}. Temporal Consistency measures frame-to-frame stability in a learned appearance space using CLIP encodings~\cite{radford2021learning}. We refer readers to the appendix of WorldLens~\cite{liang2025worldlens} for detailed metric definitions. Fig.~\ref{fig:app-vista-worldlens} shows the WorldLens generation-quality scores for driving world-model generations. Consistent with the WMB video-quality metrics, generation quality tends to degrade at later noise-optimization steps. However, using the proposed typical set constraints substantially mitigates this degradation, preserving subject consistency, depth stability, and temporal consistency compared to steering without regularization.

\para{Detailed Results: Failure Detection in Robotic Manipulation}
Fig.~\ref{fig:app-ctrlworld-failures} reports task-wise failure-detection results for robotic manipulation video world models. Steering is performed using Qwen3-VL~\cite{Qwen3-VL} scores computed from task-failure prompts, while the detection results are evaluated against trajectory-level success and failure labels from the dataset. Across tasks, \ours detects substantially more failure-prone trajectories than nominal generation or Best-of-\(N\) sampling.

\begin{figure}[ht]
    \centering
    \includegraphics[width=1.0\linewidth]{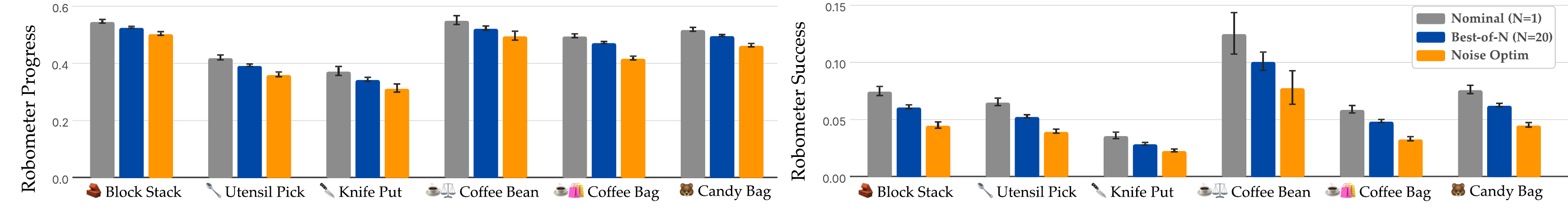}
    \vspace{-0.25in}
    \caption{\small{\textit{Robometer scores for imagined evaluation trajectories.}
    \textit{Left}: Robometer task-progress.
    \textit{Right}: Robometer task-success.
    \ours effectively generates more pessimistic imaginations than the baselines, yielding lower task-progress and task-success scores for the same action inputs.}}
    \label{fig:app-ctrlworld-quantitative}
    \vspace{-0.2in}
\end{figure}

\begin{figure}[ht]
    \centering
    \includegraphics[width=1.0\linewidth]{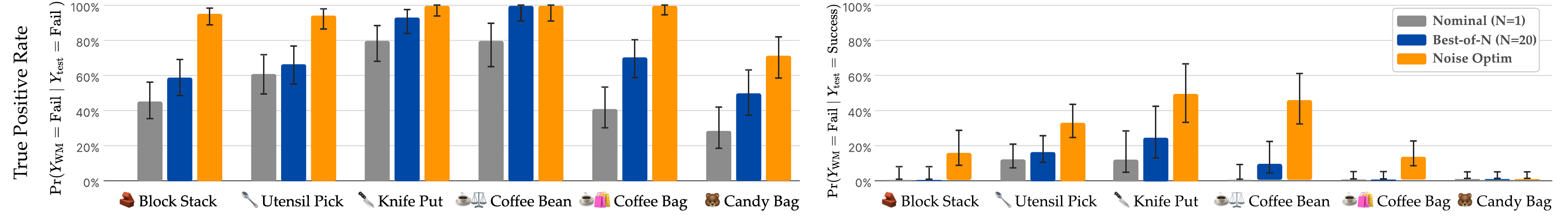}
    \vspace{-0.25in}
    \caption{\small{\textit{Failure detection in robotic manipulation video world models.}
    \textit{Left}: recall rate for failures, measuring the percentage of ground-truth failure trajectories for which the steered imagination predicts failure.
    \textit{Right}: percentage of imagined failure trajectories among ground-truth non-failure trajectories. This is not necessarily a false-positive rate, since some trajectories may be risky but happen to succeed during data collection.}}
    \label{fig:app-ctrlworld-failures}
    \vspace{-0.25in}
\end{figure}

Fig.~\ref{fig:app-ctrlworld-failures} also shows that \ours sometimes predicts failures for trajectories labeled as non-failure in the dataset. This should not be interpreted strictly as a false-positive rate: a trajectory that succeeded during data collection may still be risky under unobserved factors in a partially observable setting. Qualitatively, many of these flagged trajectories correspond to risky behaviors, suggesting that pessimistic imagination provides a more robust way of evaluating policies. At the same time, \ours does not flag all non-failure trajectories as failures, indicating that the method can still identify robust trajectories that remain safe even under pessimistic world-model imagination. These robust trajectories can then be prioritized for policy improvement, rather than relying on demonstrations that may have succeeded only by chance.

\subsection{\ours Can Effectively Discover Rare, Long-Tailed Events}\label{sec:appendix-longtail} \vspace{-0.05in}

Our main experiments evaluate whether \ours can generate videos that satisfy an inference-time specification while preserving plausibility more effectively than random sampling. This is particularly important for video world models, which operate in high-dimensional spaces where targeted generation is difficult: rare events often occupy complex, multimodal, and low-probability regions of the generative distribution, making random sampling highly sample-inefficient. In this section, we ask a complementary question: can \ours effectively \textit{discover} rare events in the target distribution by optimizing the initial noise with VLM guidance?

\begin{wrapfigure}{r}{0.3\textwidth}
    \centering
    \vspace{-0.2in}
    \includegraphics[width=\linewidth]{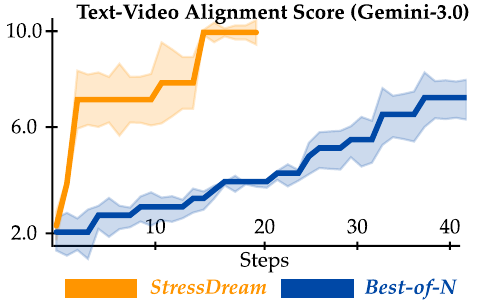} 
    \vspace{-0.22in}
    \caption{\small{\textit{Target-alignment score on long-tailed events in driving video world models.}}}
    \vspace{-0.2in}
    \label{fig:app-vista-longtail}
\end{wrapfigure}To evaluate this, we sample \(5\) evaluation examples from long-tailed driving events in the evaluation set, including pedestrian crossing, traffic-light change, and collision. As a baseline, we generate \(40\) random samples and report the best result among them, while \ours performs \(20\) optimization steps. We evaluate target alignment scores of generated videos using Gemini-3.0~\cite{team2023gemini}. Fig.~\ref{fig:app-vista-longtail} shows that \ours is more effective at discovering rare events than random sampling. Even with \(40\) samples, Best-of-\(N\) sampling does not match the target-alignment score achieved by \ours. This suggests that first-order optimization in the high-dimensional noise space can discover rare, long-tailed events more efficiently than zeroth-order search through random sampling.

\subsection{\ours Steers into Plausible Outcomes Only}\label{sec:appendix-plausibility}

We further elaborate on the experiments in Sec.~\ref{sec:experiments-inner} by examining whether steered generations remain grounded in plausible outcomes of the world model. In particular, we compare a driving world model fine-tuned with collision data against the base model without such fine-tuning. As shown in Fig.~\ref{fig:vista-ablation}, the base model yields low text--video alignment scores for collision events, even compared to random sampling from the collision-fine-tuned model, because collisions are not represented in its predictive distribution and therefore cannot be plausibly imagined.

\begin{figure}[ht]
    \centering
    \vspace{-0.15in}
    \resizebox{0.8\linewidth}{!}{
        \includegraphics[width=\linewidth]{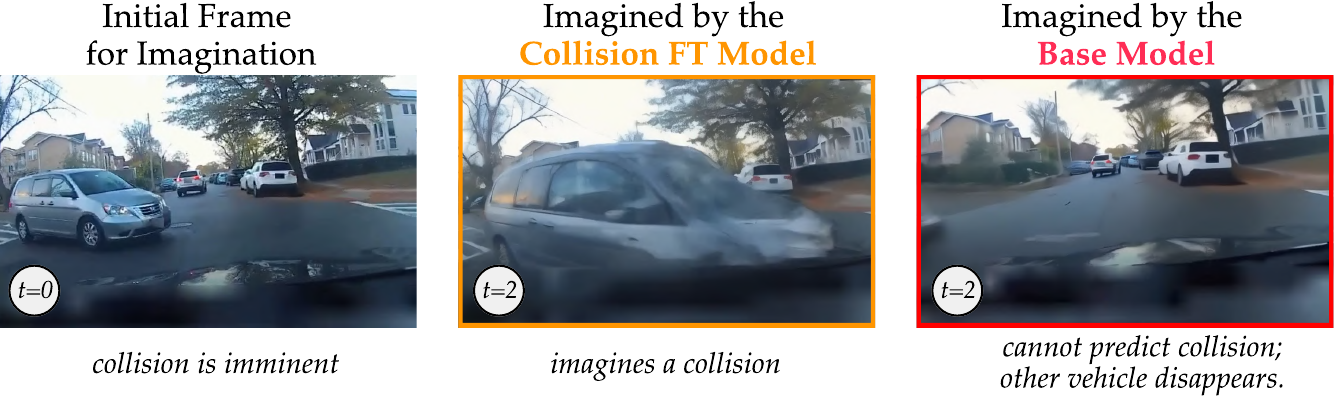} 
        }
    \vspace{-0.1in}
    \caption{\small{\textit{Imaginations near collision from the base model and the collision-fine-tuned world model.}}}
    \vspace{-0.25in}
    \label{fig:app-vista-plausibility-collision}
\end{figure}

Fig.~\ref{fig:app-vista-plausibility-collision} shows a qualitative example of imagination near a collision event using the base model and the collision-fine-tuned model. The base model fails to imagine the collision and instead hallucinates the other vehicle disappearing from the scene. In contrast, the fine-tuned model, whose predictive distribution includes collision events, correctly imagines the collision. This supports our claim that when the optimized noise remains close to the Gaussian prior, steering produces samples that are still supported by the world model distribution, rather than arbitrary hallucinations. In this sense, noise optimization differs from simply forcing inference-time alignment through a classifier or a fine-tuning objective, which can more easily push the generation off-manifold.

\begin{wrapfigure}{r}{0.5\textwidth}
    \centering
    \vspace{-0.25in}
    \includegraphics[width=\linewidth]{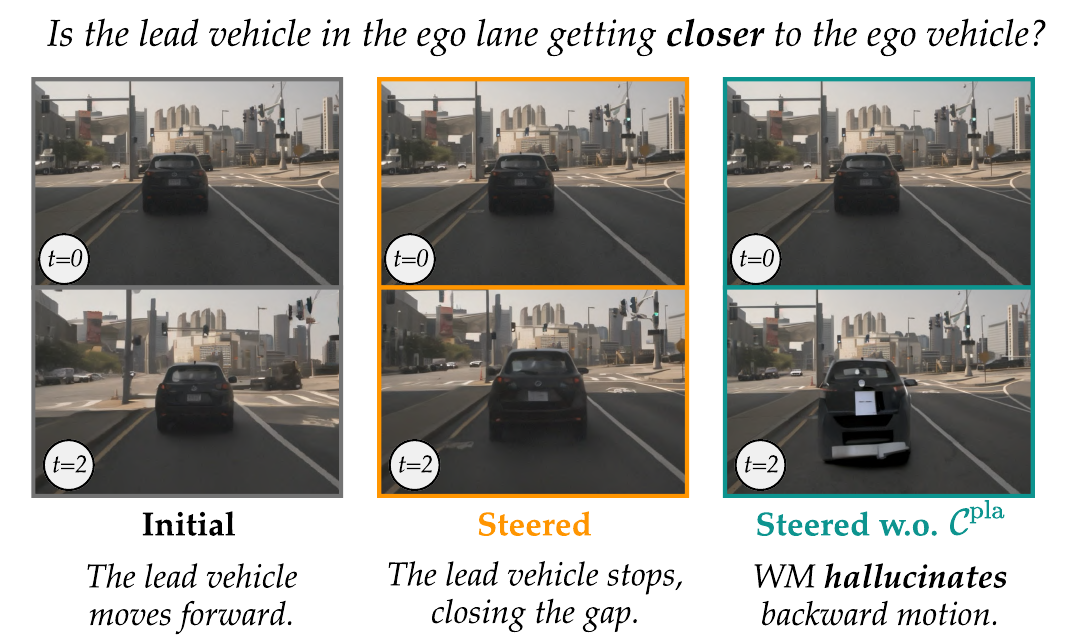} 
    \vspace{-0.22in}
    \caption{\small{\textit{Qualitative example: steering toward reduced distance to the lead vehicle. \ours produces a plausible outcome, whereas without a $\regularizer$, it leads to implausible hallucinations.}}}
    \vspace{-0.2in}
    \label{fig:app-vista-plausibility}
\end{wrapfigure}
Similarly, Fig.~\ref{fig:app-vista-plausibility} shows an example of steering a driving WM generation so that the distance between the ego vehicle and the lead vehicle decreases. In the initial generation, the lead vehicle moves forward, roughly maintaining its distance from the ego vehicle. In the steered generation, the lead vehicle instead remains stopped while the ego vehicle continues moving, causing the distance to decrease. Although the distance could be reduced even further if the lead vehicle moved backward, such behavior is highly implausible in a normal driving scenario. Accordingly, the steered generation chooses the plausible outcome of the lead vehicle stopping, rather than the implausible outcome of it reversing. By contrast, when steering is performed without the plausibility objective, the generation hallucinates unrealistic artifacts, such as the distorted backward motion of vehicle shapes.

\subsection{Robust Policy Improvement}\vspace{-0.05in}

\begin{figure}[ht]
    \centering
    \vspace{-0.1in}
     \resizebox{0.8\linewidth}{!}{
    \includegraphics[width=\linewidth]{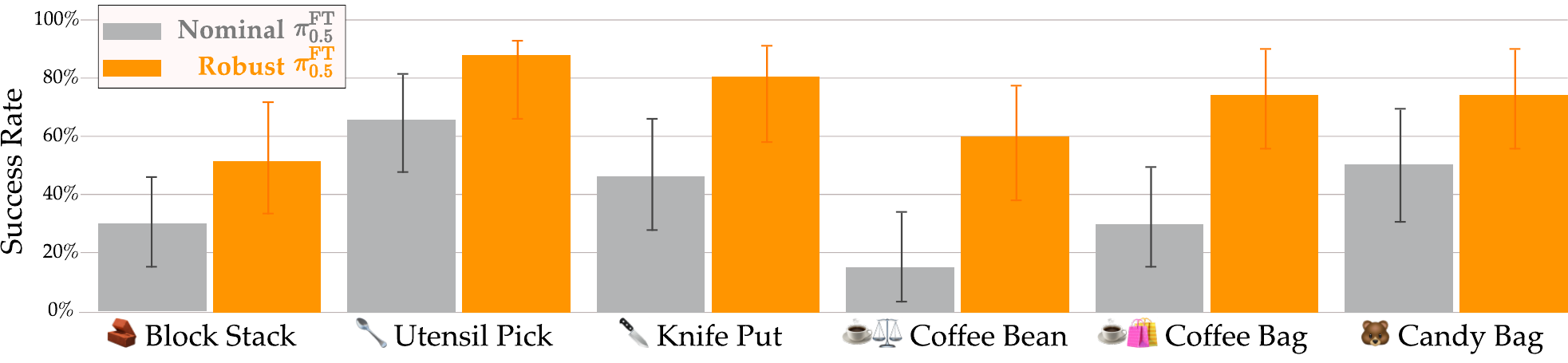} 
    }
    \vspace{-0.1in}
    \caption{\small{\textit{Task-wise success rates of fine-tuned \(\pi_{0.5}\)-\textnormal{\textsc{droid}} policies.}
    Robust fine-tuning with pessimistic video world-model imaginations improves policy success rates across tasks compared to nominal fine-tuning.}}
    \vspace{-0.15in}
    \label{fig:app-pi}
\end{figure}

Fig.~\ref{fig:app-pi} reports the task-wise success rates of the \(\pi_{0.5}\)-\textsc{droid} policy fine-tuned with video world-model imaginations (Sec.~\ref{sec:exp-pi}). Across all tasks, the \textit{Robust} \(\pi_{0.5}^{\mathrm{FT}}\) policy achieves higher success rates than the \textit{Nominal} \(\pi_{0.5}^{\mathrm{FT}}\) policy. This suggests that pessimistic imaginations help identify and downweight failure-prone demonstrations, thereby promoting actions that remain successful.

\section{Limitations \& Discussions}\label{sec:appendix-limitations}\vspace{-0.05in}

\para{Suboptimality of Noise Optimization}
While \ours provides a practical instantiation of steering video WM imaginations via gradient-based noise optimization, the optimized noise may still be suboptimal. The optimization in \ours should be viewed as a \textit{local refinement} of the initial noise rather than a global search over all possible futures. Exhaustively searching the high-dimensional noise space is infeasible: if \(\bm{\noise},\bm{\noise}'\overset{\mathrm{i.i.d.}}{\sim}\mathcal{N}(\mathbf{0},\mathbf{I}_\dimension)\), then \(\bm{\noise}-\bm{\noise}'\sim\mathcal{N}(\mathbf{0},2\mathbf{I}_\dimension)\), so \(\|\bm{\noise}-\bm{\noise}'\|_2^2/2\sim\chi_\dimension^2\), and therefore \(\|\bm{\noise}-\bm{\noise}'\|_2\) concentrates around \(\sqrt{2\dimension}\). In contrast, the average \(L_2\) distance between the initial and optimized noises in our experiments is far smaller, indicating that \ours performs local refinement rather than global exploration. The typical-set constraints make gradient updates reliable mainly within a moderate local region; overly large updates can push the noise OOD and make generations implausible. A natural extension is to combine Best-of-\(N\)-style search with gradient-based refinement~\cite{ma2025scaling,yuan2026inference}, or to learn a value function~\cite{holderrieth2026diamond} to guide denoising. Finally, for video WMs that require autoregressive rollouts, later generations condition on previously generated futures. The optimal solution would therefore require jointly optimizing a sequence of noises across rollout segments, but the current implementation solves it sequentially.

\para{Preserving Plausibility} \ours preserves plausibility by regularizing the optimized noise, encouraging it to remain in the typical set of the high-dimensional Gaussian. However, our notion of plausibility is defined with respect to the learned WM distribution: it assumes that generations from Gaussian noise are plausible under the WM. Thus, \ours can keep steering within the outcomes the WM supports, but it cannot correct a base WM that already produces physically implausible generations. One could incorporate video-quality or physical-consistency metrics as part of the steering objective~\cite{prabhudesai2024video, eyring2024reno, eyring2025noise}, to explicitly bias generations toward higher-quality imaginations. Conversely, \ours also cannot imagine outcomes that may be plausible in the real world but are absent from the WM distribution. This highlights the importance of training WMs on diverse interaction data, including failures and rare outcomes, rather than success-only data~\cite{khazatsky2024droid}, so that their predictive distributions can support policy evaluation and improvement.

\para{Computational Inefficiency} Although \ours is sample-efficient for generating plausible target outcomes compared to random sampling by leveraging dense VLM gradients, its runtime is still primarily bottlenecked by video WM generation, which can take several minutes per imagination with current models~\cite{gao2024vista,hassan2025gem,guo2025ctrl,agarwal2025cosmos}. Inference-time steering methods for text-to-image generation often reduce this cost with step-distilled or shortcut models that generate samples in one or a few denoising steps, making both noise optimization and gradient computation more tractable~\cite{eyring2024reno,eyring2025noise}. Our method is orthogonal to these advances: as video WMs become faster, \ours directly benefits from improved sampling efficiency, and shortcut models may enable more accurate gradient computations without relying on the current approximation. Another promising direction is to amortize the iterative optimization by learning a noise generator for fixed target outcomes, similar to~\citet{eyring2025noise,wagenmaker2025steering}, but enforcing the typical-set constraints in high-dimensional initial noise space remains challenging. Lastly, \ours incurs additional overhead from backpropagating through the criterion function, making each optimization step more expensive (Appendix~\ref{sec:appendix-time-complexity} for more details). However, random sampling still requires evaluating generated samples with expensive verifiers, so systematic steering within a finite sampling budget remains useful for discovering high-impact outcomes from WMs.

\para{Reward Hacking} Although our gradient-based optimization with the Vision-Language-Model's gradient is empirically effective, it can sometimes introduce reward hacking, where the criterion increases without inducing meaningful changes in the generated video~\cite{eyring2024reno,tang2024inference, hong2026understanding}. Prior work mitigates this issue by combining multiple reward functions or adding image-/video-quality metrics that directly penalize degenerate generations~\cite{eyring2024reno, prabhudesai2024video}. \ours can also incorporate these strategies. In practice, however, we observe that VLM-based criteria trained on internet-scale data are substantially more robust than reward functions learned from scratch: they tend to assign high scores only when the target event is semantically evident in the generated video, whereas task-specific learned rewards are more susceptible to adversarial artifacts. This limitation may be further mitigated by more efficient video WM architectures with robust latent representations~\cite{psenka2026parallel}, shortcut/distilled diffusion models with fewer denoising steps~\cite{hafner2025training}, and stronger reward models with better video understanding and robustness to adversarial artifacts~\cite{liang2026robometer}. Moreover, \ours requires specifying target high-impact outcomes with text prompts, which may need to be adapted for each scene, and the criterion's effectiveness depends on prompt quality (see Appendix~\ref{app:vlm-reward}).

\end{document}